\newtheorem{theorem}{Theorem}[]
\newtheorem*{theorem*}{Theorem}
\newtheorem{corollary}[theorem]{Corollary}
\newtheorem{lemma}[theorem]{Lemma}
\newtheorem{proposition}[theorem]{Proposition}
\newtheorem*{claim*}{Claim}
\theoremstyle{definition}
\newtheorem{definition}[theorem]{Definition}
\newtheorem*{definition*}{Definition}
\theoremstyle{remark}
\newtheorem{remark}[theorem]{Remark}
\newtheorem*{example*}{Example}
\def\log{{\rm log}}
\newcommand\myeq{\stackrel{\mathclap{\normalfont\mbox{$t := r^p$}}}{=}}
\newcommand*\diff{\mathop{}\!\mathrm{d}}
\newcommand*{\op}{%
  \DOTSB
  \mathop{\vphantom{\bigoplus}\mathpalette\matt@op\relax}%
  \slimits@
}
\newcommand\matt@op[2]{%
  \vcenter{\m@th\hbox{\resizebox{\widthof{$#1\bigoplus$}}{!}{$\boxplus$}}}%
}
\newcommand{\ot}{\mathrm{OT}}
\def\@biblabel#1{}
\@citea\NAT@hyper@{%
     \NAT@nmfmt{\NAT@nm}%
     \hyper@natlinkbreak{\NAT@aysep\NAT@spacechar}{\@citeb\@extra@b@citeb}%
     \NAT@date}}
\@citea\NAT@nmfmt{\NAT@nm}%
\NAT@spacechar\NAT@hyper@{\NAT@date}}{}{}
\@citea\NAT@hyper@{%
     \NAT@nmfmt{\NAT@nm}%
     \hyper@natlinkbreak{\NAT@spacechar\NAT@@open\if*#1*\else#1\NAT@spacechar\fi}%
       {\@citeb\@extra@b@citeb}%
     \NAT@date}}
\@citea\NAT@nmfmt{\NAT@nm}%
\fi\NAT@hyper@{\NAT@date}}
\begin{document}
\def\spacingset#1{\renewcommand{\baselinestretch}%
{#1}\small\normalsize} \spacingset{1}
\begin{flushleft}
{\Large{\textbf{Approximating Persistent Homology for Large Datasets}}}
\newline
\\
Yueqi Cao$^{1,\dagger}$ and Anthea Monod$^{2,\dagger}$
\\
\bigskip
\bf{1} Department of Mathematics, KTH Royal Institute of Technology, Sweden\\
\bf{2} Department of Mathematics, Imperial College London, UK
\\
\bigskip
$\dagger$ Corresponding e-mail: yueqic@kth.se, a.monod@imperial.ac.uk
\end{flushleft}


\section*{Abstract}
Persistent homology is an important methodology in topological data analysis which adapts theory from algebraic topology to data settings. Computing persistent homology produces persistence diagrams, which have been successfully used in diverse domains.  Despite its widespread use, persistent homology is simply impossible to compute when a dataset is very large.  We study a statistical approach to the problem of computing persistent homology for massive datasets using a multiple subsampling framework and extend it to three summaries of persistent homology: H\"{o}lder continuous vectorizations of persistence diagrams; the alternative representation as persistence measures; and standard persistence diagrams.  
Specifically, we derive finite sample convergence rates for empirical means for persistent homology and practical guidance on interpreting and tuning parameters.  We validate our approach through extensive experiments on both synthetic and real-world data. We demonstrate the performance of multiple subsampling in a permutation test to analyze the topological structure of Poincar\'{e} embeddings of large lexical databases.

\paragraph{Keywords:} Fréchet means; persistence measures; multiple subsampling; Wasserstein stability.



\section{Introduction}
\label{sec:intro}

Topological data analysis (TDA) is a recently emerged field that harnesses theory from algebraic topology to address modern data challenges, such as high dimensionality and structural complexity in a wide variety of contexts.  A particularly successful TDA tool---\emph{persistent homology}---adapts the theory of homology to data to produce interpretable summaries of the dataset capturing its ``shape'' and ``size.''  Computing persistent homology results in a {\em persistence diagram}, which summarizes the topological information within the data.  Persistent homology has been extensively implemented in various applications, including biomedical imaging \citep{SECT}; information retrieval and machine learning \citep{vlontzos2021topological}; materials science \citep{hirata2020structural}; neuroscience \citep{anderson2018topological}; sensor networks \citep{adams2015evasion}; and many others.  

A significant challenge of applying persistent homology to data, however, is its computational expense, which is known to be intensive; see \cite{otter2017roadmap} for a detailed discussion on computing persistent homology.  Existing approaches and recent advances have greatly increased the burden with geometric solutions being a popular workaround \citep{sheehy2012linear, graf2025flood}.  This paper focuses on the power of \emph{statistical} approaches for approximating the persistent homology of a dataset when exact computation is intractable.  

The technique of multiple subsampling for persistent homology was first studied by \citet{chazal-2015-subsampling} and focused on \emph{persistence landscapes}, which are the earliest vector representations of persistence diagrams \citep{bubenik2015statistical}.  The main idea is to randomly sample multiple smaller subsets from a large dataset and compute the average persistence landscape from these subsets as an approximation of the persistence landscape of the entire dataset.  However, persistence landscapes are only one of many vector representations of persistence diagrams, and vectorizations in general may not be the most desirable representation for all applications. For example, one limitation is the lack of interpretability of vectorized persistence diagrams: although vectorization methods provide the benefit of representing the topological information of datasets in a usable vector form for classical statistical theory as well as current machine learning algorithms, it is often difficult to extract the intuition that persistence diagrams themselves carry from their vector representation.  Similarly, it is difficult to ascertain the global topological behavior of a larger dataset from a collection of smaller persistence diagrams when working with complex data.

\subsection{Related Work}

\citet{chazal-2015-subsampling} have studied multiple subsampling for persistence landscapes and concluded that the empirical average landscape computed from subsamples approximates the true mean landscape.  More recently, \cite{solomon2021geometry} propose the notion of {\em distributed persistence} to describe the topology of a dataset, which relies on subsampling.  Distributed persistence produces a collection of persistence diagrams of smaller subsets, rather than a single one computed on the large dataset; this collection has been shown to be stable to outliers and possess desirable inverse properties.  

Though vectorizations of persistence diagrams can be useful in real-world applications of TDA, especially in machine learning and deep neural networks, it is often desirable to study persistence diagrams rather than their vectorizations for reasons of interpretability and information retention. 
Persistence diagrams encode the geometry of the underlying spaces: For shape data, \cite{song2023generalized} showed that the points in persistence diagrams correspond to different types of critical points of the distance functions. Such information might be lost via vectorizations: as shown by \cite{bubenik-2020-embeddings}, there does not exist isometric embeddings from the space of persistence diagrams to any Hilbert space.

Other related work by \cite{reani2021cycle} adapts a subsampling approach for topological inference, where the goal is to distinguish topological signal from noise in point cloud data.  \cite{10.2307/26542470} also study the asymptotic behavior of persistent homology and prove a strong law of large numbers for persistence diagrams, however, not in the context of subsampling.
   
\subsection{Contributions}
In this paper, we extend the multiple subsampling framework to three different summaries of persistent homology: a general class of H\"{o}lder continuous vectorizations (HCVs) of persistence diagrams; persistence measures; and persistence diagrams themselves.

\begin{itemize}
    \item We introduce the class of HCVs of persistence diagrams, which consists of many existing and popular vectorizations of persistence diagrams.  We prove convergence rates for
    HCVs which generalizes results in \citet{chazal-2015-subsampling}. In particular, we use a new method to control the variance term using the Law of the Iterated Logarithm (LIL) for Banach space-valued random variables.
    \item We generalize the multiple subsampling method to persistence measures and persistence diagrams, for which we
    take the mean persistence measure and the Fréchet mean as the corresponding quantity of centrality. We prove convergence rates for the mean persistence measure for finite and continuous underlying spaces. 
    \item We provide practical guidance on interpreting and tuning parameters and validate our approach through extensive experiments on both synthetic and real-world data.  In applications, we use multiple subsampling and permutation testing to analyze the topological structure of Poincar\'{e} embeddings of large lexical databases. We also apply our method to shape topology inference and shape clustering.
\end{itemize}

\subsection{Outline} The remainder of this paper is structured as follows.  Section \ref{sec:ph_rep} provides an overview of persistent homology and its representations by persistence diagrams, persistence measures, and vectorized persistence diagrams.  Section \ref{sec:subsampling} focuses on theoretical derivations of multiple subsampling method for different representations of persistent homology.  In Section \ref{sec:prac-para-turning}, we present practical guidance on interpreting and tuning parameters. Section \ref{sec:experiments} presents simulations to verify theoretical results and Section \ref{sec:real-data} presents applications to Poincar\'e embeddings of lexical databases.  Finally, we close the paper with a discussion in Section \ref{sec:disussion}.


\section{Persistent Homology and its Representations}
\label{sec:ph_rep}

In this section, we provide background on different representations on the output of persistent homology.

\subsection{Persistent Homology}

Persistent homology adapts classical homology from algebraic topology to a dynamical setting, making homology applicable to real-world data \citep{frosini1999size,edelsbrunner2000topological,zomorodian2005computing}.   A detailed discussion on persistent homology can be found in the supplementary materials. 

\subsection{Persistence Diagrams}
\label{sec:pd}

A canonical representation of the output of persistent homology is the \emph{persistence diagram}. A persistence diagram $D$ is a locally finite multiset of points in the half-plane $\Omega = \{(x,y)\in\mathbb{R}^2 \mid x<y\}$ together with points on the diagonal $\partial\Omega=\{(x,x)\in\mathbb{R}^2\}$ counted with infinite multiplicity. Points in $\Omega$ are called {\em off-diagonal points}. The persistence diagram with no off-diagonal points is called the {\em empty persistence diagram}, denoted by $D_\emptyset$. In this paper, we will use $D[\mathcal{X}]$ to denote the persistence diagram of the \v{C}ech or Vietoris--Rips (VR) filtration of $\mathcal{X}$, which are two of the most commonly used filtrations, especially in applications; we will use $D[f]$ to denote the persistence diagram of the (sub)level set filtration of $f$.

The collection of all persistence diagrams constitutes a well-defined metric space with properties amenable to statistical and probabilistic analysis.
Let $\|\cdot\|_q$ denote the $q$-norm on $\mathbb{R}^2$ for $1\le q \le\infty$. Let $D_1,D_2$ be any two persistence diagrams. For $1\le p<\infty$, the {\em $p$-Wasserstein distance} is 
\begin{equation}\label{eq:p-wasserstein-distance}
    \mathrm{W}_{p,q}(D_1,D_2) = \inf_{\gamma}\left(\sum_{x\in D_1}\|x-\gamma(x)\|_q^p\right)^{\frac{1}{p}}
\end{equation}
where $\gamma$ ranges over all bijections between $D_1$ and $D_2$. For $p=\infty$, \eqref{eq:p-wasserstein-distance} becomes the {\em bottleneck distance},
\begin{equation}
    \mathrm{W}_{\infty,q}(D_1,D_2) = \inf_{\gamma}\sup_{x\in D_1}\|x-\gamma(x)\|_q.
\end{equation}
The {\em $p$-total persistence} of $D$ is defined as $\mathrm{W}_{p,q}(D,D_\emptyset)$; the space of all persistence diagrams with finite $p$-total persistence is denoted by $\mathcal{D}_p$.

The metric space $(\mathcal{D}_p,\mathrm{W}_{p,q})$ is central in TDA. For $q=\infty$, $(\mathcal{D}_p,\mathrm{W}_{p,\infty})$ is a complete and separable metric space, i.e., a Polish space for any $1\le p<\infty$, which means that statistical and probabilistic quantities such as probability measures, expectations, and variances are well-defined on $(\mathcal{D}_p,\mathrm{W}_{p,q})$ \citep{mileyko-2011-probability}. Note that since all $q$-norms are equivalent on $\mathbb{R}^2$, $(\mathcal{D}_p,\mathrm{W}_{p,q})$ is Polish regardless of the choice of $1\le q\le \infty$. However, for $p=\infty$, $(\mathcal{D}_\infty,\mathrm{W}_{\infty,q})$ is not Polish as it is not separable \citep{divol-2019-understanding}. Furthermore, the following geometric characterizations under Wasserstein distances are known for the space of persistence diagrams: For $\|\cdot\|_2$ the $2$-norm on $\mathbb{R}^2$, the space $(\mathcal{D}_2,\mathrm{W}_{2,2})$ is a non-negatively curved Alexandrov space \citep{turner-2014-frechet}. Whenever $p\neq 2$, the space $(\mathcal{D}_p,\mathrm{W}_{p,p})$ fails to be non-negatively curved \citep{turner2013means}. In this paper, unless otherwise stated, we always set $p=q$ and use the notation $\mathrm{W}_p$ (rather than $\mathrm{W}_{p,p}$) for simplicity.

\subsection{Persistence Measures}
\label{sec:pm}

An alternative, equivalent representation for the output of persistent homology is to define persistence diagrams as measures on $\Omega$ of the form $\sum_{x\in D\cap\Omega}n_x\delta_x$ where $x$ ranges all off-diagonal points in a persistence diagram $D$, $n_x$ is the multiplicity of $x$, and $\delta_x$ is the Dirac measure at $x$ \citep{divol2019density}. Motivated by this measure-based perspective, \cite{divol-2019-understanding} considered all Radon measures with finite $p$-total persistence supported on $\Omega$. The space of all persistence measures is denoted by $\mathcal{M}_p$.  As with the case of persistence diagrams, the collection of all persistence measures is also a well-defined metric space under the {\em optimal partial transport distance}.
\begin{equation}
\label{eq:OT_dist}
    \mathrm{OT}_{p,q}(\mu,\nu) = \inf_{\pi}\Bigg(\int_{\overline{\Omega}\times\overline{\Omega}}\|x-y\|_q^p\diff{\Pi(x,y)}\Bigg)^{1/p},
\end{equation}
where $\overline{\Omega}=\Omega\cup\partial\Omega$, and $\Pi$ ranges all Radon measures on $\overline{\Omega}\times\overline{\Omega}$ such that for all Borel sets $P,Q\subseteq\Omega$,
    $\Pi(P\times\overline{\Omega})=\mu(P)$ and $\Pi(\overline{\Omega}\times Q) = \nu(Q).$
When $p=q$, we write $\ot_p$ for simplicity of notation. 


$(\mathcal{M}_p,\mathrm{OT}_{p,q})$ is also a viable space for statistics and probability since it is also a Polish space for all $1\le p<\infty$ and $1\le q\le \infty$ \citep{divol-2019-understanding}. Furthermore, $\mathcal{D}_p$ is a closed subspace of $\mathcal{M}_p$ and $\mathrm{OT}_{p,q}$ coincides with $\mathrm{W}_{p,q}$ on $\mathcal{D}_p$. These results show that persistence measures together with the optimal partial transport distance are proper generalizations of persistence diagrams and $p$-Wasserstein distance.

\subsection{Vectorized Persistence Diagrams}

Due to the nonlinear nature of persistence diagrams, it is difficult to directly apply classical statistical methodology. This challenge motivated an active area of research on vectorizations of persistence diagrams, which are essentially embeddings into Euclidean or Hilbert spaces, which can then be directly integrated into statistical and machine learning pipelines.

Popular examples of vectorizations for persistence diagrams include persistence landscapes \citep{bubenik2015statistical}, which are linear, functional vectorizations of persistence diagrams into Hilbert space, and persistence images \citep{adams-2017-persistence}.  There also exist machine learning-motivated kernel-based embeddings such as the persistence scale-space kernel \citep{reininghaus2015stable}, the persistence weighted Gaussian kernel \citep{kusano2016persistence}, and the sliced Wasserstein kernel for persistence diagrams \citep{carriere-2017-sliced}. Each of these constructions maps diagrams into linear spaces, typically 
$\mathbb{R}^n$ or function spaces, where natural metrics such as Euclidean or 
$L^p$ norms can be used to quantify distances between vectorized diagrams. 


We introduce a class of \emph{H\"{o}lder continuous vectorizations (HCVs)} of persistence diagrams, which incorporate many existing vectorizations. 

\begin{definition*}
\label{def:holder_cont}
    Let $\mathcal{V}=\mathcal{C}(\mathcal{Y})$ be the space of continuous functions over a compact metric space $(\mathcal{Y},d_{\mathcal{Y}})$. A \emph{H\"{o}lder continuous vectorization (HCV)} of a persistence diagram is a map $\Phi: (\mathcal{D},\mathrm{W}_\infty)\to(\mathcal{V},\|\cdot\|_\infty)$ such that
    \begin{itemize}
        \item[(A1)] For each $D\in\mathcal{D}$, the function $\Phi(D)\in\mathcal{V}$ is H\"{o}lder continuous, i.e., for $y,y'\in \mathcal{Y}$,
        $$
        |\Phi(D)(y)-\Phi(D)(y')|\le C_{\mathcal{Y}} \cdot d^{\alpha}_{\mathcal{Y}}(y,y')
        $$
        where $C_{\mathcal{Y}}>0$ and $0<\alpha\le 1$ are constants only depending on $\mathcal{Y}$;
        \item[(A2)] The map $\Phi$ itself is H\"{o}lder continuous, i.e., for $D,D'\in\mathcal{D}$, 
    $$
    \|\Phi(D)-\Phi(D')\|_{\infty}\le C_{\Phi} \cdot \mathrm{W}_\infty^\beta(D,D') ,
    $$
    where $C_{\Phi}>0$ and $0<\beta\le 1$ are constants only depending on $\Phi$.
    \end{itemize} 
\end{definition*} 

In summary, we require H\"older continuity for both the function $\Phi(D)$ on $\mathcal{Y}$ and the vectorization map $\Phi$ on $\mathcal{D}$. As will be shown further on  in Section \ref{sec:multiple-subsampling-holder}, the H\"older exponent $\alpha$ in (A1) controls the entropy of $\mathcal{V}$, which essentially allows us to use the Law of the Iterated Logarithms (LIL) in Banach spaces, and the H\"older exponent $\beta$ in (A2) controls the bias rate of multiple subsampling. When $\alpha=\beta=1$, such a class corresponds to \emph{Lipschitz continuous vectorizations (LCVs)} of persistance diagrams. A typical example is the persistence landscape.  We verify the conditions of HCVs for a comprehensive list of vectorizations in supplementary materials.



\section{Multiple Subsampling for Persistent Homology}
\label{sec:subsampling}

Let $\mathcal{X}$ be a compact metric space with a probability distribution $\pi$. To estimate the persistent homology of $\mathcal{X}$, \citet{chazal-2015-subsampling} initially studied the following multiple subsampling method for persistent homology: First, sample $B$ subsets $S_n^{(1)},\ldots,S_n^{(B)}$, each consisting of $n$ i.i.d.~samples from $\pi$. The persistent homology of $\mathcal{X}$ may then be approximated using the ``average'' persistent homology of $S_n^{(j)},j=1,\ldots,B$. Due to the intrinsic nonlinearity of persistence diagrams, it is technically much easier to compute averages of linear representations of persistence diagrams rather than persistence diagrams themselves. This is the approach taken by \citet{chazal-2015-subsampling}, where they studied the convergence of the average of persistence landscapes. To prove consistency, they impose the following technical assumption on $\pi$, which we also adopt in our work: The measure $\pi$ is said the satisfy the \emph{$(a,b,r_0)$-standard assumption} if there exists $a,b>0$ and $r_0\ge 0$ such that for any $x\in\mathcal{X}$ and $r>r_0$,
\begin{equation}
    \pi(\mathcal{B}(x,r))\ge \min(1,ar^b),
\end{equation}
where $\mathcal{B}(x,r)$ is the metric ball centered at $x$ with radius $r$. For $r_0=0$, the expression reduces to the \emph{$(a,b)$-standard assumption}. The parameter $r_0$ characterizes the ``denseness'' of the underlying space $\mathcal{X}$. When $r_0>0$ is small, $\mathcal{X}$ is often interpreted as a large, dense, discrete dataset that each point has a neighbor within distance $r_0$; when $r_0=0$, $\mathcal{X}$ is no longer discrete dataset and often interpreted as a compact, continuous space. We explain the insights of this assumption and overview the main convergence results of \citet{chazal-2015-subsampling} in supplementary materials. 





\subsection{Multiple Subsampling for HCVs}\label{sec:multiple-subsampling-holder}

For a sample $S_n^{(1)},\ldots,S_n^{(B)}$, let $D[S_n^{(i)}]$ be the persistence diagram from the VR or \v{C}ech filtration of $S_n^{(i)}$ and let $\phi^{(i)} = \Phi(D[S_n^{(i)}])$ be its HCV. Let $\bm{\phi}$ be the $\mathcal{V}$-valued random variable representing the population of $\{\phi^{(i)}\},\, i=1, \ldots, B$. The empirical mean is given by $\bar{\phi}= \frac{1}{B}(\phi^{(1)}+\cdots+\phi^{(B)})$. Let $\phi_\mathcal{X} = \Phi(D[\mathcal{X}])$ be the HCV of the true persistence diagram of $\mathcal{X}$. By the triangle inequality, we have the following variance--bias decomposition
\begin{equation}
\label{eq:triangle_holder}
 \mathbb{E}[\|\bar{\phi}-\phi_\mathcal{X}\|_\infty]\le \underbrace{\mathbb{E}[\|\bar{\phi}-\mathbb{E}[\bm{\phi}]\|_\infty]}_{\text{variance}}+\underbrace{\|\mathbb{E}[\bm{\phi}]-\phi_\mathcal{X}\|_\infty}_{\text{bias}}.
\end{equation}



\paragraph{Variance.}  
We outline our strategy to derive our theorem on the convergence rate for the variance component.  The derivation for a rate estimation of the variance term of \eqref{eq:triangle_holder} relies on the  CLT and the LIL for Banach spaces.

The difficulty in computing a rate estimation for the variance lies in establishing moment and entropy conditions on random variables in Banach spaces so that the CLT holds.  It turns out that certain moment and entropy conditions hold on a subspace of Lipschitz functions over metric spaces $\mathcal{Y}$ with finite entropy, defined to be the log of the covering number (i.e., the fewest balls of radius $r$ needed to cover $\mathcal{Y}$), $\mathrm{Ent}(\mathcal{Y},r) := \log(\mathrm{cv}(\mathcal{Y}, r))$.  Our first requirement towards a variance rate estimation is that the assumption (A1) implies finiteness of the moments of HCVs so that the CLT and LIL hold.

\begin{proposition}\label{prop:moment-control}
    Suppose $\Phi$ is HCV. Assume for some $y_0\in\mathcal{Y}$, $\mathbb{E}[\bm{\phi}(y_0)^2]<\infty$, then $\mathbb{E}[\|\bm{\phi}\|^2_\infty]<\infty$.
\end{proposition}

Given this fact, we can then derive the variance rate estimation in the following result. Let $\mathrm{Log}(x) := \max(1, \log x)$ and $\mathrm{LLog}(x) := \mathrm{Log}(\mathrm{Log}(x))$.

\begin{lemma}\label{lemma:var-rate-vectorization}
    Suppose $\Phi$ is HCV with exponents $\alpha$ and $\beta$, and for some $y_0\in\mathcal{Y}$, $\mathbb{E}[\bm{\phi}(y_0)^2]<\infty$. Suppose the metric entropy function of $\mathcal{Y}$ is such that
    $\displaystyle \int_0^{\infty}\mathrm{Ent}^{\frac{1}{2}}(\mathcal{Y},r^{\frac{1}{\alpha}})<\infty$.
    Then with probability one, there exists an integer $B_0>0$ and a constant $C_0>0$ such that for all $B>B_0$,
    $$
    \mathbb{E}[\|\bar{\phi}-\mathbb{E}[\bm{\phi}]\|_\infty]\le C_0\frac{\sqrt{2\mathrm{LLog}(B)}}{\sqrt{B}}.
    $$
\end{lemma}

Note that the assumption of finiteness of the metric entropy in Lemma \ref{lemma:var-rate-vectorization} is not restrictive: in most cases, the point clouds or the underlying space are compact, thus the induced distributions of HCVs have compact supports, which always have finite metric entropy. 

\paragraph{Bias.}  
We now turn to studying the bias term of \eqref{eq:triangle_holder} and derive its rate estimation.  Essentially, it involves upper bounding the bias term in \eqref{eq:triangle_holder} by an estimate of the $\beta$-th moment of the Hausdorff distance $\mathrm{H}_\infty^\beta(\bm{S}_n,\mathcal{X})$ and studying the convergence of this estimate.  The expression for the estimate is given in the following result.

\begin{lemma}\label{thm:moment-hausdorff-estimation}
    Let $\bm{S}_n$ be a random set of size $n$ from a measure $\pi$ on $\mathcal{X}$ that satisfies the $(a,b,r_0)$-standard assumption. Let $\displaystyle r_n:=2\bigg(\frac{\log n}{an}\bigg)^{1/b}$. Then
    $$
    \mathbb{E}[\mathrm{H}_\infty^\alpha(\bm{S}_n,\mathcal{X})]\le r_0^\beta + r_n^\beta \bm{1}_{\{r_n>r_0\}} + C_1\frac{r_n^\beta}{(\log n)^2}, 
    $$
    where $C_1$ is a constant depending on $a$ and $b$.
\end{lemma}

 The bias rate for HCVs is summarized by the following result.

 \begin{lemma}\label{lemma:vec-bias-rate}
     Suppose $\mathcal{X}$ is a metric space and $\pi$ is a probability measure satisfying the $(a,b,r_0)$-standard assumption. Let $\displaystyle r_n=2\bigg(\frac{\log n}{an}\bigg)^{1/b}$. Then
     $$
     \|\mathbb{E}[\bm{\phi}]-\phi_\mathcal{X}\|_\infty\le C_{\Phi}\left(r_0^\beta + r_n^\beta \bm{1}_{\{r_n>r_0\}} + C_1\frac{r_n^\beta}{(\log n)^2}\right) .
     $$
 \end{lemma}

\paragraph{Convergence Rate.}
Combining Lemma \ref{lemma:var-rate-vectorization} and Lemma \ref{lemma:vec-bias-rate}, we summarize the nonasymptotic convergence rate as follows.

\begin{theorem}\label{thm:vectorization-rate}
    Suppose $\mathcal{X}$ is a metric space and $\pi$ is a probability measure satisfying the $(a,b,r_0)$-standard assumption. Suppose $\Phi$ is HCV with exponents $\alpha$ and $\beta$, and for some $y_0\in\mathcal{Y}$, $\mathbb{E}[\bm{\phi}(y_0)^2]<\infty$. Suppose the metric entropy function of $\mathcal{Y}$ is such that
    $\int_0^{\infty}\mathrm{Ent}^{\frac{1}{2}}(\mathcal{Y},r^{\frac{1}{\alpha}})<\infty$.
    Let $\displaystyle r_n:=2\bigg(\frac{\log n}{an}\bigg)^{1/b}$. Then with probability one, there exists an integer $B_0>0$ and constants $C_0,C_1>0$ such that for all $B>B_0$,
    $$
    \mathbb{E}[\|\bar{\phi}-\phi_{\mathcal{X}}\|_\infty]\le  C_0\frac{\sqrt{2\mathrm{LLog}(B)}}{\sqrt{B}}+ C_{\Phi}\left(r_0^\beta + r_n^\beta \bm{1}_{\{r_n>r_0\}} + C_1\frac{r_n^\beta}{(\log n)^2}\right) .
    $$
\end{theorem}

Compared to the results in \citet{chazal-2015-subsampling}, our bias estimation has an additional parameter $0<\beta\le 1$, characterizing the H\"older regularity of the vectorization map $\Phi$. For persistence landscapes where $\beta=1$, our bias estimation is consistent with \citet{chazal-2015-subsampling} while our variance estimation is new, and is stated in the sense of strong convergence with respect to the supremum norm. 




\subsection{Multiple Subsampling for Persistence Measures}\label{sec:multiple-subsampling-pm}

In the same statistical setting, let $S_n^{(1)},\ldots,S_n^{(B)}$ be realizations of $\bm{S}_n$ and let $D[S_n^{(i)}]$ be the persistence diagram for each $S_n^{(i)}$. In the space of persistence measures, we consider the mean persistence measure
    \begin{equation}
    \label{eq:mpm}
    \bar{D} = \frac{1}{B}\sum_{i=1}^B D[S_n^{(i)}].
    \end{equation}
Recall from Section \ref{sec:pm} that the natural metric on the space of persistence measures is the optimal partial transport distance $\ot_p$. Let $D[\mathcal{X}]$ denote the true persistence diagram of $\mathcal{X}$,
and $D_\mu$ denote the population mean persistence measure. 
Then the approximation error $\mathbb{E}[\ot^p_p(\bar D, D[\mathcal{X}])]$ has the following variance--bias  decomposition:
\begin{align}
\mathbb{E}[\ot_p^p(\bar D, D[\mathcal{X}])] & \leq \mathbb{E}\big[ \big( \ot_p(\bar D, D_\mu) + \ot_p(D_\mu, D[\mathcal{X}]) \big)^p \big] \nonumber\\
& \le 2^{p-1} \big(\underbrace{\mathbb{E}\big[\ot_p^p(\bar{D}, D_\mu)\big]}_{\text{variance}}+\underbrace{\ot_p^p( D_\mu, D[\mathcal{X}])}_{\text{bias}}\big). \label{eq:bias_var}
\end{align}

The critical reason to use $\ot_p^p$ instead of $\ot_p$ is that the variance estimation only holds for $\ot_p^p$. Therefore we are restricted to the regime of finite Wasserstein index $p$. 



\paragraph{Variance.}
\cite{divol-2021-estimation} established the following result on the convergence rate for the variance of the mean persistence measure.

\begin{theorem}{\cite[Theorem 1]{divol-2021-estimation}}\label{thm:variance-original}
Let $1 \leq p < \infty$ and $0 \leq k < p$.  Let $\pi$ be a probability distribution supported on $\mathcal{M}_{k,L}$ and $\nu_1, \ldots, \nu_B$ be i.i.d.~samples drawn from $\pi$.  Let $\bar \nu = \frac{1}{B}\sum_{i=1}^B \nu_i$ be the mean persistence measure and $\mathbb{E}_\pi[\nu]$ be the expected persistence diagram.  Then
$$
\mathbb{E}\big[\ot_p^p (\bar\nu,\, \mathbb{E}_\pi[\nu] ) \big] \leq C_{p,k,L}\bigg( \frac{1}{\sqrt{B}} + \frac{a_p(B)}{B^{p-k}} \bigg) ,
$$
where $C_{p,k,L}$ depends only on $p,k,L$ and $a_p(B) = 1$ if $p>1$ and $a_p(B) = \log(B)$ if $p=1$.
\end{theorem}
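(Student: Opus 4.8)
\emph{Proof strategy.} Write $m := \mathbb{E}_\pi[\nu]$ for the linear expectation (the Bochner integral of $\nu$), so the claim is a bound on $\mathbb{E}[\ot_p^p(\bar\nu, m)]$. Since the support constraint and the functional $\nu \mapsto \int_\Omega d(x,\partial\Omega)^k\diff\nu(x)$ are respectively convex and linear, both $\bar\nu$ and $m$ again belong to $\mathcal{M}_{k,L}$; in particular every measure in play is supported in $\mathcal{B}(0,L)$ and has $k$-total persistence at most $L$. The engine of the argument is the elementary fact that the partial transport cost is \emph{subadditive under a measurable partition of $\Omega$}: if $\Omega = \bigsqcup_\ell \Omega_\ell$, then concatenating optimal partial plans on the pieces (each piece may independently route its excess or deficit mass to the diagonal) yields $\ot_p^p(\mu,\nu) \le \sum_\ell \ot_p^p(\mu|_{\Omega_\ell}, \nu|_{\Omega_\ell})$. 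I will combine this with the trivial estimate $\ot_p^p(\mu, D_\emptyset) = \mathrm{Pers}_p(\mu)$ and the triangle inequality, which give $\ot_p^p(\mu,\nu) \le 2^{p-1}\bigl(\mathrm{Pers}_p(\mu) + \mathrm{Pers}_p(\nu)\bigr)$ — useful precisely when both measures concentrate near the diagonal.

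The plan is to split $\Omega$ at a threshold $\delta>0$, to be optimized, into the near-diagonal band $\{d(x,\partial\Omega)\le\delta\}$ and the \emph{bulk} $\{d(x,\partial\Omega)>\delta\}$, the latter being further decomposed dyadically into shells $\Omega_\ell = \{2^{-\ell-1}<d(x,\partial\Omega)\le 2^{-\ell}\}$ (finitely many, since the support bound forces $d(x,\partial\Omega)\lesssim L$). On the near-diagonal band, routing everything to the diagonal gives, for any $\mu\in\mathcal{M}_{k,L}$,
\[
\mathrm{Pers}_p\bigl(\mu|_{\{d\le\delta\}}\bigr) = \int_{\{d\le\delta\}} d(x,\partial\Omega)^p\diff\mu(x) \le \delta^{p-k}\int_\Omega d(x,\partial\Omega)^k\diff\mu(x) \le L\,\delta^{p-k},
\]
where $p-k>0$ is used. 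Applied to $\bar\nu$ and $m$, this bounds the near-diagonal contribution to $\ot_p^p(\bar\nu,m)$ by $C_p\,L\,\delta^{p-k}$, \emph{deterministically} — this is the source of the $B^{-(p-k)}$ term once $\delta$ is tied to $B$.

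The bulk is where the randomness is exploited and the real work lies. On $\{d>\delta\}\cap\mathcal{B}(0,L)$, both $\bar\nu$ and $m$ are supported on a set of diameter $\lesssim L$ and, by Markov applied to the $k$-total persistence bound, carry total mass at most $L\delta^{-k}$ (indeed $\nu(\Omega_\ell)\lesssim L\,2^{k\ell}$ shell by shell). Here I would estimate $\mathbb{E}[\ot_p^p]$ by a Fournier--Guillin--type multi-scale matching argument: decompose the bulk — shell by shell, and within each shell dyadically along the diagonal direction — into cells $Q$ of side $2^{-j}$, transport mass only between a cell and its parent, so that the induced plan costs $\lesssim \sum_{j} 2^{-jp}\sum_{Q \text{ at scale } j} |\bar\nu(Q)-m(Q)|$; for each cell the i.i.d.\ structure gives $\mathbb{E}\,|\bar\nu(Q)-m(Q)| \le \sqrt{\mathrm{Var}\,\nu(Q)}/\sqrt B \le \sqrt{\mathbb{E}[\nu(Q)^2]}/\sqrt B$. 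Summing over cells at a fixed scale by Cauchy--Schwarz, using $\sum_Q \mathbb{E}[\nu(Q)^2] \le \mathbb{E}\bigl[\nu(\{d>\delta\})^2\bigr] \le (L\delta^{-k})^2$ and the shell counts, produces a scale sum that is geometric when $p>1$ (hence convergent, uniformly in how fine one goes) and critical when $p=1$ (hence acquiring one logarithmic factor per decade of scales — the two-dimensional Ajtai--Koml\'os--Tusn\'ady effect, which is what yields $a_p(B)=\log B$). The outcome is a bulk bound of the form $C(p,L)\,g(\delta)\,B^{-1/2}$, with $g(\delta)$ only mildly growing as $\delta\to 0$, the $p=1$ logarithm riding on the finest scales.

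Finally I would balance the two pieces: choosing $\delta \asymp B^{-1}$ (more precisely, the value equating the bulk's growth in $\delta^{-1}$ with the near-diagonal decay $\delta^{p-k}$) collapses the estimates into
\[
\mathbb{E}\bigl[\ot_p^p(\bar\nu,m)\bigr] \le C(p,k,L)\Bigl(\tfrac{1}{\sqrt B} + \tfrac{a_p(B)}{B^{\,p-k}}\Bigr),
\]
the first term being the genuine $\mathbb{R}^2$ concentration rate of the bulk, the second the residual persistence of the truncated near-diagonal band. The main obstacle is exactly this bulk step: making the multi-scale matching bound rigorous for \emph{partial} transport — the masses of $\bar\nu$ and $m$ do not match exactly, so the discrepancy must be absorbed into the diagonal and its fluctuation, of order $\delta^{-k}B^{-1/2}$, kept under control — and, more delicately, tracking the dependence of every constant on the truncation level $\delta$ and on the concentration exponent $k$ carefully enough that the final optimization returns precisely the exponent $p-k$ rather than a weaker one.
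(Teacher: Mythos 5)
This theorem is imported verbatim from \cite[Theorem 1]{divol-2021-estimation}; the present paper gives no proof of it, so there is no in-paper argument to compare against. Your overall strategy --- truncate at a band of width $\delta$ around the diagonal, bound that band deterministically by $L\delta^{p-k}$ via the $k$-total persistence, run a Fournier--Guillin-type dyadic matching on the bulk, and optimize $\delta$ --- is indeed the strategy of the cited proof, and your preliminary observations (closedness of $\mathcal{M}_{k,L}$ under averaging, subadditivity of $\ot_p^p$ under partitions of $\Omega$ because each piece can route excess mass to the diagonal, $\ot_p^p(\mu,D_\emptyset)=\mathrm{Pers}_p(\mu)$) are all correct.

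The gap is quantitative and sits exactly where you flag it, but it is worse than a missing technicality: the bulk estimate as you have set it up cannot produce the stated rate. Applying Cauchy--Schwarz over \emph{all} cells of the bulk with the global bound $\sum_Q \mathbb{E}[\nu(Q)^2]\le (L\delta^{-k})^2$ gives, at each dyadic scale $j$, a contribution of order $2^{-j(p-1)}\,\delta^{-k}B^{-1/2}$, so your $g(\delta)$ is not ``mildly growing'' --- it is $\delta^{-k}$. Balancing $\delta^{-k}B^{-1/2}$ against $\delta^{p-k}$ forces $\delta\asymp B^{-1/(2p)}$ and yields $\mathbb{E}[\ot_p^p]\lesssim B^{-(p-k)/(2p)}$, which for, say, $p=2$, $k=1$ is $B^{-1/4}$ rather than the claimed $B^{-1/2}$. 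The step you must not discard is the one you mention in passing and then average away: the shell decomposition $\Omega_\ell=\{2^{-\ell-1}<d(\cdot,\partial\Omega)\le 2^{-\ell}\}$ has to be carried through the matching itself. Mass sitting in $\Omega_\ell$ never needs to be transported further than $O(2^{-\ell})$, since dumping it on the diagonal costs at most $2^{-\ell p}$ per unit; hence the fluctuation of the (at most $L2^{k\ell}$) mass of each shell enters the final bound weighted by $2^{-\ell p}$, and the sum over shells converges geometrically precisely because $p>k$. This per-shell bookkeeping is what decouples the two terms, lets you take $\delta\asymp B^{-1}$ to get the $B^{-(p-k)}$ contribution, and is also why the logarithm for $p=1$ ends up multiplying $B^{-(p-k)}$ and not $B^{-1/2}$; your version would at best give $\log(B)\,B^{-1/2}$ for $p=1$, which is not dominated by the right-hand side of the theorem when $k$ is small. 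So the skeleton is right, but the bulk step needs to be reorganized around the shells before the optimization in $\delta$ returns the advertised exponents.
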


\paragraph{Bias.} To estimate the bias, a key step is to use \emph{stability theorems} to bound the distance between persistence measures. Stability refers to the property that a bounded perturbation of input data will result in a bounded perturbation of the statistical summary computed from them---in this case, representations of persistent homology. For finite $p$, the Wasserstein stability theorems are not unified and depend on specific constructions of filtrations. We split our discussions for the following two cases: (i) the VR filtration, where we we assume $\mathcal{X}\subseteq \mathbb{R}^m$ is a finite set of cardinality $N$; and (ii) the \v{C}ech filtration, where we assume $\mathcal{X}\subseteq \mathbb{R}^m$ is any compact set.

\subsubsection{Case I: VR Filtration, $\mathcal{X}$ Finite}

Assume $\pi$ is a probability measure supported on $\mathcal{X}$ satisfying the $(a,b,r_0)$-standard assumption. Let $\bm{S}_n$ be the random variable with distribution $\pi^{\otimes n}$. The persistence diagram of VR filtration induces the random variable $\bm{D}_n$ with pushforward distribution $(\pi^{\otimes n})_*$. 

Let $\mathcal{A},\mathcal{B}\subseteq\mathbb{R}^m$ be two finite sets in Euclidean space. For $1\le p<\infty$, the $p$-Hausdorff distance between $\mathcal{A}$ and $\mathcal{B}$ is  
$$
    \mathrm{H}_p(\mathcal{A},\mathcal{B}) = \inf_{\mathbf{C}}\Bigg(\sum_{(x,y)\in\mathbf{C}}\|x-y\|_2^p\Bigg)^{\frac{1}{p}},
$$
where $\mathbf{C}$ ranges over all correspondences in $\mathbf{C}(\mathcal{A}, \mathcal{B})$. Using the convexity of $\ot_p^p$ and the Wasserstein stability theorem for VR filtrations of point clouds, we have the following inequalities.

\begin{proposition}\label{prop:bias-control}
Suppose $\mathcal{X}\subseteq\mathbb{R}^m$ is of size $N$, then
   \begin{equation}\label{eq:ineqs}
    \mathrm{OT}^p_p(D_\mu, D[\mathcal{X}])  \le \mathbb{E}[\mathrm{OT}^p_p(\bm{D}_n, D[\mathcal{X}])]  \le C_N\mathbb{E}[\mathrm{H}_p^p(\bm{S}_n, \mathcal{X})], 
\end{equation} 
where $C_N>0$ is a constant depending on $N$.
\end{proposition}


Thus, in order to bound the bias, we only need to bound the $p$-Hausdorff distance between the point cloud-valued random variable $\bm{S}_n$  and $\mathcal{X}$. We achieve this with the following tail bound for the $p$-Hausdorff distance.

\begin{lemma}
\label{lem:tail_prob}
For any $r > 2r_0N^{1/p} > 0$, we have
$$
\mathbb{P}(\mathrm{H}_p(\bm{S}_n, \mathcal{X}) > r) \leq \frac{4^b N^{b/p}}{ar^b}\exp\bigg( -\frac{ar^bn}{2^b N^{b/p}} \bigg).
$$
\end{lemma}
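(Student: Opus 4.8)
The plan is to control the tail probability $\mathbb{P}(\mathrm{H}_p(\bm S_n,\mathcal X)>r)$ by reducing it to a covering argument, exactly in the spirit of the classical $\mathrm{H}_\infty$ (ordinary Hausdorff) analysis of \cite{chazal-2014-convergence}, but keeping track of the $p$-th power weighting. First I would recall that since $\mathcal X$ is a finite point cloud with at most $N$ points and $\bm S_n\subseteq\mathcal X$, the $p$-Hausdorff distance can be bounded in terms of the ordinary Hausdorff distance: for any correspondence obtained by matching each point of $\mathcal X$ to its nearest point of $\bm S_n$ (and vice versa), one has $\mathrm{H}_p(\bm S_n,\mathcal X)^p \le N\cdot \mathrm{H}_\infty(\bm S_n,\mathcal X)^p$, since there are at most $N$ summands each of size at most $\mathrm{H}_\infty(\bm S_n,\mathcal X)^p$. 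Hence $\mathrm{H}_p(\bm S_n,\mathcal X)>r$ forces $\mathrm{H}_\infty(\bm S_n,\mathcal X) > r/N^{1/p}$. This is the step that converts the problem into a pure covering/occupancy question and explains the appearance of the factor $N^{1/p}$ and the hypothesis $r > 2r_0 N^{1/p}$ (which becomes $r/N^{1/p} > 2r_0$, the natural threshold for applying Lemma \ref{lem:chazal_cv} with radius $r/(2N^{1/p})$).

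Next I would run the standard covering argument on $\mathrm{H}_\infty$. Set $\epsilon = r/N^{1/p}$. If $\mathcal X$ is covered by balls $\mathcal B(c_1,\epsilon/2),\dots,\mathcal B(c_M,\epsilon/2)$ with $M = \mathrm{cv}(\mathcal X,\epsilon/2)$, then $\mathrm{H}_\infty(\bm S_n,\mathcal X) \le \epsilon$ as soon as $\bm S_n$ meets every ball $\mathcal B(c_i,\epsilon/2)$ that has positive $\pi$-mass (a point of $\bm S_n$ in $\mathcal B(c_i,\epsilon/2)$ is within $\epsilon$ of every point of that ball). Therefore
$$
\mathbb{P}(\mathrm{H}_\infty(\bm S_n,\mathcal X) > \epsilon) \le \sum_{i=1}^{M}\mathbb{P}\big(\bm S_n \cap \mathcal B(c_i,\epsilon/2) = \emptyset\big) = \sum_{i=1}^{M}\big(1-\pi(\mathcal B(c_i,\epsilon/2))\big)^n.
$$
By the $(a,b,r_0)$-standard assumption, each $\pi(\mathcal B(c_i,\epsilon/2)) \ge \min(1, a(\epsilon/2)^b)$, provided $\epsilon/2 > r_0$, which is guaranteed by $r > 2r_0N^{1/p}$; using $1-x \le e^{-x}$ this gives $(1-\pi(\mathcal B(c_i,\epsilon/2)))^n \le \exp(-a(\epsilon/2)^b n)$. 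Combining with the covering bound $M \le \max(2^b/(a(\epsilon/2)^b),1) = \max(4^b/(a\epsilon^b),1)$ from Lemma \ref{lem:chazal_cv} at radius $\epsilon/2$, and substituting back $\epsilon = r/N^{1/p}$ so that $\epsilon^b = r^b/N^{b/p}$ and $(\epsilon/2)^b = r^b/(2^b N^{b/p})$, yields
$$
\mathbb{P}(\mathrm{H}_p(\bm S_n,\mathcal X) > r) \le \frac{4^b N^{b/p}}{a r^b}\exp\!\bigg(\!-\frac{a r^b n}{2^b N^{b/p}}\bigg),
$$
which is the claimed bound. (The $\max(\cdot,1)$ in the covering number is harmless: when the bound equals $1$, the prefactor $4^bN^{b/p}/(ar^b)$ is $\le 1$ in that regime and we may simply use $M=1$, whose single term is already $\exp(-ar^bn/(2^bN^{b/p}))$; I would note this case briefly rather than belabor it.)

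The main obstacle, and the place I would be most careful, is the bookkeeping that links the three radii consistently: the "target" accuracy $r$ for $\mathrm{H}_p$, the induced accuracy $\epsilon = r/N^{1/p}$ for $\mathrm{H}_\infty$, and the covering radius $\epsilon/2$ at which Lemma \ref{lem:chazal_cv} and the standard assumption are invoked (the factor $1/2$ being needed so that a sample point in a cell is within the full $\epsilon$ of every other point of that cell). One must check that the hypothesis $r > 2r_0N^{1/p}$ is exactly what makes $\epsilon/2 > r_0$, so that both Lemma \ref{lem:chazal_cv} and the $\min(1,a(\epsilon/2)^b)$ lower bound are legitimately applicable, and that the powers of $2$, $4$, and $N^{1/p}$ all land where the statement says. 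Everything else is a routine union bound plus $1-x\le e^{-x}$.
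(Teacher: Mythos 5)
Your proof is correct and follows essentially the same route as the paper's: the paper splits $\mathrm{H}_p(\bm{S}_n,\mathcal{X})$ via the triangle inequality through a covering net $\mathcal{U}$ at radius $\hat r = r/(2N^{1/p})$ and bounds both pieces using cardinality-$N$ correspondences, which is exactly your reduction $\mathrm{H}_p \le N^{1/p}\mathrm{H}_\infty$ followed by the classical covering/union-bound argument at the same radius, with identical constants. (Your parenthetical on the $\max(\cdot,1)$ case is the one imprecise spot---taking $M=1$ yields a bound \emph{weaker} than the claimed one in that regime---but there one has $a(\epsilon/2)^b > 1$, so each ball has full $\pi$-measure and the tail probability is exactly zero, making the claim trivial.)
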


Under the $(a,b,r_0)$-standard assumption on $\mathcal{X}$, we have the following bias estimation for persistence measures.

\begin{lemma}
\label{lemma:hausdorff_bound}
Let $\theta := \frac{p}{b}-1$.  For any $p>b$, there exists a constant $C_2>0$ such that
\begin{equation}
\label{eq:haus_bound_1}
\mathbb{E}\big[ \mathrm{OT}^p_p(D_\mu, D[\mathcal{X}]) \big] \leq C_2\left( r_0^p + \frac{\Gamma(\theta)}{n^{\theta}}\right),
\end{equation}
where $\Gamma(\cdot)$ here is the gamma function.  For $p \leq b$, set $\displaystyle r_n = \bigg(\frac{\log n}{an}\bigg)^{1/b}$, there exists a constant $C_3>0$ such that
\begin{equation}
\label{eq:haus_bound_2}
\mathbb{E}[\mathrm{OT}^p_p(D_\mu, D[\mathcal{X}])] \leq C_3\left(r_0^p +  r_n^p \mathbf{1}_{\{ r_n > r_0\}} +\bigg( \frac{\log n}{n} \bigg)^{p/b} \frac{1}{(\log n)^2} \right).
\end{equation}
\end{lemma}

\paragraph{Convergence Rate.}
For $\mathcal{X}$ a point cloud in $\mathbb{R}^m$, the number of points in the random persistence diagram $\bm{D}_n$ is bounded.  Therefore, there exists $L > 0$ such that the pushforward measure $(\pi^{\otimes n})_*$ is supported on $\mathcal{M}_{0,L}$.  This allows us to apply Theorem \ref{thm:variance-original}.

\begin{corollary}
Let $\mathcal{X} \subset \mathbb{R}^m$ with finitely many points; let $1 \leq p < \infty$.  Then there exists a constant $C_4 > 0$ such that
\begin{equation}
\label{eq:var_bound}
\mathbb{E}[\ot_p^p(\bar D, D_\mu)] \leq \frac{C_4}{\sqrt{B}}.
\end{equation}
\end{corollary}

From the above discussions on the variance and bias, we obtain the following rate estimates for mean persistence measures.

\begin{theorem}\label{thm:main-mpm}
Let $\mathcal{X}\subset \mathbb{R}^m$ be a finite set of points and $\pi$ be a probability measure on $\mathcal{X}$ satisfying the $(a,b,r_0)$-standard assumption. Suppose $S_n^{(1)},\ldots,S_n^{(B)}$ are $B$ i.i.d. samples from the distribution $\pi^{\otimes n}$. Let $\bar{D}$ be the empirical persistence measure; denote $\theta := \frac{p}{b}-1$. Then the empirical persistence measure approaches the true persistence measure $D[\mathcal{X}]$ at the following rates:
\begin{equation}
\label{eq:main-mpm}
    \mathbb{E}[\ot_p^p(\bar{D},D[\mathcal{X}])]\leq
    \begin{cases}
    O(B^{-1/2})+O(r_0^p)+  O\big(n^{-\theta}\big)& \mbox{~if~} p>b,\\
    \displaystyle O(B^{-1/2})+O(r_0^p) + O\bigg(\bigg(\frac{\log n}{n}\bigg)^{\theta+1}\bigg)& \mbox{~if~} \displaystyle p\le b.
    
    \end{cases}
\end{equation}
\end{theorem}

\begin{remark}
\label{rem:randomness}
Notice that as opposed to consistency results in classical statistics, here we do not consider the limiting case where $n$ tends to infinity since the total space is finite. In this setting the problem of interest is to obtain a valid approximation for a persistence diagram representing the true persistence diagram for a large yet finite dataset in the form of a point cloud $\mathcal{X}$.  This has important practical implications when $n$ and $B$ are both small with respect to $\mathcal{X}$, which will induce randomness.
\end{remark}

\subsubsection{Case II: \v{C}ech Filtration, $\mathcal{X}$ Compact}
\label{sec:continuous-estimation}


One way to extend $\mathcal{X}$ from a finite space to any compact space is via finite approximation, i.e., by letting the cardinality $N$ tend to infinity. However, Theorem \ref{thm:main-mpm} fails if $N$ is not fixed, since the bound on the $p$-Hausdorff distance is proportional to $N$. Another obstacle lies in the assumption of a uniform bound on the cardinality of point clouds in order to apply results on Wasserstein stability. Replacing boundedness by other assumptions would entail studying conjectures on finiteness of the constants \citep{skraba-2020-wasserstein}.  

To bypass such obstacles, we return to the original Wasserstein stability for Lipschitz functions \citep{cohen-2010-lipschitz}. This observation depends on the fact that \v{C}ech filtrations yield the same persistent homology as level set filtrations of the distance function $\mathrm{dist}_\mathcal{X}(x) = \displaystyle\min_{y\in\mathcal{X}}\|x-y\|$ for finite $\mathcal{X}\subseteq\mathbb{R}^m$.

\begin{theorem}{\citep[Lemma 3.4]{chazal2008towards}}\label{thm:persistent-nerve}
    Let $\mathcal{X}\subseteq\mathbb{R}^m$ be a finite set of points and let $0<\epsilon<\epsilon'$. Then there exists homotopy equivalence $\mathcal{X}^\epsilon\simeq \mathrm{\check{C}ech}(\mathcal{X},\epsilon)$ and $\mathcal{X}^{\epsilon'}\simeq \mathrm{\check{C}ech}(\mathcal{X},\epsilon')$, where $\mathcal{X}^\epsilon=\displaystyle\bigcup_{x\in\mathcal{X}}\mathcal{B}(x,\epsilon)$, such that the following diagram commutes:
    $$
    \begin{tikzcd}
        H_\bullet(\mathcal{X}^{\epsilon}) \arrow{r} &H_\bullet(\mathcal{X}^{\epsilon'})\\
       H_\bullet(\mathrm{\check{C}ech}(\mathcal{X},\epsilon)) \arrow{r} \arrow{u}& H_\bullet(\mathrm{\check{C}ech}(\mathcal{X},\epsilon')) \arrow{u}
    \end{tikzcd}
    $$
    where the vertical arrows are isomorphisms.
\end{theorem}

Since $\mathrm{dist}_\mathcal{X}((-\infty,\epsilon]) = \bigcup_{x\in\mathcal{X}}\mathcal{B}(x,\epsilon)$, we can identify persistence diagrams from \v{C}ech filtrations and level set filtrations.

\begin{corollary}
    Let $\mathcal{X}\subseteq\mathbb{R}^m$ be a finite set. Then $D[\mathrm{\check{C}ech}(\mathcal{X})]=D[\mathrm{dist}_\mathcal{X}]$.
\end{corollary}

Now let $\mathcal{X}\subseteq\mathbb{R}^m$ be a compact set such that the distance function $\mathrm{dist}_\mathcal{X}$ is \emph{tame}; intuitively, this means that the distance function is sufficiently well-behaved and has finitely many change points. Assume $\pi$ is a probability measure supported on $\mathcal{X}$ satisfying the $(a,b)$-standard assumption. Let $\bm{S}_n$ be the random variables with distribution $\pi^{\otimes n}$ and $S_n^{(1)},\ldots,S_n^{(B)}$ be i.i.d.~samples of $\bm{S}_n$. For each $S_n^{(i)}$, take the \v{C}ech filtration, or equivalently the level set filtration of $\mathrm{dst}_{S_n^{i}}$, and compute its persistence diagram. The mean persistence measure is given by
$$
\bar{D} = \frac{1}{B}\sum_{i=1}^BD[\mathrm{dist}_{S_n^{i}}].
$$
We would like to control the error
\begin{equation}\label{eq:continuous-error}
    \mathbb{E}[\mathrm{OT}_p^p(\bar{D},D[\mathrm{dist}_\mathcal{X}])].
\end{equation}
The variance--bias decomposition \eqref{eq:bias_var} also holds for \eqref{eq:continuous-error}. Using the same strategy, but a different stability theorem, we are able to derive the bias rate.

\begin{lemma}\label{lemma:continuous-bias}
    Let $\mathcal{X}\subseteq\mathbb{R}^m$ be a compact set and $0<m<k<p$. Let $\pi$ be a probability measure satisfying the $(a,b)$-standard assumption. Then there exists a constant $C_5>0$ such that
    \begin{equation}\label{eq:continuous-bias}
    \mathbb{E}[\mathrm{OT}_p^p(D_\mu,D[\mathrm{dst}_\mathcal{X}])]\le  C_5\bigg(1+\frac{1}{(\log n)^2}\bigg)\bigg(\frac{\log n}{an}\bigg)^{\frac{p-k}{b}} .
    \end{equation}
\end{lemma}

The parameter $k$ involved in Lemma \ref{lemma:continuous-bias} comes from the Wasserstein stability theorem for Lipschitz functions. A metric space $\mathcal{X}$ has bounded $k$-total persistence if there exists a constant $C_\mathcal{X}$ such that $\mathrm{W}_k(D[f],D_\emptyset)\le C_\mathcal{X}$ for every tame function $f: \mathcal{X}\to\mathbb{R}$ with Lipschitz norm $\|f\|_{\mathrm{Lip}}\le 1$. Since the function of interest is $\mathrm{dist}_{\mathcal{X}}$, which is defined on $\mathbb{R}^m$, our strategy is to place $\mathcal{X}$ inside a ball $\mathcal{B}(R)\subseteq\mathbb{R}^m$ that is known to have bounded $k$-total persistence for any $k>m$.

\paragraph{Convergence Rate.}
We can now directly apply Theorem \ref{thm:variance-original} to obtain a variance estimate.

\begin{lemma}\label{lemma:var-rate-continuous}
    Let $\mathcal{X}\subseteq\mathbb{R}^m$ be a compact set and $0<m< k< p$. Then there exists a constant $C_6>0$ such that 
    \begin{equation}\label{eq:var-rate-continuous}
            \mathbb{E}[\mathrm{OT}_p^p(\bar{D},D_\mu)]\le C_6\left(\frac{1}{\sqrt{B}}+\frac{1}{B^{p-k}}\right).
    \end{equation}
    In particular, for $p>m+\frac{1}{2}$, the rate is dominated by $O(\frac{1}{\sqrt{B}})$.
\end{lemma}

Summarizing estimations for variance and bias, we obtain the following rate estimates for the approximation error for mean persistence measures.

\begin{theorem}\label{thm:cont-mpm}
Let $\mathcal{X}\subset \mathbb{R}^m$ be a compact subset and $\pi$ be a probability measure on $\mathcal{X}$ satisfying the $(a,b)$-standard assumption. Suppose $S_n^{(1)},\ldots,S_n^{(B)}$ are $B$ i.i.d.~samples from the distribution $\pi^{\otimes n}$. Let $\bar{D}$ be the empirical persistence measure from the \v{C}ech filtration or the level set filtration of distance function. Let $0<m<k<p$. Then the empirical persistence measure approaches the true persistence measure $D[\mathrm{dist}_\mathcal{X}]$ at the following rate
\begin{equation}
\label{eq:cont-mpm}
    \mathbb{E}[\ot_p^p(\bar{D},\, D[\mathrm{dist}_\mathcal{X}])]\leq O\left(\frac{1}{\sqrt{B}}+\frac{1}{B^{p-k}}\right)+O\left(\bigg(1+\frac{1}{(\log n)^2}\bigg)\bigg(\frac{\log n}{an}\bigg)^{\frac{p-k}{b}}\right).
\end{equation}
\end{theorem}


\subsection{Multiple Subsampling for Persistence Diagrams}
\label{sec:multiple-subsampling-pd}

In previous discussions for HCVs and persistence measures, the empirical and population means are not elements in $\mathcal{D}_p$, i.e., they are not ordinary persistence diagrams. The multiple subsampling method can be applied directly to the space of persistence diagrams $\mathcal{D}_p$, if we use Fr\'echet means to define mean persistence diagrams.

In the same setting as in Section \ref{sec:multiple-subsampling-pm}, we define the empirical Fr\'echet mean as 
\begin{equation*}
    \widehat{\mathrm{Fr}} =\mathop{\arg\min}_{D\in\mathcal{D}_p} \frac{1}{B}\sum_{i=1}^B\mathrm{W}_{p}^{p}(D,D[S_n^{i}]).
\end{equation*}
Suppose $(\pi^{\otimes{n}})_*$ is the pushforward probability measure on $\mathcal{D}_p$, the corresponding population Fr\'echet mean is defined as 
\begin{equation*}
    \mathbf{Fr} =\mathop{\arg\min}_{D\in\mathcal{D}_p} \int_{\mathcal{D}_p}\mathrm{W}_{p}^{p}(D,D')\,\mathrm{d} (\pi^{\otimes{n}})_*(D').
\end{equation*}
The same variance--bias decomposition holds for Fr\'echet means:
\begin{equation}
\label{eq:bias_var_fr}
\mathbb{E}\big[ \mathrm{W}^p_p(\widehat{\mathrm{Fr}}, D[\mathcal{X}]) \big] \leq 2^{p-1}\Big(\underbrace{\mathbb{E}\big[ \mathrm{W}_p^p(\widehat{\mathrm{Fr}}, \mathbf{Fr}) \big]}_{\text{variance}} + \underbrace{\mathrm{W}_p^p(\mathbf{Fr}, D[\mathcal{X}])}_{\text{bias}}\Big).
\end{equation}

As opposed to the setting with persistence measures, there are limitations to achieving theoretical results due to the intrinsic geometry of $\mathcal{D}_p$, which prohibits a  convergence rate on the variance.

\paragraph{Variance.}
Under the assumption of a unique population Fr\'echet mean, \cite{turner-2014-frechet} proved an asymptotic result on the consistency of empirical Fr\'echet means on $\mathcal{D}_2$. The derivation of a finite sample convergence rate for Fr\'echet means under practical conditions  remains an open problem. In general, for metric spaces with nonnegative curvature in the sense of Alexandrov,  due to a lack of convexity of the objective function, deriving nonasymptotic rates can be prohibitive and often requires additional assumptions on the population distribution.

The state-of-the-art result on finite sample convergence rates for empirical Wasserstein means/barycenters in nonnegatively curved spaces is given by \cite{gouic-2019-fast}. A key assumption in the framework of \cite{gouic-2019-fast} is known as the bi-extendability of geodesics. When applied to the space of persistence diagrams, this bi-extendability condition is interpreted by \cite{cao2022geometric} as  
\emph{flatness} for a set of persistence diagrams. Under the flatness assumption, the empricial Fr\'echet mean converges at a rate of $O(1/\sqrt{B})$ as expected. However, such a flatness condition is not applicable here since it requires that all persistence diagrams have the same number of off-diagonal points. Other than theories from Alexandrov geometry, there are other approaches to derive finite sample convergence rates of empirical Fr\'echet means, such as quadruple inequalities \citep{schotz2019convergence} and relaxation \citep{blanchard2022fr}. However, the assumptions needed for these means to be applicable are difficult to verify in our setting.

\paragraph{Bias.} For bias estimation, we face the same challenges as for variance estimation. However, using a similar approach as in Section \ref{sec:multiple-subsampling-pm} we can provide a loose bound for the bias term. Denote the Fr\'echet variance by
\begin{equation*}
\sigma^2 = \int_{\mathcal{D}_p}\mathrm{W}_p^p(\mathbf{Fr},D')\,\mathrm{d}(\pi^{\otimes n})_*(D').
\end{equation*}
Then we have the following bound for the bias.

\begin{theorem}\label{thm:bias-fr}
Let $\mathcal{X}\subset \mathbb{R}^m$ be a finite set of points, and $\pi$ be a probability measure on $\mathcal{X}$ satisfying the $(a,b,r_0)$-standard assumption. Suppose $S_n^{(1)},\ldots,S_n^{(B)}$ are $B$ i.i.d. samples from the distribution $\pi^{\otimes n}$. Let $\widehat{\mathrm{Fr}}$/$\mathbf{Fr}$ be the empirical/population Fr\'echet mean and $\sigma^2$ be the Fr\'echet variance. Denote $\theta := \frac{p}{b}-1$. 
\begin{equation}
    \mathrm{W}^p_p(\mathbf{Fr},D[\mathcal{X}])\leq
    \begin{cases}
    O(\sigma^2)+O(r_0^p)+  O\big(n^{-\theta}\big)& \mbox{~if~} p>b,\\
    \displaystyle O(\sigma^2)+O(r_0^p) + O\bigg(\bigg(\frac{\log n}{n}\bigg)^{\theta+1}\bigg)& \mbox{~if~} \displaystyle p\le b. 
    \end{cases}
\end{equation}
\end{theorem}

\paragraph{Utility in Applications.}
Though there exists no complete expression for the approximation error for Fr\'echet means of persistence diagrams, by using the computational algorithm of \cite{turner-2014-frechet}, the multiple subsampling method is fully applicable for persistence diagrams in practice. In simulations and applications, we will demonstrate the utility of this approach through many examples.


\section{Tuning Parameters}
\label{sec:prac-para-turning}

In this section we explain how to interpret and tune the parameters appearing in the theoretical analysis of finite sample convergence rates. We focus on Theorem \ref{thm:main-mpm} where the setting is multiple subsampling for persistence measures for a finite underlying space $\mathcal{X}$.  

\subsection{Interpreting Parameters} 

We analyze the parameters in each of the three components in our derived rate estimations.

\paragraph{Variance.} The variance term is only affected by the number of subsampled sets $B$. Increasing $B$ to infinity will eliminate the variance term, however, in practice when computational efficiency is a priority, we would like for $B$ to be as small as possible. Theorem \ref{thm:main-mpm} allows us to tune $B$ and thus select the number of subsamples to draw so that the variance and bias are of the same rate. In this case, we have
\begin{equation}\label{eq:optimal-rate}
    \mathbb{E}[\ot_p^p(\bar{D},\, D[\mathcal{X}])]\le \begin{cases}
 O(r_0^p)+O(n^{-\theta}) & \mbox{~if~} p>b,\,  B= O(n^{2\theta})\\ \displaystyle O(r_0^p)+O\bigg(\bigg(\frac{\log n}{n}\bigg)^{\theta+1}\bigg)& \mbox{~if~} \displaystyle p\le b,\,  B=O\bigg(\bigg(\frac{n}{\log n}\bigg)^{2(\theta+1)}\bigg).\\
    \end{cases}
\end{equation}
Note that in the case when $\mathcal{X}$ is finite, the variance estimation is valid for any Wasserstein index $1\le p<\infty$. 

\paragraph{Intrinsic Error.} The constant term of $O(r_0^p)$ comes from the $(a,b,r_0)$-standard assumption, where $r_0>0$ is intrinsic when $\mathcal{X}$ is finite. In practice, this constant term is negligible for two reasons: First, when $\mathcal{X}$ is a dense dataset, $r_0$ is controlled by the smallest mutual distance in $\mathcal{X}$. A typical example in this case is a large dataset uniformly sampled from a compact manifold. Second, in practical computational settings, $B$ and $n$ are small, and the approximation error is dominated by the variance and bias terms. 
    
\paragraph{Bias.} The parameters we can tune to control the bias are $n$ and $p$. Note that the $(a,b,r_0)$-standard assumption applies to the distribution, which in practice is rarely known except for geometric data. Ideally, we would like for $n$ to be as large as possible. However, as previously discussed, the goal is to perform estimation with a relatively small $n$, so that randomness is induced (see Remark \ref{rem:randomness}). The convergence rate is determined by Wasserstein index $p$: the larger $p$ is, the faster the convergence is of the bias term. Intuitively, $p$ can be seen as a weight for ``topological noise'' where for larger $p$ less information is attributed to points near the diagonal in the persistence diagrams. However, when $p$ becomes too large the numerics will become small and eventually cause floating problems. 



\subsection{Practical Tuning}

In practice, there are three parameters to tune: the number of subsampled sets $B$, the size of a subsample set $n$, and the index of Wasserstein distance $p$. 

\begin{table}[htbp]
\scriptsize
\setlength{\extrarowheight}{5pt}
\newcolumntype{Y}{>{\centering\arraybackslash}X}

\begin{tabularx}{\textwidth}{|c|c|c|Y|Y|c|c|c|Y|Y|}
\hline
$n$ & $p$  & $B$   & $\mathrm{Err}_B$   & Time  & $n$ & $p$  & $B$   & $\mathrm{Err}_B$   & Time\\
\hline
\multirow{4}{*}{100}  & 2  & 40  & 0.6069 & 8.0156   & \multirow{4}{*}{1000} & 2  & 22 & 1.5655 & 378.6562 \\
\cline{2-5} \cline{7-10}
                      & 5  & 17  & 0.0307 & 1.8593   &                       & 5  & 30 & 0.1876 & 693.4218 \\
                      \cline{2-5} \cline{7-10}
                      & 7  & 74  & 0.0081 & 24.5156  &                       & 7  & 7  & 0.0998 & 40.5312  \\
                      \cline{2-5} \cline{7-10}
                      & 10 & 131 & 0.0017 & 72.5468  &                       & 10 & 28 & 0.0394 & 602.8281 \\
                      \hline
\multirow{4}{*}{200}  & 2  & 21  & 1.0365 & 9.9218   & \multirow{4}{*}{1500} & 2  & 8  & 1.6468 & 101.0937 \\
\cline{2-5} \cline{7-10}
                      & 5  & 32  & 0.0627 & 22.0781  &                       & 5  & 21 & 0.2245 & 672.9218 \\
                      \cline{2-5} \cline{7-10}
                      & 7  & 25  & 0.0221 & 13.5937  &                       & 7  & 41 & 0.1221 & 2537.0165 \\
                      \cline{2-5} \cline{7-10}
                      & 10 & 27  & 0.0056 & 15.8593  &                       & 10 & 30 & 0.0531 & 1378.1093 \\
                      \hline
\multirow{4}{*}{500}  & 2  & 9   & 1.4075 & 16.8751  & \multirow{4}{*}{2000} & 2  & 9  & 1.6848 & 196.2031 \\
\cline{2-5} \cline{7-10}
                      & 5  & 25  & 0.1288 & 121.7656 &                       & 5  & 3  & 0.2487 & 22.2343  \\
                      \cline{2-5} \cline{7-10}
                      & 7  & 20  & 0.0566 & 78.2187  &                       & 7  & 31 & 0.1399 & 2200.2812 \\
                      \cline{2-5} \cline{7-10}
                      & 10 & 33  & 0.0196 & 212.8281 &                       & 10 & 39 & 0.0644 & 3464.1718\\
                      \hline
\end{tabularx}
\caption{Exhaustive tests for parameters on torus data.}
\label{tab:torus}
\end{table}

For $B$ and $n$, these can both be large as long as the computation can be done in ``reasonable'' time, which is normally user-specified. Moreover, the computation of persistent homology for each of the $B$ sets is independent, which allows the use of parallel computing.  We now further explore the implications of tuning $B$ and $n$ on computational runtime: Though the variance rate is sublinear, we find that practically, there is no need to take very large $B$ to achieve convergence. Specifically, we sampled 5,000,000 points from the torus/sphere as the underlying data set $\mathcal{X}$. For each fixed $n$ and $p$, we iteratively increased $B$ until convergence. The convergence is normally measured by the quotient of distances to the limit point \citep[Chapter 9]{ortega2000iterative},
$$
Q=\left|\dfrac{\mathrm{OT}_p^p(\bar{D}_{B+1},D[\mathcal{X}])}{\mathrm{OT}_p^p(\bar{D}_{B},D[\mathcal{X}])}-1\right|,
$$
where $\bar{D}_B$ denotes the mean persistence measure computed from $B$ sets. Since the true persistence diagram $D[\mathcal{X}]$ is not available, we consider the quotient of consecutive distances $\mathrm{Err}_B=\mathrm{OT}_p^p(\bar{D}_{B+1},\bar{D}_B)$ instead, which is more useful in practice \citep{van1994acceleration}:
$$
Q^*=\left|\dfrac{\mathrm{Err}_B}{\mathrm{Err}_{B-1}}-1\right|.
$$
We carry out an exhaustive test for different parameters. An experiment is considered to be convergent if $Q^*<10^{-3}$. 

For the choice of $p$, according to Theorem \ref{thm:main-mpm}, it is safe to choose large $p$ when the underlying dimension is not known to guarantee a fast convergence for the bias. However, the absolute error becomes small as $p$ increases, which is not helpful for computations. The effect is more apparent when the true persistence diagram has few points with long persistence and most points close to the diagonal. For example, for the 3D sphere Table \ref{tab:sphere} shows that the consecutive distance $\mathrm{Err}_B$ is not observable for large $p$.

\begin{table}[htbp]
\scriptsize
\setlength{\extrarowheight}{5pt}
\newcolumntype{Y}{>{\centering\arraybackslash}X}
\begin{tabularx}{\textwidth}{|c|c|c|Y|Y|c|c|c|Y|Y|}
\hline
$n$ & $p$  & $B$   & $\mathrm{Err}_B$   & Time  & $n$ & $p$  & $B$   & $\mathrm{Err}_B$   & Time\\
\hline
\multirow{4}{*}{100}  & 2  & 37  & 0.1814 & 9.7187   & \multirow{4}{*}{1000} & 2  & 21 & 0.2387 & 161.2986 \\
\cline{2-5} \cline{7-10}
                      & 5  & 97  & 0.0008 & 59.8754   &                       & 5  & 61 & 3.6$\times 10^{-5}$ & 1327.0312 \\
                      \cline{2-5} \cline{7-10}
                      & 7  & 82  & 4.6$\times 10^{-5}$ & 42.9375  &                       & 7  & 68  & N & 1655.1416  \\
                      \cline{2-5} \cline{7-10}
                      & 10 & 119 & 1.7$\times 10^{-6}$ & 88.0781  &                       & 10 & 60 & N & 1287.8134 \\
                      \hline
\multirow{4}{*}{200}  & 2  & 12  & 0.2264 & 2.2812   & \multirow{4}{*}{1500} & 2  & 18  & 0.2561 & 261.2656 \\
\cline{2-5} \cline{7-10}
                      & 5  & 54  & 0.0004 & 36.8906  &                       & 5  & 32 & 2.2$\times 10^{-5}$ & 809.0625 \\
                      \cline{2-5} \cline{7-10}
                      & 7  & 220  & 1.1$\times 10^{-5}$ & 577.8437  &                       & 7  & 28 & N & 620.8437 \\
                      \cline{2-5} \cline{7-10}
                      & 10 & 165  & N & 329.2343  &                       & 10 & 10 & N & 82.7031 \\
                      \hline
\multirow{4}{*}{500}  & 2  & 50   & 0.2274 & 195.6251  & \multirow{4}{*}{2000} & 2  & 3  & 0.2541 & 12.5937 \\
\cline{2-5} \cline{7-10}
                      & 5  & 4  & 0.0003 & 1.5625 &                       & 5  & 56  & 1.4$\times 10^{-5}$ & 3994.9691  \\
                      \cline{2-5} \cline{7-10}
                      & 7  & 207  & 1.2$\times 10^{-5}$ & 3311.5471  &                       & 7  & 40 & N & 2041.3912 \\
                      \cline{2-5} \cline{7-10}
                      & 10 & 67  & N & 353.2523 &                       & 10 & 8 & N & 87.3437\\
                      \hline
\end{tabularx}
\caption{Exhaustive tests for parameters on sphere data. If $\mathrm{Err}_B<10^{-6}$ it is recorded as N.}
\label{tab:sphere}
\end{table}


\section{Simulations}
\label{sec:experiments}


In this section, we provide numerical validations of our derived theory and discuss robustness of the multiple subsampling method to noise.

\subsection{Validating the Convergence Rate}
\label{sec:rate-experiment}

\paragraph{2D Torus.} We take $\mathcal{X}$ to be a sample set consisting of $N = 50,000$ points from a torus with outer radius 0.8 and inner radius 0.3. We then subsample $B=0.1n$ subsets each consisting of $n$ points from $\mathcal{X}$ to compute the empirical mean persistence measure $\bar{D}$. We repeat the subsampling procedure so that $n$ ranges from $400$ to $3,800$. Since we are uniformly sampling from a dense data set from a 2D manifold, $b$ is assumed to be 2. If we take $p=3$, by \eqref{eq:optimal-rate}, the optimal rate is $O(n^{-\frac{1}{2}})$ and the loss (i.e., approximation error) curve takes the form
\begin{equation}\label{eq:torus-rate}
    \mathbb{E}[\ot_3^3(\bar{D},D[\mathcal{X}])] \approx a_0 + a_1 n^{-\frac{1}{2}}.
\end{equation}
For $p=8$, the bias decreases at a rate of $3$, which is much faster than the variance. In this case the loss curve is dominated by variance and should be of the same form as \eqref{eq:torus-rate}. We fit the loss curves using the empirical losses. The result shown in Figure \ref{fig:torus-rate} presents convergence rates of 0.50 and 0.52 with respect to different $p$, which are consistent with our derived theory. 

\begin{figure}[htbp]
\centering
    \begin{subfigure}[t]{0.45\textwidth}
    \centering
    \includegraphics[width=\textwidth]{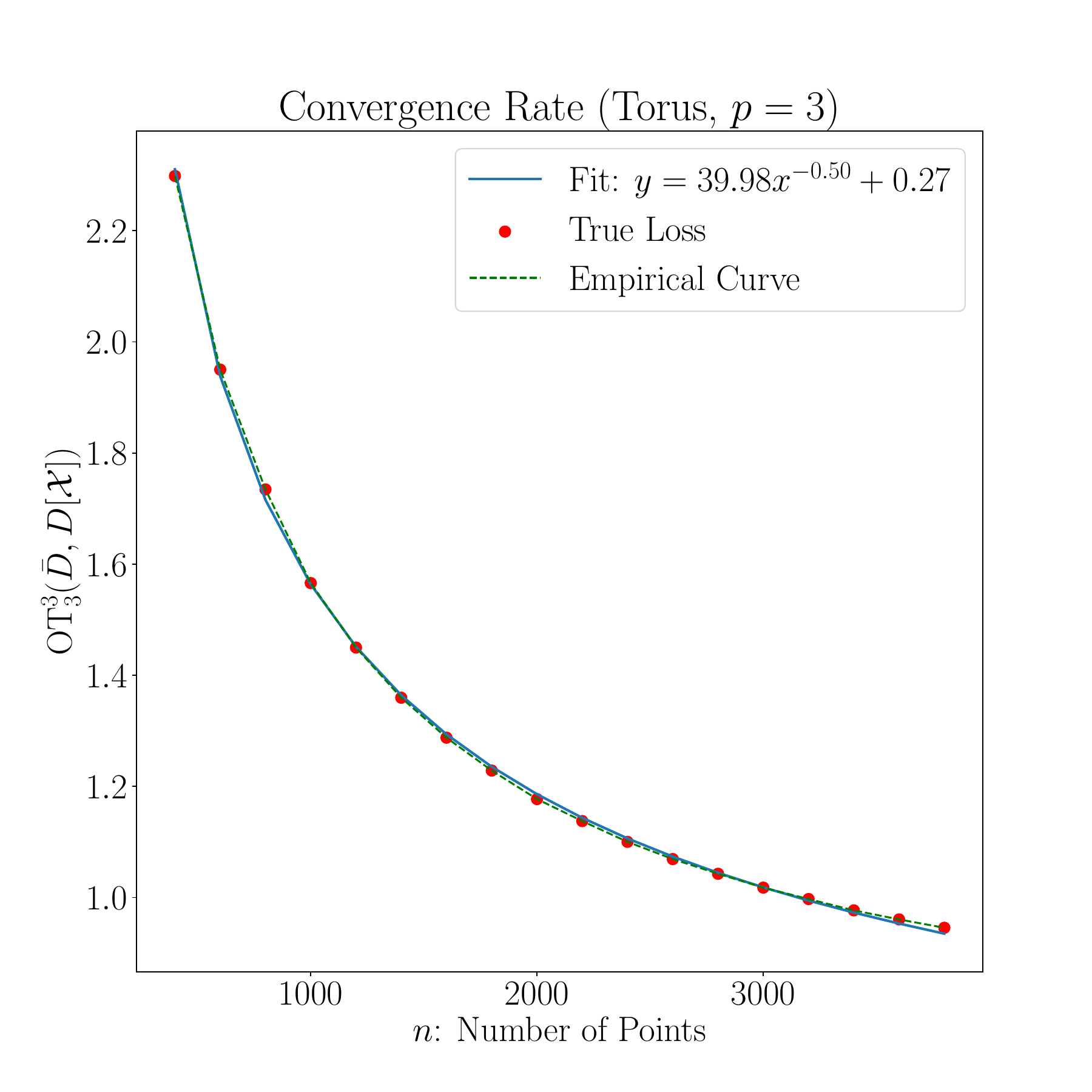}
    \caption{The loss curve for $p=3$}
    \end{subfigure}
    \begin{subfigure}[t]{0.45\textwidth}
    \centering
    \includegraphics[width=\textwidth]{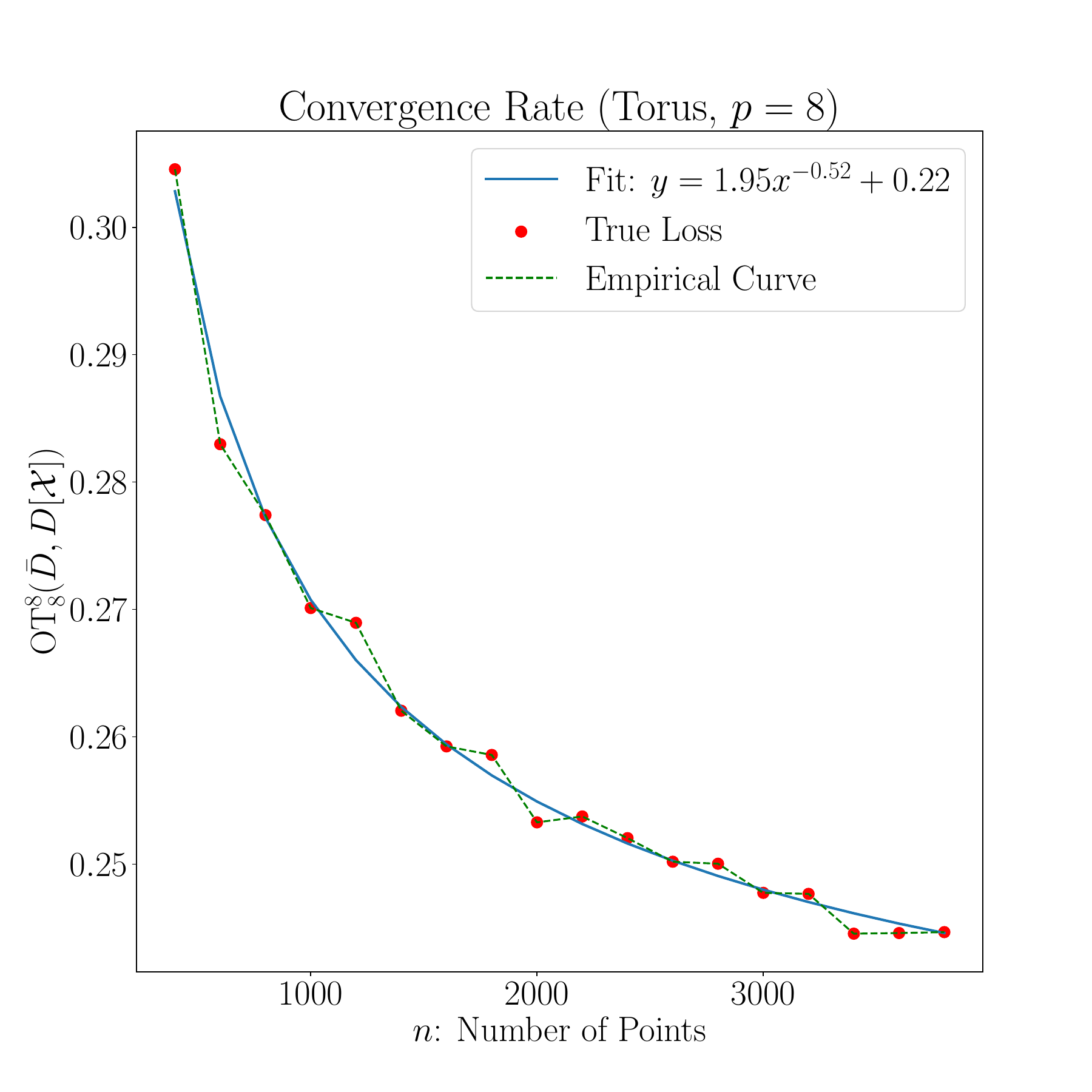}
    \caption{The loss curve for $p=8$}
    \end{subfigure}
    \caption{Convergence rate verification for mean persistence measure on the Torus (point cloud of $N = 50,000$ points).}
    \label{fig:torus-rate}
\end{figure}

\paragraph{3D Sphere.} We take $\mathcal{X}$ to be the sample set consisting of $N = 20,000$ points from a 3-dimensional sphere with radius 0.5. Then we sample $B=\lfloor n^{2/3}\rfloor$ subsets each consisting of $n$ points to compute the empirical mean persistence measure. We repeat the subsampling procedure so that $n$ ranges from 600 to 4,000. In this case $b$ is assumed to be 3. If we take $p=2$, notice that we are in the setting $p<b$ in Theorem \ref{thm:main-mpm}, the loss curve takes the form
\begin{equation}\label{eq:sphere-rate}
    \mathbb{E}[\ot_2^2(\bar{D},D[\mathcal{X}])] \approx a_0 + a_1 n^{-\frac{1}{3}}.
\end{equation}
If we take $p=9$, the bias vanishes in a rate $\theta=2$. Again the loss curve is dominated by variance and should be in the same form as \eqref{eq:sphere-rate}.  We fit the loss curves using the empirical losses. The result shown in Figure \ref{fig:sphere-rate} presents convergence rates of 0.32 and 0.36, which are consistent with our theory.


\subsection{Comparing Fr\'echet Means and Studying Robustness to Noise}

\begin{figure}[htbp]
\centering
\begin{subfigure}[t]{0.45\textwidth}
    \centering
    \includegraphics[width=\textwidth]{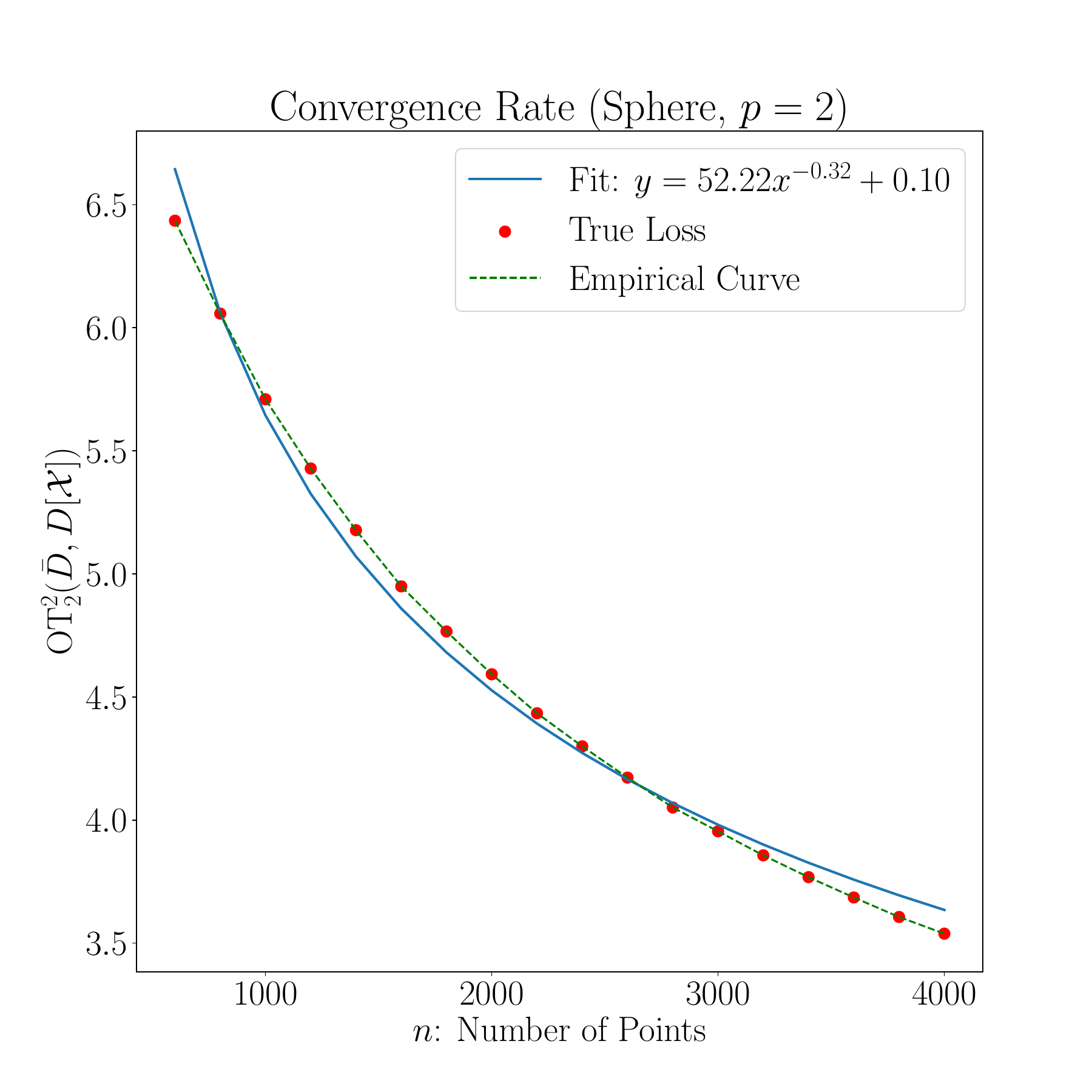}
    \caption{The loss curve for $p=2$}
\end{subfigure}
\begin{subfigure}[t]{0.45\textwidth}
    \centering
    \includegraphics[width=\textwidth]{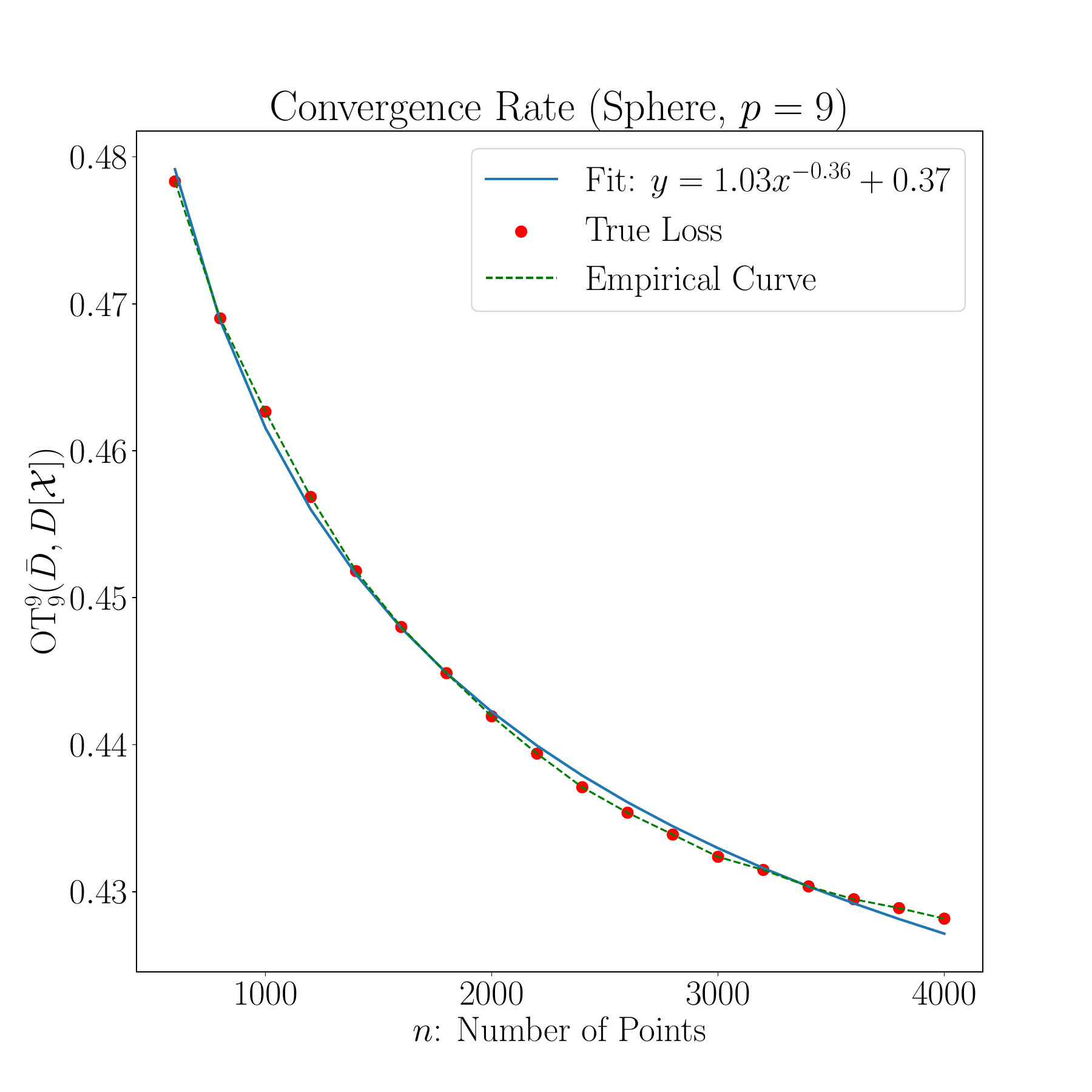}
    \caption{The loss curve for $p=9$}
\end{subfigure}
    \caption{Convergence rate verification for mean persistence measure on the Sphere (point cloud of $N=20,000$ points).}
    \label{fig:sphere-rate}
\end{figure}

We provide additional simulation results in the supplementary materials. In particular, we provide experimental comparisons of the Fr\'echet means and mean persistence measures as measures of central tendency in the multiple subsampling method. The comparison is done on three types of data: Euclidean point clouds, abstract finite metric spaces, and networks. Although in our theoretical convergence analysis we assumed point cloud data in Euclidean space, these experiments demonstrate the applicability of the multiple subsampling method to other types of data. We also show robustness of multiple subsampling to noise in input data.


\section{Applications}
\label{sec:real-data}





In this section, we demonstrate the multiple subsampling approach on real-world applications.


\subsection{Inferencing on Lexical Databases}
\label{sec:database-embedding}

In representation learning, the Poincar\'{e} embedding is one of the most widely-used embedding methods to learn representations of symbolic data \citep{nickel2017poincare}. Compared to the Euclidean embedding, the Poincar\'{e} embedding is able to simultaneously capture hierarchy and similarity due to the underlying hyperbolic geometry. For example, for lexical databases where different words have hypernymity relations, the Poincar\'{e} embedding is able to preserve the tree-like structure. 

Let $\mathbb{D}^m = \{u:\|u\|<1\}\subseteq\mathbb{R}^m$ be the unit open ball in $\mathbb{R}^m$. The Poincar\'{e} distance is defined by
\begin{equation}\label{eq:poincare-distance}
d_{\mathbb{D}}(u,v) = \mathrm{cosh}^{-1}\left(1+\frac{2\|u-v\|^2}{(1-\|x\|^2)(1-\|v\|^2)}\right),    
\end{equation}
where $\|\cdot\|$ is the Euclidean norm. Let $L_V$ be the dataset and $L_E$ be its hypernymity relations. Poincar\'{e} embeddings aim to minimize the following loss function:
$$
\mathcal{L} = \sum_{(u,v)\in L_E}\log\dfrac{e^{-d_{\mathbb{D}(u,v)}}}{\displaystyle\sum_{v'\in\mathcal{N}(u)}e^{-d_{\mathbb{D}(u,v')}}},
$$
where $u$ denotes both the raw data in $L_V$ and its representation in $\mathbb{D}^m$, and $\mathcal{N}(u)=\{v':(u,v')\notin L_E\}\cup\{v\}$ denotes the set of negative examples for $u$ \citep{nickel2017poincare}. Since the Poincar\'{e} ball has a Riemannian manifold structure, we can optimize the loss function and update the representations via Riemannian stochastic gradient descent \citep{bonnabel2013stochastic}. 

By placing the root node of a tree at the origin of the Poincar\'{e} ball, the hierarchical structure of the database is encoded in the distances of points to the origin. By \eqref{eq:poincare-distance}, leaf nodes are trained to be close to the boundary since the distance grows exponentially fast when the Euclidean norm is close to 1. We present snapshots of training at different epochs in Figure \ref{fig:poincare-embedding}, where we can see points are pushed to boundary as the number of epochs increases.

\begin{figure}[htbp]
\centering
\begin{subfigure}[t]{0.32\textwidth}
    \includegraphics[width=\textwidth]{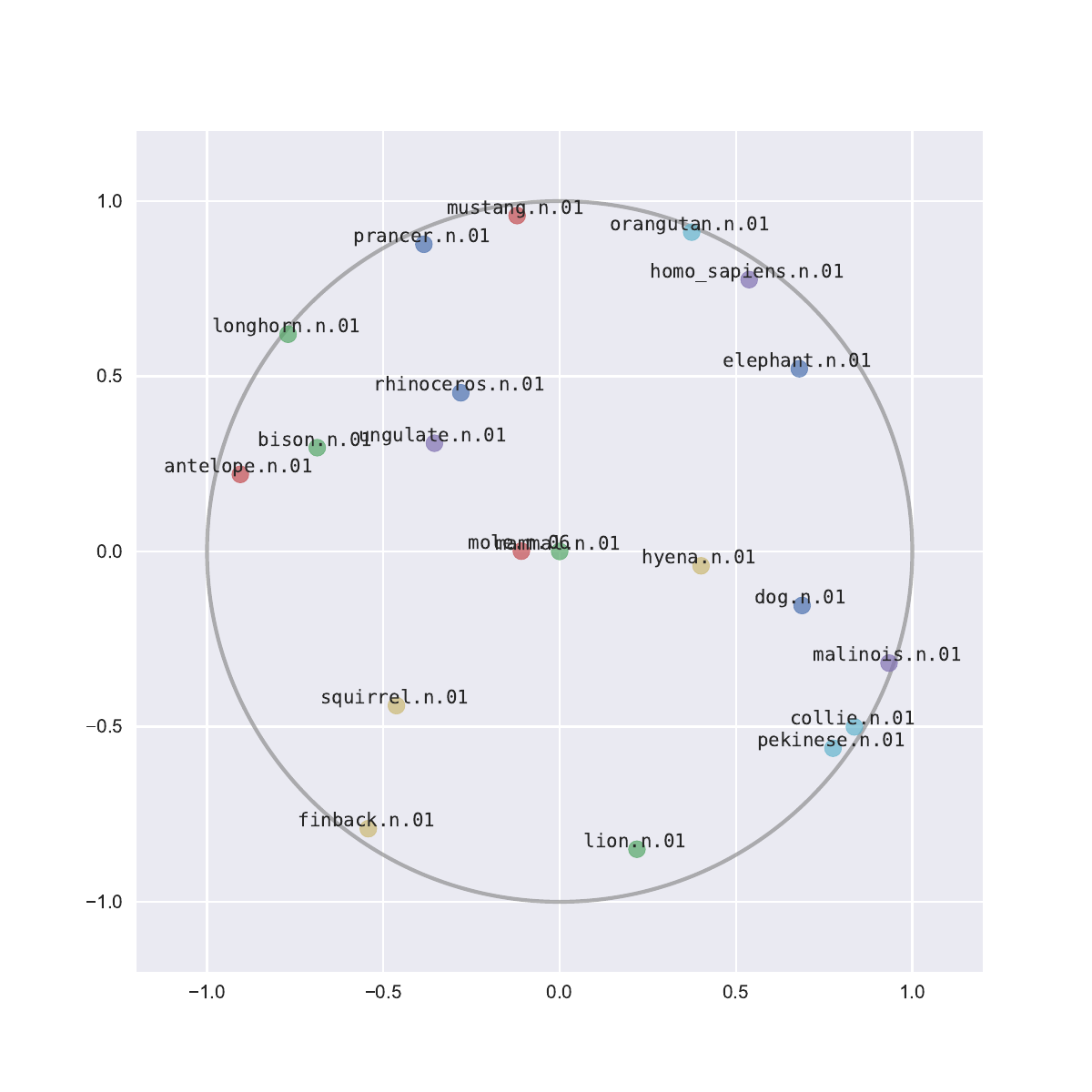}
    \caption{Epoch 20}
\end{subfigure}
\begin{subfigure}[t]{0.32\textwidth}
    \includegraphics[width=\textwidth]{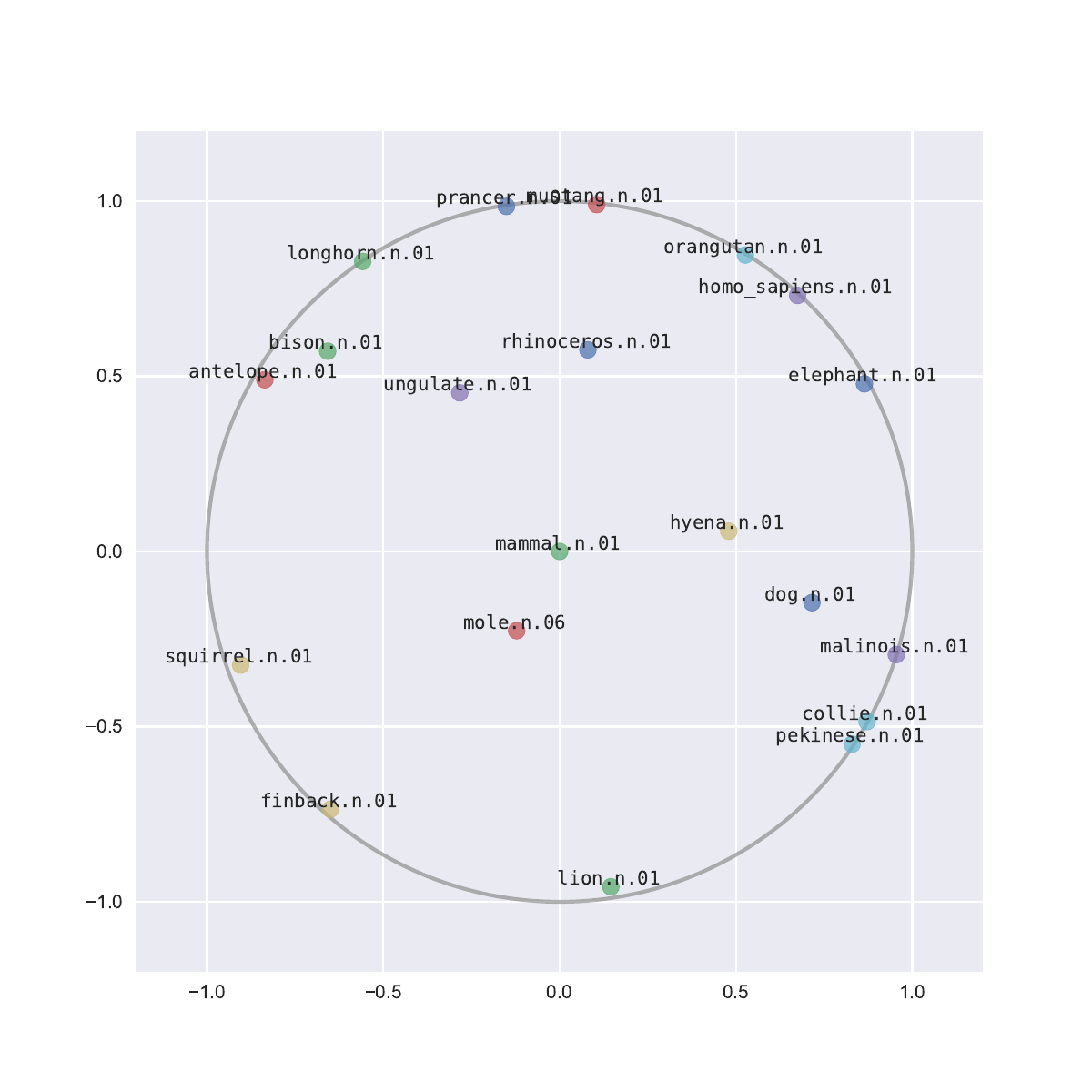}
    \caption{Epoch 50}
\end{subfigure}
\begin{subfigure}[t]{0.32\textwidth}
    \includegraphics[width=\textwidth]{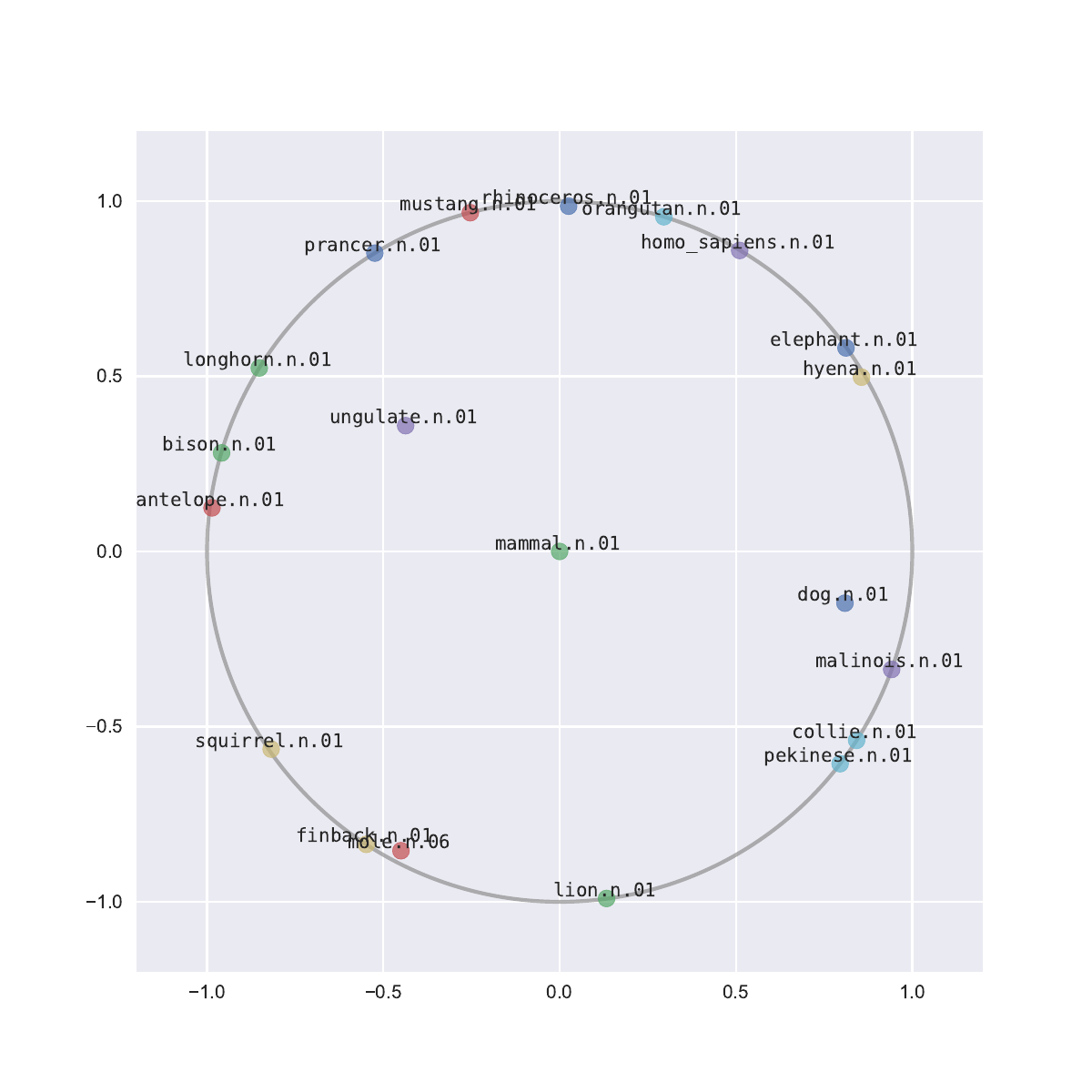}
    \caption{Epoch 150}
\end{subfigure}
    \caption{Training of Poincare embedding of \texttt{Mammal} subtree of \texttt{WordNet} at different epochs. We select 20 samples for visualization.}
    \label{fig:poincare-embedding}
\end{figure}

\cite{vlontzos2021topological} studied the persistent homology of Poincar\'{e} embeddings. It turns out that the persistence diagrams for VR filtrations capture limited information since the persistence of points are very small, indicating they are noise generated from filtrations. This behavior is explained by a phase transition for the connectivity of VR complexes built on points in the Poincar\'{e} ball. Roughly speaking, there is a critical threshold $r_c$ such that when $r<r_c$, the VR complex $\mathrm{VR}(\mathcal{X},r)$ mainly contains discrete 0-simplices, and when $r>r_c$ the VR complex $\mathrm{VR}(\mathcal{X},r)$ approaches a full dimensional simplex. The phase transition is sharper when the trained points are closer to the boundary of Poincar\'{e} ball. In fact, let $\mathbb{S}^{m-1}(r)\subseteq\mathbb{R}^m$ be the circle of radius $0<r<1$ centered at the origin. Measured with the Poincar\'{e} distance, the circle $\mathbb{S}^{m-1}(r)$ has diameter $2\mathrm{sinh}^{-1}\big(\frac{2r}{1-r^2}\big)$. Let $\bm{u}$ and $\bm{v}$ be two independent random $\mathbb{S}^{m-1}(r)$-valued random variables with uniform distribution. \cite{vlontzos2021topological} showed the cumulative distribution function (cdf) of the normalized Poincar\'{e} distance $\dfrac{1}{M}d_\mathbb{D}(\bm{u},\bm{v})$ is
$$
F_r(t) = \frac{2}{\pi}\sin^{-1}\bigg(\frac{1}{2}\mathrm{sinh}\bigg(\frac{tM}{2}\bigg)\bigg(\frac{1}{r}-r\bigg)\bigg).
$$
The cdf depends on the radius $r$. As $r\to 1$, the cdf converges to the Dirac distribution at 1. Thus, as the training process pushes points to the boundary of Poincar\'{e} ball, the points in the corresponding persistence diagram are also pushed to the boundary of the half-plane $\Omega$. As a result, we will only see topological noise in the persistence diagram.

We verify the above observation by testing the distribution of persistence diagrams of representation points and the distribution of persistence diagrams of random points near the boundary of $\mathbb{D}^m$. Let $\mathcal{X}_0$ be the set of representation points after performing the Poincar\'{e} embedding. Denote $\mu_0$ to be the underlying probability distribution of persistence diagrams from multiple subsampling of $\mathcal{X}_0$. Let $\mathcal{X}_1=\{u\in\mathbb{D}^m:r_1\leq \|u\|\leq r_2\}$ be a ``thin'' shell of $\mathbb{D}^m$ near the boundary such that $0<1-r_2<1-r_1\ll 1$. Denote $\mu_1$ to be the distribution of persistence diagrams from multiple subsampling of $\mathcal{X}_1$. We wish to test the null hypothesis $H_0:\mu_0=\mu_1$ that two distributions are equal against the alternative hypothesis $H_1:\mu_0\neq \mu_1$. Under the null hypothesis, the sampling of persistence diagrams does not depend on order, thus we may use a permutation test \citep{kim2022minimax}.

We sample $B_0$ persistence diagrams $\{D_1,\ldots,D_{B_0}\}$ from $\mu_0$ with mean persistence measure $\bar{D}_0=\frac{1}{B_0}\sum_{i=1}^{B_0}D_i$, and $B_1$ persistence diagrams $\{D_1^*,\ldots,D^*_{B_1}\}$ from $\mu_1$ with mean persistence measure $\bar{D}_1=\frac{1}{B_1}\sum_{i=1}^{B_1}D^*_i$. The observed distance between the two sample means is $\mathrm{OT}_{ob} = \mathrm{OT}_p(\bar{D}_0,\bar{D}_1)$. We then pool the persistence diagrams together $\{D_1,\ldots,D_{B_0},D_1^*,\ldots,D^*_{B_1}\}$ and shuffle the pooled data. If we permute the pooled data $L$ times, in the $\ell$th permutation, we compute the mean persistence measure of the first $B_0$ persistence diagrams and the mean persistence measure of the remaining $B_1$ persistence diagrams, and compute the distance $\mathrm{OT}_{\ell}$ between sample means. The $p$-value is the defined as
\begin{equation}\label{eq:pvalue}
 p = \frac{1+\# \{\mathrm{OT}_\ell\geq \mathrm{OT}_{ob}\} }{L}.   
\end{equation}

A small $p$-value indicates that the positions of data are not exchangeable and the original configuration is relevant to data, thus, we should reject the null hypothesis. If the $p$-value is greater than the significance level, then the evidence is not strong enough and we fail to reject the null hypothesis.

We apply the permutation test to the \texttt{Mammal} subtree of \texttt{WordNet} database. We train the Poincar\'{e} embedding of the database with different training epochs in dimension 2. After training, we take subsamples of the vector representation set, each consisting of $n=200$ points, and compute the persistence diagrams for VR filtrations based on the Poincar\'{e} distance \eqref{eq:poincare-distance}. Then we take samples of random points in the annulus $\mathcal{X}_1\subseteq\mathbb{D}^2$ with $r_1=0.98$ and $r_2=0.999$ and compute their persistence diagrams. We pool the persistence diagrams together and permute the pooled persistence diagrams 10,000 times. We compute 2-Wasserstein distances between mean persistence measures and the $p$-values according to \eqref{eq:pvalue}. The result is shown in Table \ref{tab:p-values}. Comparing each $p$-value with the significance level $\alpha=5\%$, we see that for small number of epochs of training, we obtain low $p$-values and thus reject the null hypothesis. However, after sufficiently many epochs of training and with representation points being pushed to the boundary of $\mathbb{D}^2$, the $p$-values are large and we fail to reject the null hypothesis. This means that statistically, it is difficult to distinguish persistence diagrams from the representation point sets with persistence diagrams from the random point sets in the annulus near the boundary of $\mathbb{D}^2$.  

\begin{table}[htbp]
\setlength{\extrarowheight}{5pt}
\newcolumntype{Y}{>{\centering\arraybackslash}X}
\centering
\begin{tabularx}{\textwidth}{|c|c|Y|Y|Y|Y|Y|Y|Y|Y|Y|}
\hline
\multicolumn{1}{|c|}{\multirow{2}{*}{Epoch}} & $B_0$ & 10       & 15       & 20       & 10       & 15       & 20       & 10       & 15       & 20 \\
\cline{2-11}
\multicolumn{1}{|c|}{}                       & $B_1$ & 10       & 10       & 10       & 15       & 15       & 15       & 20       & 20       & 20 \\
\hline
50                                         & $p$   & \textit{0.0001} & 0.0565         & \textit{0.0411}         & \textit{0.0001}         & \textit{0.0001}         &  \textit{0.0105}        &  \textit{0.0010}        &  \textit{0.0001}        & \textit{0.0001}   \\
\hline
150                                        &  $p$  & 0.3702 &  0.8752        &  0.7741        & 0.0784         &  0.2594        & 0.8233         & 0.1397         & 0.0847         & 0.0809   \\
\hline
200                                        & $p$   & 0.4761 & 0.8440 & 0.6313 & 0.1188 & 0.4448 & 0.9402 & 0.1913 & 0.1039 &  0.2175\\
\hline
\end{tabularx}
\caption{The $p$ values of permutation test. For each epoch we take $B_0$ subsample sets from the representation vectors and $B_1$ subsample sets from the annulus near the boundary of $\mathbb{D}^2$. The numbers in \emph{Italic} are lower than the significance level $5\%$.}
\label{tab:p-values}
\end{table}

\subsection{More Applications}

We further perform two additional applications, given in the supplementary materials. First, we use multiple subsampling to approximate the persistence diagrams of two massive point clouds with complex topology consisting of over 400,000 points. We argue that the mean persistence diagrams/measures reveal the underlying topological structures of the point clouds. Second, we apply multiple subsampling to a shape clustering task on a selected dataset from the Mechanical Components Benchmark (MCB). The dataset consists of 3D point clouds of size ranging from 30,000 points to 250,000 points, where a direct computation of persistent homology is not feasible. We show that mean persistence diagrams/measures are effective shape descriptors.


\section{Discussion}
\label{sec:disussion}
We studied a statistical approach to approximating the persistent homology for massive datasets based on a multiple subsampling approach.  We studied a broad class of vectorizations of persistence diagrams, persistence measures, and persistence diagrams themselves.  We derived nonasymptotic convergence rates for the empirical means of HCVs and persistence measures.  We provided guidelines for parameter selection as well as their interpretation.  Experimentally, we provided a numerical verification of our derived theoretical rates and applied multiple subsampling to study Poincar\'{e} embeddings of complex lexical databases.



\section*{Acknowledgments}

The authors wish to thank Omer Bobrowski, Th\'eo Lacombe, and Primo\v{z} \v{S}kraba for helpful conversations.  Y.C.~is funded by Digital Futures Postdoctoral Fellowship,  A.M.~is supported by the the UKRI EPSRC grant [EP/Y028872/1], Mathematical Foundations of Intelligence: An ``Erlangen Programme'' for AI.



\bibliographystyle{apalike}  
\bibliography{subsampling_ref} 

\vfill\eject



\appendix

\startcontents[app]
\printcontents[app]{l}{1}{\section*{Supplementary Materials}}

\section{Background on Persistent Homology}
\label{app:PH}

The progression of persistent homology starts with a \emph{filtration}, which is a nested sequence of topological spaces: $X_0 \subseteq X_1 \subseteq \cdots \subseteq X_n = X$.  In particular, filtrations can be applied to both point clouds and functions.

For point clouds, we focus on \emph{\v{C}ech} filtrations and \emph{Vietoris--Rips} (VR) filtrations. Let $\epsilon_1\le\epsilon_2\le\cdots\le \epsilon_n$ be an increasing sequence of parameters; let $\mathcal{X}\subseteq \mathbb{R}^m$ be a point cloud. The \emph{\v{C}ech complex} $\mathrm{\check{C}ech}(\mathcal{X},\epsilon_i)$ at scale $\epsilon_i$ is constructed by adding a $k$-simplex for each nonempty intersection of $k+1$ balls of radius $\epsilon_i$. For $\epsilon_i\le \epsilon_j$, we have $\mathrm{\check{C}ech}(\mathcal{X},\epsilon_i)\subseteq\mathrm{\check{C}ech}(\mathcal{X},\epsilon_j)$, thus giving the \emph{\v{C}ech filtration}. The \emph{VR complex} is a relaxed version of \v{C}ech complex: At scale $\epsilon_i$, the VR complex $\mathrm{VR} (\mathcal{X},\epsilon_i)$ is constructed by adding a node for each $x_j\in\mathcal{X}$ and a $k$-simplex for each set $\{x_{j_1},x_{j_2},\ldots,x_{j_{k+1}}\}$ with diameter less than $\epsilon_i$. Note that VR filtrations can be applied to any finite metric spaces $(\mathcal{X},d_{\mathcal{X}})$ without embedding into Euclidean space. 

For functions, we focus on (sub)level set filtrations. A function $f:\mathcal{X}\to\mathbb{R}$ defined on some topological space $\mathcal{X}$ is called \emph{tame} if it has finitely many topological critical values, i.e., the level set $f^{-1}((-\infty,\epsilon])$ changes topology at only finitely many $\epsilon$. For critical values $\epsilon_i\le\epsilon_j$, we have $f^{-1}((-\infty,\epsilon_i])\subseteq f^{-1}((-\infty,\epsilon_j])$, which gives the \emph{level set filtration}. 

A filtration of topological spaces induces a filtration of homology groups in the following manner. For any fixed dimension $\bullet$, let $H_\bullet(\mathcal{X},\epsilon_i)$ be the homology group of the \v{C}ech or VR complex or level set at scale $\epsilon_i$ with coefficients in a field. Then we have the following sequence of vector spaces:
$$
    H_\bullet(\mathcal{X},\epsilon_1)\to H_\bullet(\mathcal{X},\epsilon_2)\to\cdots\to H_\bullet(\mathcal{X},\epsilon_n).
$$
The collection of vector spaces $H_\bullet(\mathcal{X},\epsilon_i)$, together with vector space homomorphisms $H_\bullet(\mathcal{X},\epsilon_i)\to H_\bullet(\mathcal{X},\epsilon_j)$, is called a \emph{persistence module}. 

\begin{figure}[htbp]
    \begin{subfigure}{\linewidth}
        \includegraphics[width=\linewidth]{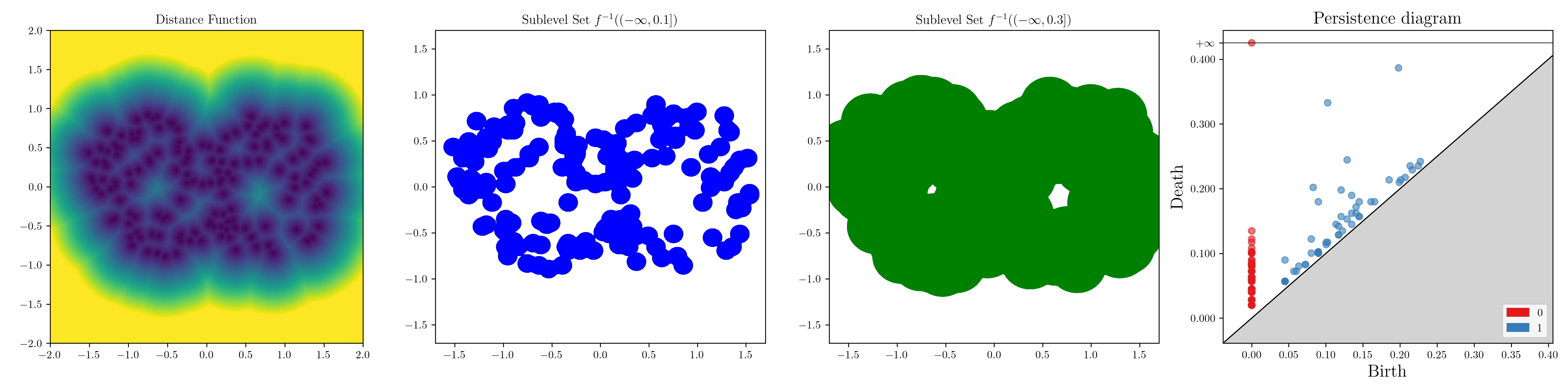}
        \caption{Sublevel set filtration}
    \end{subfigure}
    \begin{subfigure}{\linewidth}
        \includegraphics[width=\linewidth]{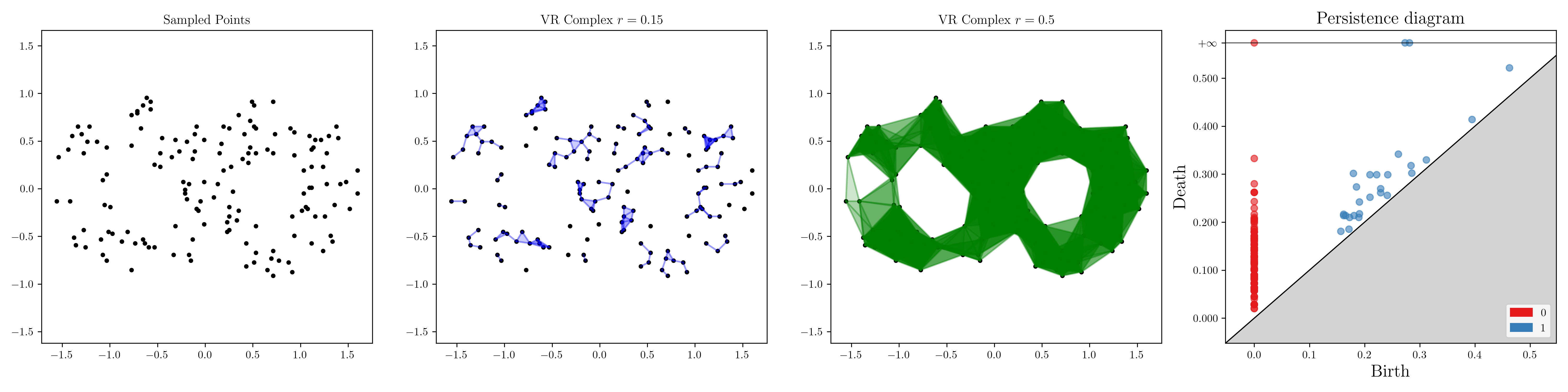}
        \caption{Vietoris--Rips filtration}
    \end{subfigure}
    \caption{Illustration of filtrations and persistence diagrams. (a) Sublevel set filtration of the distance function to a figure eight with Gaussian noise. (b) Vietoris--Rips filtration of samples from a figure eight with Gaussian noise.}
    \label{fig:pipeline}
\end{figure}

When each $H_\bullet(\mathcal{X},\epsilon_i)$ is finite dimensional, the persistence module can be decomposed into rank one summands which correspond to birth and death times of homology classes \citep{chazal2016structure}: Let $\alpha\in H_\bullet(\mathcal{X},\epsilon_i)$ be a nontrivial homology class; $\alpha$ is \emph{born} at $\epsilon_i$ if it is not in the image of $H_\bullet(\mathcal{X},\epsilon_{i-1})\to H_\bullet(\mathcal{X},\epsilon_i)$; it is \emph{dead} entering $\epsilon_j$ if the image of $\alpha$ via $H_\bullet(\mathcal{X},\epsilon_i)\to H_\bullet(\mathcal{X},\epsilon_{j-1})$ is not in the image $H_\bullet(\mathcal{X},\epsilon_{i-1})\to H_\bullet(\mathcal{X},\epsilon_{j-1})$, but the image of $\alpha$ via $H_\bullet(\mathcal{X},\epsilon_i)\to H_\bullet(\mathcal{X},\epsilon_{j})$ is in the image $H_\bullet(\mathcal{X},\epsilon_{i-1})\to H_\bullet(\mathcal{X},\epsilon_{j})$. The collection of birth--death intervals $[\epsilon_i,\epsilon_j)$ is called a \emph{barcode} and it represents the persistent homology of VR filtration of $\mathcal{X}$. Equivalently, we can regard each interval as an ordered pair of birth--death as coordinates and plot each in a plane $\mathbb{R}^2$, which provides an alternate representation of a barcode as a \emph{persistence diagram}.

\section{Background on Multiple Subsampling}
\label{app:stat_setup}


Let $\mathcal{X}$ be an unknown metric space with a predefined probability distribution $\pi$. We would like to approximate the persistent homology of $\mathcal{X}$ via the persistent homology of sample sets $S_N=\{X_1,\ldots,X_N\}$ where $X_i,i=1,\ldots,N,$ are independent and identically distributed (i.i.d.) samples drawn from $\pi$ on $\mathcal{X}$. 

\cite{chazal-2014-convergence} prove that under a certain assumption on the distribution $\pi$, the persistent homology of $S_N$ consistently approximates the persistent homology of $\mathcal{X}$. This is assumption is the $(a,b,r_0)$-{\em standard assumption} and provides a guarantee that the probability space is ``sufficiently well-behaved.''

\begin{definition}
Let $\mathcal{X}$ be a compact metric space and $\pi$ be a probability distribution. The measure $\pi$ is said the satisfy the $(a,b,r_0)$-{\em standard assumption} if there exists $a,b>0$ and $r_0\ge 0$ such that for any $x\in\mathcal{X}$ and $r>r_0$,
\begin{equation}\label{eq:abr0}
    \pi(\mathcal{B}(x,r))\ge \min(1,ar^b),
\end{equation}
where $\mathcal{B}(x,r)$ is the metric ball centered at $x$ with radius $r$. For $r_0=0$, this is called the $(a,b)$-{\em standard assumption.}
\end{definition}

Intuitively, the parameter $r_0$ distinguishes the setting between continuous spaces and discrete spaces. In fact, when the underlying space $\mathcal{X}$ is discrete, the ball centered around each point $\mathcal{B}(x,r)$ is a singleton when $r$ is smaller than the critical value $r_0$. The volume of the ball is constant and should not increase in the form of $ar^b$. 

For a continuous underlying space, $r_0$ is by default set to be 0. The $(a,b)$-standard assumption is used in random set estimation as a condition to prevent a probability measure from being too singular \citep{cuevas_boundary_2004,set_estimation}. Essentially, the condition implies that the volume (or probability) of a metric ball should grow as a $b$-dimensional Euclidean ball.


For a discrete underlying space, the $(a,b,r_0)$-standard assumption provides a similar intuition as in the continuous case, however, there is a threshold required for \eqref{eq:abr0} to hold. If $\mathcal{X}$ is a discrete set sampled from a probability measure satisfying the $(a,b)$-standard assumption, then the discrete measure on $\mathcal{X}$ satisfies $(a,b,r_0)$-standard assumption with $\displaystyle r_0 = O\bigg(\frac{\log N}{N}\bigg)^{1/b}$, where $N$ is the number of points in $\mathcal{X}$ \citep{chazal-2015-subsampling}. Therefore when $\mathcal{X}$ is a massive, dense point cloud, it is reasonable to assume that $r_0$ is a fixed small constant. The estimation error introduced by $r_0$ is usually considered to be negligible with respect to the estimation error generated by subsampling.   


\paragraph{The One-Sample Case.} Let $D[S_N]$ be the persistence diagram of the VR filtration of the dataset $S_N$ and $D[\mathcal{X}]$ be the persistence diagram of $\mathcal{X}$.  \cite{chazal-2014-convergence} proved the following result, which provides guarantees on how well the persistent homology of a single sample drawn from a large dataset approximates the persistent homology of the large dataset.

\begin{theorem}{\citep[Theorem 2]{chazal-2014-convergence}}
    Suppose $\mathcal{X}$ is a metric space and $\pi$ is any probability measure satisfying the $(a,b)$-standard assumption. Then
    $$
    \mathbb{E}[\mathrm{W}_\infty(D[S_N],D[\mathcal{X}])]\le C(a,b)\bigg(\frac{\log N}{N}\bigg)^{1/b} ,
    $$
    where the constant $C(a,b)$ only depends on $a$ and $b$. 
\end{theorem}

In practice, however, when the size of the sample $N\gg 1$ is very large, it is not feasible to compute the persistent homology of $S_N$. 

\paragraph{Multiple Subsamples.} To bypass the high computational complexity of persistent homology, \cite{chazal-2015-subsampling} adopt the following multiple subsampling method for persistent homology, which we also follow in our work: Instead of sampling a single large data set $S_N$ only once, instead, we may sample $B$ subsets $S_n^{(1)},\ldots,S_n^{(B)}$, each consisting of $n\ll N$ i.i.d.~samples from $\pi$. The persistent homology of $\mathcal{X}$ may then be approximated using the ``average'' persistent homology of $S_n^{(j)},j=1,\ldots,B$. An important challenge here lies in computing the average persistent homology. 

Due to the intrinsic nonlinearity of persistence diagrams, it is technically much easier to compute averages of linear representations of persistence diagrams rather than persistence diagrams themselves. This is the approach taken by \cite{chazal-2015-subsampling}, where they studied the convergence of the average of {\em persistence landscapes} of subsamples drawn from the large dataset, denoted by $\lambda$.  Essentially, as vectorizations of persistent homology, persistence landscapes $\lambda$ are nothing other than functions defined on the real line; more precisely, persistence landscapes are functional representations of persistence diagrams living in Banach space.  Therefore, for each sample set $S_n^{(j)}$, if $\lambda_{S_n^{(j)}}$ is a persistence landscape, we can obtain the mean persistence landscape $\bar{\lambda}=\frac{1}{B}\sum_{j=1}^B\lambda_{S_n^{(j)}}$, by simply taking the pointwise average.  This can be regarded as an approximation of the true persistence landscape $\lambda_{\mathcal{X}}$ of the space $\mathcal{X}$. \cite{chazal-2015-subsampling} showed that the mean persistence landscape, as an approximation, converges to the true persistence landscape at the following rate.
\begin{theorem}{\citep[Theorem 9]{chazal-2015-subsampling}}\label{thm:landscape-subsampling}
    Suppose $\mathcal{X}$ is a metric space and $\pi$ is a probability measure satisfying the $(a,b,r_0)$-standard assumption, then
\begin{equation}\label{eq:landscape-rate}
        \mathbb{E}[\|\bar{\lambda}-\lambda_{\mathcal{X}}\|_\infty]\le r_0 + r_n \bm{1}_{\{r_n>r_0\}} + C_1\frac{r_n}{(\log n)^2} + C_2\frac{1}{\sqrt{B}} ,
\end{equation}
where $\displaystyle r_n=2\bigg(\frac{\log n}{an}\bigg)^{1/b}$, and $C_1,C_2$ are constants that only depend on $a,b$. 
\end{theorem}

Note that \eqref{eq:landscape-rate} holds for both continuous and discrete underlying spaces. For a continuous space, or a dense sampling set where $r_0$ is sufficiently small, the estimation \eqref{eq:landscape-rate} is dominated by
$$
\mathbb{E}[\|\bar{\lambda}-\lambda_{\mathcal{X}}\|_\infty]\le O\bigg(\bigg(\frac{\log n}{n}\bigg)^{1/b}\bigg) + O\bigg(\frac{1}{\sqrt{B}}\bigg) .
$$

\section{Averaging Persistence Diagrams}
\label{sec:ph_means}

The multiple subsampling approach to obtain a representative of persistent homology for a massive dataset involves computing the mean of persistent homology computed from multiple subsamples.  It turns out that the concept of a mean and its computation is not straightforward in persistent homology.  

For vectorized persistence diagrams, the mean is directly computable: the arithmetic mean can be computed due to linearity and a Bochner integral can be computed to give the expectation.  It is currently unknown whether the inverse problem of getting back a single persistence diagram from an average vectorization is solvable. Persistence diagrams can be recovered from persistence landscapes \citep{betthauser2022graded}; the mapping from persistence diagrams to persistence landscapes is invertible. In general, however, average persistence landscapes do not accurately represent standard persistence landscapes that represent persistence diagrams, which means that it is likely that this inverse result is not applicable to the existing subsampling results involving persistence landscapes. Under certain assumptions, it is possible to reconstruct the set of persistence diagrams from its average persistence diagram, i.e., the mapping $(D_1,\ldots,D_B)\mapsto \bar{\lambda}$ is invertible \citep{bubenik2020persistence}. However, this algorithm returns a set of persistence diagrams rather than a single persistence diagram which encodes the statistical information.

\subsection{Fréchet Means of Persistence Diagrams}
A generalization of the mean to arbitrary metric spaces is known as the {\em Fréchet mean}; Fréchet means may be defined and computed in the space of persistence diagrams.  Let $\rho$ be a probability measure on $\mathcal{D}_p$. The {\em Fréchet function} with respect to $\rho$ is 
$$
\mathrm{Fr}_\rho(D) = \int_{\mathcal{D}_p}\mathrm{W}_{p}^{p}(D,Z)\diff{\rho(Z)}. $$ 
The infimum of $\mathrm{Fr}_\rho$ is called the {\em Fréchet variance}, and the set of minimizers achieving the infimum is called the \emph{Fréchet expectation}. When $\hat{\rho}=\frac{1}{B}\sum_{i=1}^B \delta_{D_i}$ is the empirical probability measure supported on a finite set of persistence diagrams $\{D_1,\ldots,D_B\}$, the Fréchet function becomes 
\begin{equation}\label{eq:fre-opt}
    \mathrm{Fr}_{\hat{\rho}}(D) = \frac{1}{B}\sum_{i=1}^B\mathrm{W}_{p}^{p}(D,D_i).
\end{equation}
Any minimizer of $\mathrm{Fr}_{\hat{\rho}}$ is a {\em Fréchet mean} of the set of persistence diagrams $\{D_1,\ldots,D_B\}$. When $p=2$, if $\rho$ has finite second moment and if $\rho$ has compact support, then the Fréchet expectation exists \citep{ohta2012barycenters,mileyko-2011-probability}. 

To compute Fréchet means of persistence diagrams, under the Alexandrov geometry of $(\mathcal{D}_2,\mathrm{W}_{2})$, \cite{turner-2014-frechet} proposed a greedy algorithm to find local minima of the Fréchet function with respect to empirical probability measures, using a variant of the Hungarian algorithm. The Fréchet function is not convex on the space $\mathcal{D}_2$ which means that often only local minima are achieved and it is difficult to find global minima.  Additionally, since the algorithm does not specify an initialization rule and arbitrary persistence diagrams are chosen as starting points, different initializations tend to yield different results resulting in an unstable performance of the algorithm in finding local minima. 
\cite{lacombe2018large} subsequently proposed a method to compute the optimal partial transport distance $\ot_{p}(\cdot,\cdot)$ using entropic regularization and the sinkhorn algorithm, which was used to solve a relaxed version of the optimization problem \eqref{eq:fre-opt}.   
Given a probability distribution on $\mathcal{D}_p$, there is currently no condition to guarantee that the Fréchet expectation is unique (nor on the larger space $\mathcal{M}_p$).  \cite{divol-2019-understanding} proved that for any probability distribution $\rho$ supported on $\mathcal{M}_p$ with finite $p$th moment, the Fréchet expectation with respect to $\rho$ is a nonempty compact convex subset. Furthermore, if $\rho$ is supported on a finite set of persistence diagrams, the Fréchet means form a convex set in $\mathcal{M}_p$ whose extreme points are in $\mathcal{D}_p$. 

\subsection{Mean Persistence Measures}
In the setting of persistence measures, a notion of centrality or mean also exists.  Let $D_1,\ldots,D_B$ be a set of persistence diagrams.   When viewed as persistence measures, the arithmetic mean $\bar{D} = \frac{1}{B}\sum_{i=1}^B D_i$ is simply the empirical mean. For any Borel set in the upper half plane, $A\subseteq \Omega$, the measure $\bar{D}(A)$ gives the average number of points in each persistence diagram within $A$. 

We may obtain a notion for expected persistence diagrams in a similar manner: Let $\bm{D}$ be an $\mathcal{M}_p$-valued random variable with probability distribution $\rho$. We can define the expectation of $\bm{D}$ in a way such that $\mathbb{E}_\rho[\bm{D}]$ is a deterministic Radon measure on $\Omega$, and for any Borel set $A\subseteq \Omega$, 
$$
\mathbb{E}_\rho[\bm{D}](A) = \mathbb{E}_\rho[\bm{D}(A)].
$$ 
When $\bm{D}$ takes values in $\mathcal{D}_p$, the measure $\mathbb{E}_\rho[\mathbf{D}](A)$ gives the expected number of points in persistence diagrams sampled from $\bm{D}$ within $A$. In this case, the expectation $\mathbb{E}_\rho[\bm{D}]$ is the {\em expected persistence diagram} \citep{divol-2021-estimation,divol2019density}.  Note here that as opposed to the setting of Fréchet means, considering persistence measures in $\mathcal{M}_p$ provides a direct approach for computing averages of persistence diagrams.

Rigorously, let $\mathcal{M}_f$ be the space of finite Radon measures on $\Omega$. $\mathcal{M}_f$ is Borel isomorphic to $\mathcal{M}_p$ \citep{divol-2019-understanding}. Let $\mathcal{M}_\pm$ be the space of finite signed Radon measures of bounded variation where the dual bounded Lipschitz norm $\|\cdot\|_{BL_*}$ can be defined. The completion of the space $(\mathcal{M}_\pm,\|\cdot\|_{BL_*})$ is a Banach space \citep{hille2021equivalence}. Every random variable $\bm{D}$ valued in $\mathcal{M}_p$ is also considered as a random variable valued in $\mathcal{M}_\pm$. Since $\|\rho\|_{BL_*}$ is finite for every $\rho\in\mathcal{M}_f$, $\bm{D}$ is Bochner integrable \citep{nonnenmacher1995new}. Thus, the expected persistence diagram $\mathbb{E}[\bm{D}]$ can be identified as the Bochner integral of $\bm{D}$ in $\mathcal{M}_\pm$.

\section{Stability Theorems}
\label{app:stability}

The bounded perturbation of persistence diagrams when the input data are perturbed is a crucial property used in our proofs.  Here, we overview stability results in persistent homology.

The first stability result for persistent homology was established by \cite{cohen2007stability}, who showed that
for two continuous tame functions on a triangulable space, the bottleneck distance between two persistence diagrams is bounded by the max-norm distance between two functions. 
It was generalized to $p$-Wasserstein distance between two persistence diagrams, though in a much more complicated form, by placing additional assumptions on the triangulable space \citep{cohen-2010-lipschitz}. 

\begin{definition}
    A metric space $\mathcal{X}$ is said to have \emph{bounded $k$-total persistence} if there is a constant $C_\mathcal{X}$ that depends only on $\mathcal{X}$ such that $\mathrm{W}_k(D[f],D_\emptyset)\le C_\mathcal{X}$ for every tame function $f: \mathcal{X}\to\mathbb{R}$ with Lipschitz norm $\|f\|_{\mathrm{Lip}}\le 1$.
\end{definition}

From this definition, Wasserstein stability for functions can be stated as follows.

\begin{theorem}{\citep[Section 3]{cohen-2010-lipschitz}}\label{thm:Wasser-functions}
    Let $\mathcal{X}$ be a compact triangulable metric space with bounded $k$-total persistence for some $k\ge 1$. Let $f,g:\mathcal{X}\to\mathbb{R}$ be two tame Lipschitz functions. Then, for any $p\ge k$,
    $$
    \mathrm{W}_p(D[f],D[g])\le C_\mathcal{X}^{\frac{1}{p}}\max\{\|f\|_{\mathrm{Lip}},\, \|g\|_{\mathrm{Lip}}\}\|f-g\|_\infty^{1-\frac{k}{p}}.
    $$
\end{theorem}

These two classical stability theorems were then generalized to many other settings. From an algebraic viewpoint, the bottleneck distance may first be bounded by the interleaving distance between persistence modules, which is then bounded by max-norm distance between functions meaning that it suffices to consider the stability of persistence diagrams with respect to perturbations of persistence modules \citep{chazal2009proximity}. Generalizations to other notions of persistence---including uniparameter, multiparameter, and zigzag---persistence modules have then been established \citep{chazal2016structure,lesnick2015theory,botnan2018algebraic}. Generalized stability in categorical settings have also been studied by \cite{bubenik2014categorification,bubenik2018algebraic,bauer2020persistence}.


In geometric settings, the relevant notion of stability of persistence diagrams is that with respect to the {\em Gromov--Hausdorff distance} between metric spaces. 

\begin{definition}
Let $\mathcal{X}$ and $\mathcal{Y}$ be two sets. A {\em correspondence} is a set $\mathbf{C}\subseteq\mathcal{X}\times\mathcal{Y}$ such that for any $x\in\mathcal{X}$ there is some $y\in\mathcal{Y}$ with $(x,y)\in\mathbf{C}$, and for any $y\in\mathcal{Y}$ there is some $x\in\mathcal{X}$ with $(x,y)\in\mathbf{C}$. The set of all correspondences between $\mathcal{X}$ and $\mathcal{Y}$ is denoted by $\mathbf{C}(\mathcal{X},\mathcal{Y})$. 
\end{definition}

\begin{definition}
Let $(\mathcal{X},d_\mathcal{X})$ and $(\mathcal{Y},d_\mathcal{Y})$ be two compact metric spaces. The {\em Gromov--Hausdorff distance} between them is defined by
\begin{equation}
    \mathrm{GH}((\mathcal{X},d_\mathcal{X}),(\mathcal{Y},d_\mathcal{Y})) = \frac{1}{2}\inf_{\mathbf{C}}\bigg\{\sup_{(x,y),(x',y')\in\mathbf{C}}|d_{\mathcal{X}}(x,x')-d_{\mathcal{Y}}(y,y')|\bigg\},
\end{equation}
where $\mathbf{C}$ ranges over all correspondences in $\mathbf{C}(\mathcal{X}, \mathcal{Y})$. 
\end{definition}

For finite metric spaces, the stability theorem established by \cite{chazal2009gromov} certifies that the bottleneck distance between two persistence diagrams is bounded by the Gromov--Hausdorff distance between two finite metric spaces: 

\begin{theorem}{\cite[Theorem 3.1]{chazal2009gromov}}\label{thm:bottleneck-stability-general}
Let $(\mathcal{X},d_\mathcal{X})$ and $(\mathcal{Y},d_\mathcal{Y})$ be two finite metric spaces. Then
   $$
    \mathrm{W}_{\infty}(D[(\mathcal{X},d_\mathcal{X})],\, D[(\mathcal{Y},d_\mathcal{Y})])\le 2\mathrm{GH}((\mathcal{X},d_\mathcal{X}),(\mathcal{Y},d_\mathcal{Y})).
$$ 
\end{theorem}

As a corollary, when $\mathcal{X}$ and $\mathcal{Y}$ are subsets in Euclidean spaces, we have the following result.

\begin{corollary}\label{thm:bottleneck-stability}
    Let $\mathcal{X},\mathcal{Y}\subseteq \mathbb{R}^m$ be two compact sets, and $D[\mathcal{X}],D[\mathcal{Y}]$ be corresponding persistence diagrams of \v{C}ech or VR filtrations. Then
    $$
\mathrm{W}_\infty(D[\mathcal{X}],\, D[\mathcal{Y}])\le 2\mathrm{H}_\infty(\mathcal{X},\mathcal{Y}),
    $$
    where $\mathrm{H}_\infty$ is the Hausdorff distance in $\mathbb{R}^m$.
\end{corollary}

The generalization of Wasserstein stability to (finite) metric spaces and VR filtrations is much more complicated. As remarked by \cite{skraba-2020-wasserstein}, Theorem \ref{thm:Wasser-functions} appears to be one of the most misunderstood results where common errors or misuse can easily arise, such as applying the theorem for small $p$ on high dimensional data, or applying the theorem to VR filtrations. In the case of finite $p<\infty$ and $p$-Wasserstein distances, to date, there exists no Wasserstein stability result for general (finite) metric spaces.  For finite metric spaces that are Euclidean point clouds, i.e., subsets of an ambient Euclidean space $\mathbb{R}^m$, \cite{skraba-2020-wasserstein} certify that the $p$-Wasserstein distance between persistence diagrams for VR filtrations is bounded by the $p$-Hausdorff distance between two point clouds.

\begin{definition}
Let $\mathcal{X},\mathcal{Y}\subseteq\mathbb{R}^m$ be two finite sets in Euclidean space. For $1\le p<\infty$, the \emph{$p$-Hausdorff distance} between $\mathcal{X}$ and $\mathcal{Y}$ is  
$$
    \mathrm{H}_p(\mathcal{X},\mathcal{Y}) = \inf_{\mathbf{C}}\Bigg(\sum_{(x,y)\in\mathbf{C}}\|x-y\|_2^p\Bigg)^{\frac{1}{p}},
$$
where $\mathbf{C}$ ranges over all correspondences in $\mathbf{C}(\mathcal{X}, \mathcal{Y})$. For $p=\infty$, the \emph{Hausdorff distance} between $\mathcal{X}$ and $\mathcal{Y}$ is
$$
\mathrm{H}_\infty(\mathcal{X},\mathcal{Y}) = \inf_\mathbf{C}\sup_{(x,y)\in\mathbf{C}}\|x-y\|_2,
$$
where $\mathbf{C}$ ranges over all correspondences in $\mathbf{C}(\mathcal{X}, \mathcal{Y})$.

\end{definition}

\begin{theorem}{\cite[Theorem 6.10]{skraba-2020-wasserstein}}
\label{thm:wasserstein-point-clouds}
Let $\mathcal{X},\mathcal{Y}\subseteq\mathbb{R}^m$ be two finite sets in Euclidean space, and $D[\mathcal{X}],D[\mathcal{Y}]$ be corresponding persistence diagrams of VR filtrations. Assume the number of points in both sets are bounded by $N$, and $1\le p<\infty$. There exists a constant $C_N$ that only depends on $N$ such that
\begin{equation}
\label{eq:wass_stability}
    \mathrm{W}_p(D[\mathcal{X}],D[\mathcal{Y}])\le C_N^{1/p}\mathrm{H}_p(\mathcal{X},\mathcal{Y}).
\end{equation}
\end{theorem}


The assumption of a uniform bound $N$ on the number of points can be relaxed from Theorem \ref{thm:wasserstein-point-clouds} which then results in a constant depending on the dimension of the Euclidean space and the dimension of homology. However, the question of whether the constant is finite for all dimensions is still open \citep[Conjecture 6.26]{skraba-2020-wasserstein}. In our work, we assume a uniform bound on the number of points. 



\section{Technical Proofs for H\"{o}lder Continuous Vectorizations}
\label{app:holder}



The following two properties of persistence landscapes are key for us to generalize in our proof:
\begin{itemize}
     \item \textbf{Central Limit Theorem in Banach Spaces}: The sampling of data $S_n^{(1)},\ldots,S_n^{(B)}$ induces a sampling of $\lambda_{S_n^{(1)}},\ldots, \lambda_{S_n^{(B)}}$ on the space of persistence landscapes from the function-valued random variable $\bm{\lambda}$. The mean persistence landscape $\bar{\lambda}$ converges to the population persistence landscape $\mathbb{E}[\bm{\lambda}]$ due to the LLN and the CLT on Banach spaces \citep{hoffmann1976law}.
      \begin{theorem}{\citep{hoffmann1976law}}\label{thm:CLT-banach}
          Assume $(\mathcal{V},\|\cdot\|)$ is a Banach space of type 2. Let $\bm{x}$ be a $\mathcal{V}$-valued random variable such that $\mathbb{E}[\bm{x}]=0$ and $\mathbb{E}[\|\bm{x}\|^2]<\infty$, and let $x_1,\ldots,x_B$ be i.i.d.~samples from $\bm{x}$, then $\frac{1}{\sqrt{B}}(x_1+\ldots+x_B)$ converges weakly to a Gaussian random variable with the same covariance structure as $\bm{x}$. 
     \end{theorem}
    
     \item \textbf{Stability of Persistence Landscapes}: Let $D$ and $D'$ be two persistence diagrams and $\lambda$ and $\lambda'$ be the corresponding persistence landscapes.  It is known that
     $\|\lambda-\lambda'\|_\infty\le\mathrm{W}_\infty(D,D')$ \citep{bubenik2015statistical}, so estimating the bias reduces to estimating the Hausdorff distance via the stability of persistence diagrams (Theorem \ref{thm:bottleneck-stability}).  
 \end{itemize}

Adapting these observations to the case of HCVs allows us to extend the results of \cite{chazal-2015-subsampling}.  The technical details for our adaptation are given in this section.

We begin by providing a technical background on the CLT and LIL in Banach spaces which allow us to arrive at our results on the variance component for HCVs. Then we adapt the proof in \cite{chazal-2015-subsampling} to bias estimation of HCVs. We finish with a technical discussion on examples of HCVs.

\subsection{Background on Limit Theorems in Banach Spaces}
\label{app:holder_banach}

Let $(\mathcal{V},\|\cdot\|)$ be a separable Banach space and $(\mathcal{V}',\|\cdot\|')$ be its dual Banach space. Let $\bm{Z}$ be an $\mathcal{V}$-valued random variable. Without loss of generality we assume $\bm{Z}$ has zero mean. Let $\bm{Z}_1,\ldots,\bm{Z}_B$ be $B$ i.i.d.~copies of $\bm{Z}$ and set $\bm{W}_B:=\bm{Z}_1+\ldots+\bm{Z}_B$. For Banach space-valued random variables, we have the following definitions.

\begin{definition}
\begin{enumerate}
    \item A $\mathcal{V}$-valued random variable $\bm{G}$ is \emph{Gaussian} if for any $f\in \mathcal{V}'$, $f(\bm{G})$ is a real valued Gaussian random variable;
    \item A $\mathcal{V}$-valued random variable $\bm{Z}$ \emph{satisfies the CLT} if there exists a Gaussian, $\mathcal{V}$-valued random variable $\bm{G}$ with zero mean such that $\bm{W}_B/\sqrt{B}$ converges to $\bm{G}$ in distribution as $B\to \infty$.
\end{enumerate}
\end{definition}

In the classical setting, a common way to determine whether $\bm{Z}$ satisfies the CLT is to use the moment conditions $\mathbb{E}[\bm{Z}]=0$ and $\mathbb{E}[\|\bm{Z}\|^2]<\infty$: if $\mathcal{V}$ is finite dimensional, the moment conditions are sufficient and necessary conditions for $\mathcal{V}$ to satisfy the classical CLT \citep[Theorem 27.1]{billingsley2012probability}. For general Banach spaces, the moment conditions are neither sufficient nor necessary. In fact, if every $\mathcal{V}$-valued random variable satisfying the moment conditions also satisfies the CLT, the Banach space $\mathcal{V}$ is restricted to a special type.

\begin{definition}
    A Banach space $\mathcal{V}$ is \emph{of type} $p\in[1,2]$ if there exists a constant $C_{\mathcal{V}}>1$ such that for any sequence $v_1,\ldots,v_n\in\mathcal{V}$,
    $$
    \mathbb{E}[\|\epsilon_1v_1+\cdots+\epsilon_nv_n\|^p]\le C^p_{\mathcal{V}}(\|v_1\|^p+\cdots+\|v_n\|^p),
    $$
    where $\epsilon_1,\ldots,\epsilon_n$ is a sequence of independent Rademacher random variables, i.e., $\mathbb{P}[\epsilon_i=1]=\mathbb{P}[\epsilon_i=-1]=1/2$.
\end{definition}

This definition gives rise to the following result, which indicates what type of Banach space we are working in when Banach space-valued random variables satisfy the moment conditions in the classical sense.

\begin{theorem}{\cite[Theorem 3.5]{hoffmann1976law}}\label{thm:type-2}
    A Banach space $\mathcal{V}$ is of type 2 if and only if any $\mathcal{V}$-valued random variable $\bm{Z}$ with $\mathbb{E}[\bm{Z}]=0$ and $\mathbb{E}[\|\bm{Z}\|^2]<\infty$ satisfies the CLT.
\end{theorem}

An example of a nontrivial Banach space of type 2 is the class of $L^p$ spaces for $1\le p<\infty$. In the case of the space of continuous functions over compact metric spaces $\mathcal{V}=\mathcal{C}(\mathcal{Y})$, the Banach space is not of any type $p$ \citep{hoffmann1976law}. This restricts the direct application of Theorem \ref{thm:type-2}. However, it turns out that the moment conditions are sufficient on a subspace of Lipschitz functions over metric spaces.

\begin{definition}
    Let $(\mathcal{Y},d_{\mathcal{Y}})$ be a metric space. For $r>0$, the covering number  $\mathrm{cv}(\mathcal{Y},r)$ is the fewest balls of radius $r$ needed to cover $\mathcal{Y}$. Its \emph{metric entropy} function is defined by
    $$
    \mathrm{Ent}(\mathcal{Y},r) = \log \big(\mathrm{cv}(\mathcal{Y},r)\big).
    $$
\end{definition}

For the subset of Lipschitz functions over metric spaces with finite entropy, we have the following result specifying the sufficiency of moment conditions to satisfy the CLT.

\begin{theorem}{\citep[Theorem 1]{jain1975central}}\label{thm:CLT-CSspace}
    Let $\mathcal{V}=\mathcal{C}(\mathcal{Y})$ and $\bm{Z}$ be an $\mathcal{V}$-valued random variable such that $\mathbb{E}[\bm{Z}]=0$ and $\mathbb{E}[\|\bm{Z}\|^2]<\infty$. Suppose there is a nonnegative real valued random variable $\bm{L}$ such that $\mathbb{E}[\bm{L}^2]<\infty$, and for any $y,y'\in\mathcal{Y}$,
    $
    |\bm{Z}(y)-\bm{Z}(y')|\le \bm{L} F(d_{\mathcal{Y}}(y,y'))
    $
    for some strictly increasing function $F:[0,\infty)\to[0,\infty)$. Moreover, suppose that the metric entropy function of $\mathcal{Y}$ is such that
    $$
    \int_0^\infty \mathrm{Ent}^{\frac{1}{2}}(\mathcal{Y},F^{-1}(r))\,\diff{r}<\infty .
    $$
    Then $\bm{Z}$ satisfies the CLT.
\end{theorem}

The CLT is closely related to anther important limit theorem, namely, the Law of the Iterated Logarithm (LIL); intuitively, the latter provides upper and lower bounds for the former. There exist different versions of the LIL for Banach spaces; to derive a variance expression, we require a (compact) LIL in the sense of Strassen \citep{strassen1964invariance}.  Let $\mathrm{Log}(x)$ denote the function $\max\{1,\log x\}$ and let $\mathrm{LLog}(x) = \mathrm{Log}(\mathrm{Log}(x))$. For any $B\in\mathbb{N}$, set
$$
s_B=\sqrt{2B\mathrm{LLog}(B)}, \quad t_B = \frac{s_B}{B}=\frac{\sqrt{2\mathrm{LLog}(B)}}{\sqrt{B}}
$$
\begin{definition}
    Let $\bm{Z}$ be an $\mathcal{V}$-valued random variable with zero mean, and let $\bm{W}_B=\bm{Z}_1+\ldots+\bm{Z}_B$ be the sum of i.i.d. $\mathcal{V}$-valued random variables with the same distribution as $\bm{Z}$. The $\mathcal{V}$-valued random variable $\bm{Z}$ is said to satisfy the LIL if the sequence $\{\bm{W}_B/s_B\}$ is almost surely relatively compact in $\mathcal{V}$.
\end{definition}
The following theorem gives an equivalent definition for $\bm{Z}$ to satisfy the LIL.

\begin{theorem}{\cite[Theorem 8.5]{ledoux2013probability}}\label{thm:LIL-consequence}
    If $\bm{Z}$ satisfies the LIL, then with probability one,
    $$
    \lim_{B\to\infty}\mathrm{dist}\bigg(\frac{\bm{W}_B}{s_B},K\bigg)=0 \text{ \ and \ } \mathrm{CP}\bigg(\bigg\{\frac{\bm{W}_B}{s_B}\bigg\}_{B=1}^\infty\bigg)=K,
    $$
    where $K$ is a compact set in $\mathcal{V}$, $\mathrm{dist}(x,K)=\inf_{y\in K}\|x-y\|$ is the distance function, and $\mathrm{CP}(\cdot)$ denotes the set of all limiting points. Conversely, if the preceding holds for some compact set $K$, then $\bm{Z}$ satisfies the LIL.
\end{theorem}

The limit set $K$ is known to coincide with the unit ball of the reproducing kernel Hilbert space $\mathcal{H}\subset \mathcal{V}$ associated to the covariance of $\bm{Z}$; see \cite{banholzer2022rates} for further details. Here, we only need the following fact concerning the set $K$.

\begin{proposition}{\citep[Section 2.2]{banholzer2022rates}}\label{prop:lambdaZ}
    Let 
    $$
    \Lambda(\bm{Z})=\limsup_{B\to\infty}\frac{\|W_B\|}{s_B}.
    $$
    If $\bm{Z}$ satisfies the LIL, then
    $$
    \Lambda(\bm{Z})=\sup_{y\in K}\|y\|=\sup_{\substack{f\in \mathcal{V}',\\ \|f\|'\le 1}}\big(\mathbb{E}[f^2(\bm{Z})]\big)^{\frac{1}{2}}.
    $$
\end{proposition}

In general, a $\mathcal{V}$-valued random variable $\bm{Z}$ satisfying the CLT does not necessarily satisfy the LIL. However, given the moment conditions $\mathbb{E}[\bm{Z}]=0$ and $\mathbb{E}[\|\bm{Z}\|^2]<\infty$, satisfying the CLT implies satisfying the LIL \citep[Section 2.2]{banholzer2022rates}. Thus we can restate Theorem \ref{thm:CLT-CSspace} for the LIL.

\begin{theorem}\label{thm:LIL-CSspace}
    Given the same conditions as in Theorem \ref{thm:CLT-CSspace}, $\bm{Z}$ satisfies the LIL.
\end{theorem}

We are now ready to prove the convergence of the variance component for approximations of persistent homology in the form of H\"{o}lder continuous vectorizations of persistence diagrams.

\subsection{Variance Estimation}
\label{app:holder_var_proofs}

We first show that the assumption (A1) of HCV implies finiteness of the moments of $\Phi$.

\begin{proof}[Proof of Proposition \ref{prop:moment-control}]
    Let $\mathrm{diam}(\mathcal{Y})=\sup_{y,y'\in\mathcal{Y}}d_{\mathcal{Y}}(y,y')$ be the diameter of the compact space $\mathcal{Y}$. For any $y\in\mathcal{Y}$, by assumption (A1),
    \begin{equation}
    \label{eq:diam}
    |\bm{\phi}(y)|\le |\bm{\phi}(y_0)|+C_{\mathcal{Y}}d_{\mathcal{Y}}(y,y_0)^{\alpha}\le |\bm{\phi}(y_0)|+C_{\mathcal{Y}}\mathrm{diam}(\mathcal{Y})^{\alpha}.
    \end{equation}
    Thus,
    $$
    \mathbb{E}[\|\bm{\phi}\|_\infty^2]\le \mathbb{E}\bigg[\bigg(|\bm{\phi}(y_0)|+C_{\mathcal{Y}}\mathrm{diam}(\mathcal{Y})^{\alpha}\bigg)^2\bigg]\le 2\mathbb{E}[|\bm{\phi}(y_0)|^2]+2C_\mathcal{Y}^2\mathrm{diam}(\mathcal{Y})^{2\alpha}<\infty.
    $$
\end{proof}

In fact, if $\Phi$ is a H\"older continuous function and if we assume for some $y_0\in\mathcal{Y}$ there exists some constant $C_{y_0}>0$ such that $\phi(y_0)\le C_{y_0}$ for all $\phi\in\mathcal{Y}$, then \eqref{eq:diam} in the proof in fact becomes
$$
|\phi(y)|\le |\phi(y_0)|+C_\mathcal{Y}\mathrm{diam}(\mathcal{Y})^{\alpha}\le C_{y_0}+C_\mathcal{Y}\mathrm{diam}(\mathcal{Y})^{\alpha},
$$
and we then see that $\phi$ is uniformly bounded on $\mathcal{Y}$ and that $\mathbb{E}[\|\bm{\phi}\|_\infty^p]<\infty$ for all $p\in\mathbb{N}$.\\

We now prove our result for the variance estimate of HCVs.

\begin{proof}[Proof of Lemma \ref{lemma:var-rate-vectorization}]
Let $\widetilde{\bm{\phi}}=\bm{\phi}-\mathbb{E}[\bm{\phi}]$. By Proposition \ref{prop:moment-control}, we have $\mathbb{E}[\widetilde{\bm{\phi}}]=0$ and $\mathbb{E}[\|\widetilde{\bm{\phi}}\|_\infty^2]<\infty$. Since $\Phi$ is H\"older continuous, we can take $F(t)=t^{\alpha}$ and by Theorem \ref{thm:LIL-CSspace}, $\widetilde{\bm{\phi}}$ satisfies the LIL. Let 
    $$
    \widetilde{\bm{W}}_B=\sum_{i=1}^B\widetilde{\bm{\phi}_i}=\sum_{i=1}^B\bm{\phi}_i-B\mathbb{E}[\bm{\phi}]=B(\bar{\phi}-\mathbb{E}[\bm{\phi}])
    $$
    so that 
    $$
    \frac{\widetilde{\bm{W}}_B}{s_B}=\frac{B(\bar{\phi}-\mathbb{E}[\bm{\phi}])}{s_B}=\frac{\bar{\phi}-\mathbb{E}[\bm{\phi}]}{t_B}.
    $$
    By Theorem \ref{thm:LIL-consequence}, with probability one, there exists a compact $K\subseteq \mathcal{V}$ such that 
    $$
    \lim_{B\to\infty}\mathrm{dist}\bigg(\frac{\bm{W}_B}{s_B},K\bigg)=0, \text{ and } \mathrm{CP}\bigg(\bigg\{\frac{\bm{W}_B}{s_B}\bigg\}_{B=1}^\infty\bigg)=K ,
    $$
    and by Proposition \ref{prop:lambdaZ}, 
    $$
    \Lambda(\widetilde{\bm{\phi}})=\limsup_{B\to\infty}\frac{\|\widetilde{\bm{W}}_B\|_\infty}{s_B}=\sup_{\phi\in K}\|\phi\|_\infty<\infty
    $$
    Therefore, with probability one,
    $$
    \limsup_{B\to\infty}\frac{\|\bar{\phi}-\mathbb{E}[\bm{\phi}]\|_\infty}{t_B}=\Lambda(\widetilde{\bm{\phi}})
    $$
    By definition of supremum limit we know for any $\epsilon>0$ there exists $B(\epsilon)$ such that for any $B>B(\epsilon)$,
    $$
    \|\bar{\phi}-\mathbb{E}[\bm{\phi}]\|_\infty\le (\Lambda(\widetilde{\bm{\phi}})+\epsilon)t_B.
    $$
    Let $\epsilon=1$ and set $C_K=\Lambda(\widetilde{\bm{\phi}})+1$. Then with probability one, there exists an integer $B_0$ such that for any $B>B_0$,  we have
    $$
    \mathbb{E}[\|\bar{\phi}-\mathbb{E}[\bm{\phi}]\|_\infty]\le C_K\frac{\sqrt{2\mathrm{LLog}(B)}}{\sqrt{B}}.
    $$
\end{proof}

We remark that the assumption of finiteness of the metric entropy in Lemma \ref{lemma:var-rate-vectorization} is not restrictive. In practice, the pushforward distribution of HCV often has compact support in some Euclidean space $\mathbb{R}^m$. For a unit ball $\mathcal{B}(1)\subseteq \mathbb{R}^m$, the metric entropy rate is
$\mathrm{Ent}(\mathcal{B}(1),r)\approx \log(\frac{1}{r})$
for any norm defined on $\mathbb{R}^m$ \citep[Example 5.8]{wainwright2019high}. Thus the integration 
$$
\int_{0}^1\mathrm{Ent}^{\frac{1}{2}}(\mathcal{B}(1),r^{\frac{1}{\alpha}})\diff{r}\approx \int_0^\infty \frac{1}{\sqrt{\alpha}}r^{\frac{1}{2}}e^{-r}\diff{r}<\infty
$$
is finite for any $0<\alpha\le 1$.

\subsection{Bias Estimation}
\label{app:holder_bias_proofs}

The bias component is bounded as follows:
\begin{equation}\label{eq:vec-bias1}
    \|\mathbb{E}[\bm{\phi}]-\phi_\mathcal{X}\|_{\infty}\le \mathbb{E}[\|\bm{\phi}-\phi_{\mathcal{X}}\|_{\infty}]\le C_{\Phi}\mathbb{E}[\mathrm{W}_\infty^\beta(D[\bm{S}_n],\, D[\mathcal{X}])],
\end{equation}
where the first inequality follows from the convexity of norm $\|\cdot\|_\infty$ and Jensen's inequality, the second inequality follows from the H\"{o}lder continuity. Futhermore, by the bottleneck stability of persistence diagrams, \eqref{eq:vec-bias1} is bounded by the Hausdorff distance
\begin{equation}
    \mathbb{E}[\mathrm{W}_\infty^\beta(D[\bm{S}_n],\, D[\mathcal{X}])]\le 2\mathbb{E}[\mathrm{H}_\infty^\beta(\bm{S}_n,\mathcal{X})] .
\end{equation}

\cite{chazal-2015-subsampling} give the following tail bound on Hausdorff distance.

\begin{lemma}{\citep[Lemma 14]{chazal-2015-subsampling}}\label{lemma:chazal}
    Let $\bm{S}_n$ be a random set of size $n$ from a measure $\pi$ on $\mathcal{X}$ that satisfies the $(a,b,r_0)$-standard assumption. For any $r>r_0$ we have
    $$
    \mathbb{P}[\mathrm{H}_\infty(\bm{S}_n,\mathcal{X})>r]\le \frac{4^b}{ar^b}\exp\left(-\frac{nar^b}{2^b}\right) ,
    $$
\end{lemma}

We then prove the following bias estimation.

 \begin{proof}[Proof of Lemma \ref{lemma:vec-bias-rate}]
    Notice that
\begin{equation*}
\mathbb{E}[\mathrm{H}^\beta_\infty(\bm{S}_n, \mathcal{X})]  = \int_{t>0}\mathbb{P}(\mathrm{H}^\beta_\infty(\bm{S}_n, \mathcal{X}) > t) \diff t 
= \beta \int_{r>0} \mathbb{P}(\mathrm{H}_\beta(\bm{S}_n, \mathcal{X}) > r)r^{\beta-1} \diff r.
\end{equation*}
If $r_0>r_n$, then
\begin{equation*}
    \int_{r>0} \mathbb{P}(\mathrm{H}_\infty(\bm{S}_n, \mathcal{X})>r)r^{\beta-1} \diff r  = \Bigg( \int_0^{r_0} + \int_{r_0}^\infty \Bigg) \mathbb{P}(\mathrm{H}_\infty(\bm{S}_n, \mathcal{X})>r)r^{\beta-1} \diff r
\end{equation*}
Otherwise we have
\begin{equation*}
    \int_{r>0} \mathbb{P}(\mathrm{H}_\infty(\bm{S}_n, \mathcal{X})>r)r^{\beta-1} \diff r  = \Bigg( \int_0^{r_0} + \int_{r_0}^{r_n}+\int_{r_n}^\infty \Bigg) \mathbb{P}(\mathrm{H}_\infty(\bm{S}_n, \mathcal{X})>r)r^{\beta-1} \diff r
\end{equation*}
In summary we have
\begin{equation*}
    \int_{r>0} \mathbb{P}(\mathrm{H}_\infty(\bm{S}_n, \mathcal{X})>r)r^{\beta-1} \diff r  \le \frac{r_0^\beta}{\beta}+ \frac{r_n^\beta}{\beta}\mathbf{1}_{\{r_n>r_0\}}+ \int_{r_n}^\infty \mathbb{P}(\mathrm{H}_\infty(\bm{S}_n, \mathcal{X})>r)r^{\beta-1} \diff r
\end{equation*}
By Lemma \ref{lemma:chazal}, the integral is bounded by
\begin{equation*}
\int_{r_n}^\infty \mathbb{P}(\mathrm{H}_\infty(\bm{S}_n, \mathcal{X})>r)r^{\beta-1} \diff r 
\leq \int_{r_n}^\infty \frac{4^br^{\beta-1}}{ar^b} \exp\bigg( -\frac{nar^b}{2^b} \bigg) \diff r.
\end{equation*}
Applying a change of variables the integral simplifies to
\begin{equation*}
\int_{r_0}^\infty \frac{4^b}{ar^b} \exp\bigg( -\frac{nar^b}{2^b} \bigg) \diff r = \frac{2^{\beta+b}}{ba^{\beta/b} n^{\beta/b-1}} \int_{\log n}^\infty v^{\beta/b-2}e^{-v}\diff v.
\end{equation*}
Notice that $v^{\beta/b-2}e^{-v}$ is montone decreasing, so
$$
\int_{\log n}^\infty v^{\beta/b-2}e^{-v} \diff v  \leq \frac{(\log n)^{\beta/b-2}}{n}.
$$
Set $C_1 = \frac{2^{\beta+b}}{ba^{\beta/b}}$, we have proved the claim.
 \end{proof}

\subsection{Examples of HCVs}
\label{app:holder_class}

The following classes of vectorizations  persistence diagrams turn out to be HCVs.

\paragraph{Lipschitz Continuous Vectorizations (LCVs).}
These are special cases of HCVs with exponent $\alpha=1$ and possess the same convergence rate as persistence landscapes, derived by \cite{chazal-2015-subsampling}. Other LCVs include \emph{persistence complex vectors} \citep{di2015comparing}, \emph{tropical coordinate functions} \citep[Theorem 5.1]{kalivsnik2019tropical}, and \emph{integrated landscape embeddings} \citep[Theorem 3]{chevyrev2018persistence}.
    
    
\paragraph{Linear Representations of Persistent Homology.}
Since the space of persistence diagrams $\mathcal{D}$ is a proper subspace of the space of persistence measures $\mathcal{M}$, it is reasonable to consider linear maps $\Phi:\mathcal{D}\to\mathcal{V}$. Fix a function $\xi:\Omega\to \mathcal{V}$. A \emph{linear representation} of a persistence diagram $D$ is defined as
    $$
    \Phi(D) := D(\xi) = \int\xi(x)\diff{D}(x) ,
    $$
    by integrating with respect to the measure $D$. To establish stability of such linear representations, it is necessary to add weights to points in a persistence diagram. \cite{divol2019choice} considered the following class of weight functions: Let $\Tilde{\omega}:\mathbb{R}_+\to\mathbb{R}_+$ be a differentiable function such that $\Tilde{\omega}(0)=0$, and for some $A>0$, $\kappa\ge 1$, $|\Tilde{\omega}'(t)|\le At^{\kappa-1}$. An admissible weight function $\omega:\Omega\to\mathbb{R}_+$ is of the form $\omega((x,y))=\Tilde{\omega}(y-x)$ for some fixed function $\Tilde{w}$. The set of all admissible weight functions is denoted by $\mathcal{W}(\kappa,A)$. For linear representations with weights, \cite{divol2019choice} proved a general stability result with respect to any $p$-Wasserstein distance for $1\le p\le \infty$. We reformulate the stability result in terms of bottleneck distance. 
    \begin{theorem}{\citep[Theorem 3]{divol2019choice}}\label{thm:linear-rep-stability}
        Let $\xi:\Omega\to\mathcal{V}$ be a Lipschitz function. For $\omega\in\mathcal{W}(\kappa,A)$ with $A>0$, $\kappa\ge 1$, let $\Phi_\omega(D)=D(\omega\xi)$. For two persistence diagrams $D$ and $D'$, let $G=\max\{|y-x|:(x,y)\in D\bigsqcup D'\}$. Then for $0\le \alpha\le 1$,
        $$
        \|\Phi_\omega(D)-\Phi_\omega(D')\|_\infty\le \frac{\|\xi\|_{\mathrm{Lip}}AG}{\alpha}\mathrm{W}_\infty(D,D')+2\|\xi\|_\infty AG\mathrm{W}_\infty^\alpha(D,D') .
        $$
    \end{theorem}
    Therefore, locally, linear representations satisfying conditions in Theorem \ref{thm:linear-rep-stability} are H\"{o}lder continuous. Linear representations are common in the literature on TDA, especially involving vectorizations such as persistence surfaces and persistence images \citep{adams-2017-persistence}; persistence kernel embeddings \citep{kusano2017kernel}; and accumulated persistence functions \citep{biscio2019accumulated}. Note, however, that persistence landscapes are not linear representations.







\section{Technical Proofs for Persistence Measures}

\subsection{Case I: $\mathcal{X}$ Finite}

To bound the optimal partial transport distance by the $p$-Hausdorff distance, we make use of the following result.

\begin{proposition}{\cite[Proposition  5.4]{divol-2019-understanding}}
\label{prop:wass_convexity}
Let $\bm{D}$ and $\bm{D}'$ be two $\mathcal{M}_p$-valued random variables with finite moments, and let $D_\mu$ and $D'_\mu$ be corresponding expected persistence measures (population means).  Then
\begin{equation}
\label{eq:convex}
\mathrm{OT}^p_p(D_\mu, D'_{\mu}) \leq \mathbb{E}[\mathrm{OT}^p_p(\bm{D}, \bm{D}')].
\end{equation}
\end{proposition}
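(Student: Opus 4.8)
\emph{Proof proposal.} Inequality \eqref{eq:convex} is a Jensen-type (joint convexity) statement: the $p$-th power of the optimal partial transport cost is convex under linear mixtures of measures, and here the population means $D_\mu=\mathbb{E}[\bm{D}]$ and $D'_\mu=\mathbb{E}[\bm{D}']$ are exactly such mixtures, with mixing measure the underlying probability. The plan is to construct an admissible transport plan between $D_\mu$ and $D'_\mu$ by \emph{averaging} near-optimal plans between the realizations $\bm{D}(\omega)$ and $\bm{D}'(\omega)$, and then to read off the bound directly from the definition \eqref{eq:OT_dist} of $\mathrm{OT}_p$.

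First I would realize $\bm{D}$ and $\bm{D}'$ on a common probability space $(\Omega_0,\mathcal{F},P)$. Fix $\varepsilon>0$ and, using a measurable selection theorem (Kuratowski--Ryll-Nardzewski), choose for $P$-almost every $\omega$ a transport plan $\Pi_\omega$ between $\bm{D}(\omega)$ and $\bm{D}'(\omega)$ that is $\varepsilon$-optimal for $\mathrm{OT}_p$, with $\omega\mapsto\Pi_\omega(E)$ measurable for every Borel $E\subseteq\overline{\Omega}\times\overline{\Omega}$; working with $\varepsilon$-optimal rather than exactly optimal plans sidesteps any existence argument. Next I would define a Borel measure $\Pi$ on $\overline{\Omega}\times\overline{\Omega}$ by $\Pi(E)=\int_{\Omega_0}\Pi_\omega(E)\,\mathrm{d}P(\omega)$, restricting attention, as in \eqref{eq:OT_dist}, to the part of each $\Pi_\omega$ meeting $\Omega\times\overline{\Omega}$ or $\overline{\Omega}\times\Omega$ (which is what controls the cost and keeps the relevant masses finite). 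For any Borel $P\subseteq\Omega$, Tonelli's theorem together with the marginal constraint on $\Pi_\omega$ gives $\Pi(P\times\overline{\Omega})=\int_{\Omega_0}\bm{D}(\omega)(P)\,\mathrm{d}P(\omega)=\mathbb{E}[\bm{D}(P)]=D_\mu(P)$, the last equality being the identification $\mathbb{E}_\rho[\bm{D}](A)=\mathbb{E}_\rho[\bm{D}(A)]$ recalled in Section \ref{sec:ph_means}; symmetrically $\Pi(\overline{\Omega}\times Q)=D'_\mu(Q)$. Hence $\Pi$ is admissible for the pair $(D_\mu,D'_\mu)$.

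From the definition \eqref{eq:OT_dist} and Tonelli's theorem one then obtains
\[
\mathrm{OT}_p^p(D_\mu,D'_\mu)\ \le\ \int_{\overline{\Omega}\times\overline{\Omega}}\|x-y\|_q^p\,\mathrm{d}\Pi(x,y)\ =\ \int_{\Omega_0}\!\Big(\int_{\overline{\Omega}\times\overline{\Omega}}\|x-y\|_q^p\,\mathrm{d}\Pi_\omega(x,y)\Big)\mathrm{d}P(\omega)\ \le\ \int_{\Omega_0}\!\big(\mathrm{OT}_p^p(\bm{D}(\omega),\bm{D}'(\omega))+\varepsilon\big)\,\mathrm{d}P(\omega),
\]
and letting $\varepsilon\downarrow 0$ yields $\mathrm{OT}_p^p(D_\mu,D'_\mu)\le\mathbb{E}[\mathrm{OT}_p^p(\bm{D},\bm{D}')]$. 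The right-hand side is finite under the finite-moment hypothesis, since $\mathrm{OT}_p(\bm{D},\bm{D}')\le\mathrm{OT}_p(\bm{D},D_\emptyset)+\mathrm{OT}_p(D_\emptyset,\bm{D}')$ and $\mathrm{OT}_p^p(\cdot,D_\emptyset)=\mathrm{Pers}_p(\cdot)$ has finite expectation for both variables.

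The hard part will not be the convexity idea but the measure-theoretic bookkeeping in the middle step: one must (i) produce a genuinely \emph{measurable} family $\omega\mapsto\Pi_\omega$ of near-optimal partial transport plans, and (ii) make sense of the average $\Pi=\int\Pi_\omega\,\mathrm{d}P(\omega)$ as a bona fide Radon measure on the non-compact space $\overline{\Omega}\times\overline{\Omega}$ with the correct marginals, bearing in mind that partial-transport plans can carry infinite mass on and near the diagonal. I would handle both by using the Borel isomorphism $\mathcal{M}_f\cong\mathcal{M}_p$ to pass to a finite-measure picture (equivalently, truncating away from $\partial\Omega$ and at infinity, verifying the marginal identities and the cost bound on compact sets, and passing to monotone limits), after which Tonelli's theorem applies routinely.
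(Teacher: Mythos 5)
Your argument is correct and is essentially the standard proof of this convexity inequality: the paper itself imports Proposition \ref{prop:wass_convexity} from \cite[Proposition 5.4]{divol-2019-understanding} without reproving it, and the proof in that reference proceeds exactly as you propose, by measurably selecting (near-)optimal partial transport plans $\Pi_\omega$ between realizations, averaging them into an admissible plan for the pair of expected measures, and applying Tonelli to compare costs. Your use of $\varepsilon$-optimal plans and your remarks on the Radon/mass bookkeeping near $\partial\Omega$ are reasonable ways to handle the only delicate points (measurable selection and well-definedness of the averaged plan), so there is no gap.
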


In essence, Proposition \ref{prop:wass_convexity} establishes convexity of the optimal partial transport distance and shows that this distance admits an inequality akin to Jensen's inequality.

\begin{proof}[Proof of Proposition \ref{prop:bias-control}]


Taking $\bm{D}'$ to be the Dirac measure at a fixed persistence measure $\nu_0 \in \mathcal{M}_p$, (\ref{eq:convex}) becomes $\ot^p_p(D_\mu, \nu_0) \leq \mathbb{E}\big[ \ot^p_p(\bm{D}, \nu_0) \big].$  Notice that our bias expression of interest takes precisely this form, which gives us the following inequality:
\begin{equation}
\label{eq:mpm_convexity}
\ot^p_p(D_\mu, D[\mathcal{X}]) \leq \mathbb{E}\big[ \ot^p_p(\bm{D}_n, D[\mathcal{X}]) \big].
\end{equation}

Since $\bm{D}_n$ is valued in $\mathcal{D}_p$ and the optimal partial transport distance $\ot_p$ coincides with the Wasserstein $p$-distance $\mathrm{W}_p$ on $\mathcal{D}_p$, so by Theorem \ref{thm:wasserstein-point-clouds} for point clouds, for the right-hand expression of (\ref{eq:mpm_convexity}), we have
\begin{equation}
\label{eq:mpm_wass_stability}
\mathbb{E}\big[ \ot^p_p(\bm{D}_n, D[\mathcal{X}]) \big] = \mathbb{E}\big[ \mathrm{W}_p^p(\bm{D}_n, D[\mathcal{X}]) \big] \leq C_N\mathbb{E}[\mathrm{H}_p^p(\bm{S}_n, \mathcal{X})].
\end{equation}
Summarizing above, we have proved \eqref{eq:ineqs}.
\end{proof}

For a metric space $\mathcal{X}$ and a fixed radius $r > 0$, the {\em covering number} $\mathrm{cv}(\mathcal{X}, r)$ is the fewest balls of radius $r$ needed to cover $\mathcal{X}$.  For a probability measure $\pi$ on $\mathcal{X}$ satisfying the $(a,b,r_0)$-standard assumption, we have the following estimate for the covering number.

\begin{lemma}{\citep[Lemma 10]{chazal-2014-convergence}}
\label{lem:chazal_cv}
Assume that the probability measure $\pi$ satisfies the $(a,b,r_0)$-standard assumption, then for $r>r_0$, the covering number of $\mathcal{X}$ is bounded as follows:
$$
\mathrm{cv}(\mathcal{X}, r) \leq \max\bigg( \frac{2^b}{ar^b}, 1 \bigg).
$$
\end{lemma}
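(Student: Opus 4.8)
The plan is to run the classical packing-versus-covering argument, using the standard assumption to convert a lower bound on the measure of small metric balls into an upper bound on how many pairwise-disjoint such balls can fit inside $\mathcal{X}$.

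Concretely, fix $r > r_0$ and choose a \emph{maximal} $r$-separated subset $P = \{x_1, \dots, x_M\} \subseteq \mathcal{X}$, i.e.\ a maximal collection of points with $d_{\mathcal{X}}(x_i, x_j) \geq r$ whenever $i \neq j$; such a set exists and is finite since $\mathcal{X}$ is compact (finiteness also follows a posteriori from the count below). I would then proceed in three steps. First, maximality forces every point of $\mathcal{X}$ to lie within distance $r$ of some $x_i$, so the balls $\mathcal{B}(x_1, r), \dots, \mathcal{B}(x_M, r)$ cover $\mathcal{X}$ and hence $\mathrm{cv}(\mathcal{X}, r) \leq M$. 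Second, because the centres are pairwise at distance at least $r$, the triangle inequality shows the half-radius balls $\mathcal{B}(x_i, r/2)$, $i = 1, \dots, M$, are pairwise disjoint. Third, applying the $(a,b,r_0)$-standard assumption at radius $r/2$ gives $\pi\big( \mathcal{B}(x_i, r/2) \big) \geq \min\big( 1, a (r/2)^b \big)$ for every $i$, so summing over the disjoint family and using $\pi(\mathcal{X}) \leq 1$,
$$
1 \;\geq\; \sum_{i=1}^{M} \pi\big( \mathcal{B}(x_i, r/2) \big) \;\geq\; M \cdot \min\!\bigg( 1,\ \frac{a r^b}{2^b} \bigg).
$$
A short case distinction then finishes the proof: if $a r^b / 2^b \geq 1$ the displayed inequality forces $M \leq 1$, and otherwise it gives $M \leq 2^b/(a r^b)$; in both cases $M \leq \max\!\big( 2^b/(a r^b),\, 1 \big)$, which combined with $\mathrm{cv}(\mathcal{X}, r) \leq M$ is exactly the claimed bound.

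The argument is entirely routine, so I do not expect a genuine obstacle. The only points demanding care are bookkeeping ones: being consistent about open versus closed balls so that the separation and covering inequalities line up, confirming that the maximal $r$-separated set is finite before summing its ball measures, and matching the radius ($r/2$) at which the standard assumption is invoked against the threshold $r_0$ so that the stated range $r > r_0$ is respected (the natural hypothesis for the displayed step is $r/2 > r_0$, with the borderline regime absorbed into the constant).
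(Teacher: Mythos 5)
The paper does not prove this statement---it is quoted from \cite{chazal-2014-convergence} and used as a black box---so there is no internal proof to compare against; your packing-versus-covering argument is the standard one and is essentially how the cited source establishes it. The three steps (maximal $r$-separated set gives a cover by radius-$r$ balls; half-radius balls are disjoint; sum the measure lower bounds) are all correct, and the case distinction at the end recovers exactly the stated constant $\max\bigl(2^b/(ar^b),\,1\bigr)$.

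The one point worth being precise about is the issue you flag at the end but then dismiss: the standard assumption is invoked at radius $r/2$, which requires $r/2 > r_0$, whereas the lemma is stated for $r > r_0$. Your parenthetical claim that the borderline regime $r_0 < r \leq 2r_0$ can be ``absorbed into the constant'' does not quite work, because the lemma's constant is explicit and your argument produces nothing at all in that regime (the measure lower bound simply is not available at radius $r/2 \leq r_0$). So what you have actually proved is the stated bound for $r > 2r_0$, or for all $r > 0$ under the $(a,b)$-standard assumption with $r_0 = 0$ (which is the setting of the original reference). This is really a looseness in the paper's restatement of the lemma rather than a defect of your argument---and the same factor-of-two slack propagates into the paper's own Lemma~\ref{lem:tail_prob}, which applies the covering bound at a radius $\hat r$ only assumed to exceed $r_0$---but if you want your proof to match the hypothesis as written, you should either restrict to $r > 2r_0$ or note explicitly that the claim is being used only in regimes where $r_0$ is negligible (cf.\ Remark~\ref{rem:r0}).
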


We then derive a tail bound for the $p$-Hausdorff distance.

\begin{proof}[Proof of Lemma \ref{lem:tail_prob}]

Let $\hat r = r/(2N^{1/p}) > r_0$ and let $\mathcal{U}$ be a subset of $\mathcal{X}$ with covering number $\mathrm{cv}(\mathcal{X}, \hat r)$.  By the triangle inequality, we have
\begin{align}
\mathbb{P}(\mathrm{H}_p(\bm{S}_n, \mathcal{X})>r) & \leq \mathbb{P}(\mathrm{H}_p(\bm{S}_n, \mathcal{U}) + \mathrm{H}_p(\mathcal{U}, \mathcal{X})>r) \nonumber\\
& \leq \mathbb{P}\bigg( \mathrm{H}_p(\bm{S}_n, \mathcal{U}) > \frac{r}{2} \bigg) + \mathbb{P}\bigg( \mathrm{H}_p(\mathcal{U}, \mathcal{X}) > \frac{r}{2} \bigg). \label{eq:triangle_cv}
\end{align}

For the second term of (\ref{eq:triangle_cv}), the cardinality of $\mathcal{U}$ is the covering number $\mathrm{cv}(\mathcal{X},\hat r)$.   Consider a correspondence $\mathbf{C}_2 \subseteq \mathcal{U} \times \mathcal{X}$ that assigns each point in $\mathcal{U}$ to itself and each point in $\mathcal{X} - \mathcal{U}$ to a point in $\mathcal{U}$.  The cardinality of $\mathbf{C}_2$ is $N$ and we have
$$
\mathrm{H}_p(\mathcal{U}, \mathcal{X}) \leq \Bigg( \sum_{(x,y) \in \mathbf{C}_2} \| x-y \|^p_2 \Bigg)^{1/p} \leq (N\hat r^p)^{1/p} = \frac{r}{2},
$$
which means that the second term of (\ref{eq:triangle_cv}) vanishes.

It therefore suffices to bound the first probability in (\ref{eq:triangle_cv}).  For all $i \in \{1,2,\ldots, \mathrm{cv}(\mathcal{X}, \hat r)\}$, assume that the ball $\mathcal{B}(u_i, \hat r)$, $u_i \in \mathcal{U}$, contains a point of $\bm{S}_n$.  Then consider the correspondence $\mathbf{C}_1 \subseteq \bm{S}_n \times \mathcal{U}$ that assigns each point in $\bm{S}_n$ to a point in $\mathcal{U}$ and each unmatched point $u_i \in \mathcal{U}$ to a point in $\bm{S}_n \cap \mathcal{B}(u_i, \hat r)$.  The cardinality of the correspondence $\mathbf{C}_1$ is at most $\mathrm{cv}(\mathcal{X}, \hat r)$, where then
$$
\mathrm{H}_p(\bm{S}_n, \mathcal{U}) \leq \mathrm{cv}(\mathcal{X}, \hat r)^{1/p}\hat r \leq N^{1/p} \hat r \leq \frac{r}{2}.
$$
Therefore, the first probability in (\ref{eq:triangle_cv}) is
\begin{align}
\mathbb{P}\bigg( \mathrm{H}_p(\bm{S}, \mathcal{U}) > \frac{r}{2} \bigg) & = \mathbb{P}(\exists\ i \in \{1,2,\ldots, \mathrm{cv}(\mathcal{X}, \hat r)\}: \bm{S}_n \cap \mathcal{B}(u_i, \hat r) = \emptyset) \nonumber\\
& \leq \sum_{i=1}^{\mathrm{cv}(\mathcal{X},\hat r)} \mathbb{P}(\bm{S}_n \cap \mathcal{B}(u_i, \hat r) = \emptyset) \nonumber\\
& \leq \mathrm{cv}(\mathcal{X}, \hat r)(1-a\hat r^b)^n. \label{eq:cv_bound}
\end{align}
By Lemma \ref{lem:chazal_cv}, (\ref{eq:cv_bound}) is bounded by
$$
\frac{2^b}{a\hat r^b}(1-a\hat r^b)^n \leq \frac{2^b}{a\hat r^b}\exp(-na\hat r^b) = \frac{4^bN^{b/p}}{ar^b}\exp\bigg(-\frac{ar^bn}{2^b N^{b/p}} \bigg),
$$
as desired.
\end{proof}

\begin{proof}[Proof of Lemma \ref{lemma:hausdorff_bound}]
By integrating tail probabilities, we have
\begin{align}
\mathbb{E}[\mathrm{H}^p_p(\bm{S}_n, \mathcal{X})] & = \int_{t>0}\mathbb{P}(\mathrm{H}^p_p(\bm{S}_n, \mathcal{X}) > t) \diff t \nonumber\\
& = \int_{t>0} \mathbb{P}(\mathrm{H}_p(\bm{S}_n, \mathcal{X})>t^{1/p})\diff t \nonumber\\
& \myeq \ \ \ p \int_{r>0} \mathbb{P}(\mathrm{H}_p(\bm{S}_n, \mathcal{X}) > r)r^{p-1} \diff r. \label{eq:tailprob_haus_bound_1}
\end{align}

By Lemma \ref{lem:tail_prob}, (\ref{eq:tailprob_haus_bound_1}) is bounded as follows:
\begin{align}
\int_{r>0} \mathbb{P}(\mathrm{H}_p^p(\bm{S}_n, \mathcal{X})>r)r^{p-1} \diff r & = \Bigg( \int_0^{2r_0N^{1/p}} + \int_{2r_0N^{1/p}}^\infty \Bigg) \mathbb{P}(\mathrm{H}_p(\bm{S}_n, \mathcal{X})>r)r^{p-1} \diff r \nonumber\\
& \leq \frac{2^p N r_0^p}{p} + \int_{2r_0N^{1/p}}^\infty \frac{4^b N^{b/p} r^{p-b-1}}{a} \exp\bigg( -\frac{ar^bn}{2^bN^{b/p}} \bigg) \diff r. \label{eq:lem_tail_haus_bound_1}
\end{align}
Applying a change of variables by setting $\displaystyle v := \frac{ar^bn}{2^bN^{b/p}}$, the integral in (\ref{eq:lem_tail_haus_bound_1}) simplifies to
\begin{equation}
\label{eq:chvar_lem}
\int_{2r_0N^{1/p}}^\infty \frac{4^bN^{b/p}r^{p-b-1}}{a} \exp\bigg( -\frac{ar^bn}{2^bN^{b/p}} \bigg) \diff r = \frac{2^{p+b}N}{ba^\theta n^\theta} \int_{anr_0^b}^\infty v^{\theta-1}e^{-v}\diff v.
\end{equation}

When $p > b$, the right-hand side of (\ref{eq:chvar_lem}) is bounded by $\displaystyle \frac{2^{p+b}N}{ba^\theta n^\theta}\Gamma(\theta)$, as desired, proving (\ref{eq:haus_bound_1}).

When $p \leq b$, we consider two cases of $r_n$.  If $r_n \leq 2r_0 N^{1/p}$, then (\ref{eq:tailprob_haus_bound_1}) is less than or equal to
$$
\frac{2^p N r_0^p}{p} + \int_{r_n}^\infty \frac{4^bN^{b/p}r^{p-b-1}}{a} \exp\bigg(-\frac{ar^bn}{2^bN^{b/p}} \bigg) \diff r.
$$
If $r_n > 2r_0N^{1/p}$, then (\ref{eq:tailprob_haus_bound_1}) is less than or equal to
$$
r_n + \int_{r_n}^\infty \frac{4^bN^{b/p}r^{p-b-1}}{a} \exp\bigg(-\frac{ar^bn}{2^bN^{b/p}} \bigg) \diff r.
$$
In both cases, we have that (\ref{eq:tailprob_haus_bound_1}) is less than or equal to
\begin{equation}
\label{eq:ind_haus_bound_1}
\frac{2^p N r_0^p}{p} + r_n \mathbf{1}\{r_n > 2r_0N^{1/p}\} + \int_{r_n}^\infty \frac{4^b N^{b/p} r^{p-b-1}}{a}\exp\bigg( -\frac{ar^bn}{2^bN^{b/p}} \bigg) \diff r.
\end{equation}
Via the same change of variables $v$ above, the integral in (\ref{eq:ind_haus_bound_1}) simplifies to
$\displaystyle \frac{2^{p+b}N}{ba^\theta n^\theta}\int_{\log n}^\infty v^{\theta-1}e^{-v}\diff v.$
Notice that when $p \leq b$, $v^{\theta-1}$ is montone decreasing, so
$$
\int_{\log n}^\infty v^{\theta-1}e^{-v} \diff v \leq (\log n)^{\theta-1}\int_{\log n}^\infty e^{-v} \diff v = \frac{(\log n)^{\theta-1}}{n},
$$
which proves (\ref{eq:haus_bound_2}), as desired.
\end{proof}

\subsection{Case II: $\mathcal{X}$ Compact}

we first show that for two subsets $\mathcal{X},\mathcal{Y}\subseteq\mathbb{R}^m$, we can bound the difference of distance functions by the Hausdorff distance.

\begin{lemma}\label{lemma:function-hausdorff}
    Let $\mathcal{X},\mathcal{Y}\subseteq\mathbb{R}^m$ be two bounded subsets. Then $\|\mathrm{dist}_{\mathcal{X}}-\mathrm{dist}_{\mathcal{Y}}\|_\infty\le \mathrm{H}_\infty(\mathcal{X},\mathcal{Y})$.
\end{lemma}
\begin{proof}[Proof of Lemma \ref{lemma:function-hausdorff}]
    For any $z\in\mathbb{R}^m$, and $y\in\mathcal{Y}$ we have 
    $$
    \mathrm{dist}_\mathcal{X}(z) \le \|z-y\|+\mathrm{dist}_\mathcal{X}(y) \le \|z-y\|+\mathrm{H}_\infty(\mathcal{X},\mathcal{Y}).
    $$
    Taking the infimum with respect to $y\in\mathcal{Y}$, we have $\mathrm{dist}_\mathcal{X}(z)-\mathrm{dist}_\mathcal{Y}(z)\le \mathrm{H}_\infty(\mathcal{X},\mathcal{Y})$ for all $z\in\mathbb{R}^m$. Similarly, we have $\mathrm{dist}_\mathcal{Y}(z)-\mathrm{dist}_\mathcal{X}(z)\le \mathrm{H}_\infty(\mathcal{X},\mathcal{Y})$ for all $z\in\mathbb{R}^m$. Taking the supremum over $z\in\mathbb{R}^m$ we have $\|\mathrm{dist}_{\mathcal{X}}-\mathrm{dist}_{\mathcal{Y}}\|_\infty\le \mathrm{H}_\infty(\mathcal{X},\mathcal{Y})$.
\end{proof}

We then prove the bias estimation.
\begin{proof}[Proof of Lemma \ref{lemma:continuous-bias}]
   We restrict the distance function to the ball $\mathcal{B}(x_0,2R)$ for some fixed $x_0\in \mathcal{X}$. Then we can apply Theorem \ref{thm:Wasser-functions} to level set filtrations of the distance function:
    \begin{equation*}
        \begin{aligned}
            \mathrm{OT}_p^p(D_\mu,\, D[\mathrm{dist}_\mathcal{X}])&\le \mathbb{E}[\mathrm{OT}_p^p(D[\mathrm{dist}_{\bm{S}_n}], D[\mathrm{dist}_\mathcal{X}])]\\
            &\le C_\mathcal{X}\max\big\{\|\mathrm{dist}_{\bm{S}_n}\|_{\mathrm{Lip}}^p,\, \|\mathrm{dist}_{\mathcal{X}}\|_{\mathrm{Lip}}^p\big\}\|\mathrm{dist}_{\bm{S}_n}-\mathrm{dist}_{\mathcal{X}}\|_\infty^{p-k} .
        \end{aligned}
    \end{equation*}

    Note that distance functions have Lipschitz norm 1, and by Lemma \ref{lemma:function-hausdorff} we can bound the bias by the Hausdorff distance:
    $$
\mathrm{OT}_p^p(D_\mu,\, D[\mathrm{dist}_\mathcal{X}])\le C_\mathcal{X}\mathrm{H}_\infty^{p-k}(\bm{S}_n,\mathcal{X}).
    $$
    Applying Lemma \ref{lemma:vec-bias-rate} to the $(p-k)$-th moment of the Hausdorff distance, we obtain \eqref{eq:continuous-bias}.  
\end{proof}

For variance estimate, we have the following result.

\begin{proof}[Proof of Lemma \ref{lemma:var-rate-continuous}]
    Let $R=\mathrm{diam}(\mathcal{X})$ be the diameter of $\mathcal{X}$. For any $r>R$, the \v{C}ech complex $\mathrm{\check{C}ech}(\mathcal{X},r)\simeq \mathrm{dist}_{\mathcal{X}}((\infty,r])$ has trivial topology. Fix a point $x_0\in \mathcal{X}$. It suffices to consider the restriction of the distance function $\mathrm{dist}_\mathcal{X}$ to the ball $\mathcal{B}(x_0,2R)$. As an $m$-dimensional compact manifold, $\mathcal{B}(x_0,2R)$ has bounded $k$-total persistence for any $k>m$ \citep[Section 2.3]{cohen-2010-lipschitz}. The estimation \eqref{eq:var-rate-continuous} is a consequence of \cite[Theorem 1]{divol-2021-estimation}. 
\end{proof}


\section{Technical Details for Fr\'echet Means of Persistence Diagrams}
\label{app:pd_bias}

Due to the impasse associated with deriving a rate estimate for the variance of Fréchet means, a full expression for the approximation error is not achievable. We give the following bound for bias. By the triangle inequality,
$$
\mathrm{W}^p_p(\mathbf{Fr}, D[\mathcal{X}]) \leq 2^{p-1}(\mathrm{W}^p_p(\mathbf{Fr},D) + \mathrm{W}^p_p(D, D[\mathcal{X}])).
$$
Integrating both sides with respect to $(\pi^{\otimes n})_*$, we obtain
\begin{equation}
\label{eq:bias_fr_int}
\mathrm{W}^p_p(\mathbf{Fr}, D[\mathcal{X}]) \leq 2^{p-1}\Bigg( \int_{\mathcal{D}_p} \mathrm{W}^p_p(\mathbf{Fr}, D) \diff (\pi^{\otimes n}_*(D)) + \int_{\mathcal{D}_p} \mathrm{W}^p_p(D, D[\mathcal{X}])\diff (\pi^{\otimes n}_*(D)) \Bigg).
\end{equation}
Denote $\displaystyle \sigma^2 = \min_{D_F}\int_{\mathcal{D}_p} \mathrm{W}^p_p(D_F, D)\diff (\pi^{\otimes m})_*(D)$.  Then by definition of the Fréchet population mean $\mathbf{Fr}$,
\begin{align}
\mathrm{W}^p_p(\mathbf{Fr}, D[\mathcal{X}]) & \leq 2^{p-1}\Bigg( \sigma^2 + \int_{\mathcal{D}_p} \mathrm{W}^p_p (D, D[\mathcal{X}])\diff (\pi^{\otimes n})_*(D) \Bigg) \nonumber\\
& = 2^{p-1}\sigma^2 + 2^{p-1}\mathbb{E}[\mathrm{W}^p_p(\bm{D}_n, D[\mathcal{X}])]. \label{eq:bias_fr_wass_stab}
\end{align}

Using Theorem \ref{thm:wasserstein-point-clouds}, $\mathrm{W}^p_p(\mathbf{Fr}, D[\mathcal{X}]) \leq C_N\mathrm{H}^p_p(\bm{S}_n, \mathcal{X})$, so \eqref{eq:bias_fr_wass_stab} becomes
$$
\mathrm{W}^p_p(\mathbf{Fr}, D[\mathcal{X}]) \leq 2^{p-1}\sigma^2 + 2^{p-1}C_N \mathbb{E}[\mathrm{H}^p_p(\bm{S}_n, \mathcal{X})].
$$
As above in the case of mean persistence measures, it now remains to bound the $p$-Hausdorff distance $\mathbb{E}[\mathrm{H}^p_p(\bm{S}_n, \mathcal{X})]$.  Together with Lemma \ref{lemma:hausdorff_bound}, we obtain the following bound for the bias of Fréchet means:
\begin{equation*}
    \mathrm{W}^p_p(\mathbf{Fr},D[\mathcal{X}])\leq
    \begin{cases}
    O(\sigma^2)+O(r_0^p)+  O\big(n^{-\theta}\big)& \mbox{~if~} p>b,\\
    \displaystyle O(\sigma^2)+O(r_0^p) + O\bigg(\bigg(\frac{\log n}{n}\bigg)^{\theta+1}\bigg)& \mbox{~if~} \displaystyle p\le b, 
    \end{cases}
\end{equation*}
where $\theta = p/b-1$.


\section{Additional Simulation Results}

\subsection{Comparing Mean Persistence Measures and Fr\'{e}chet Means}
\label{sec:mpm_fm}

\begin{figure}[htbp]
    \centering
    \includegraphics[width=\linewidth]{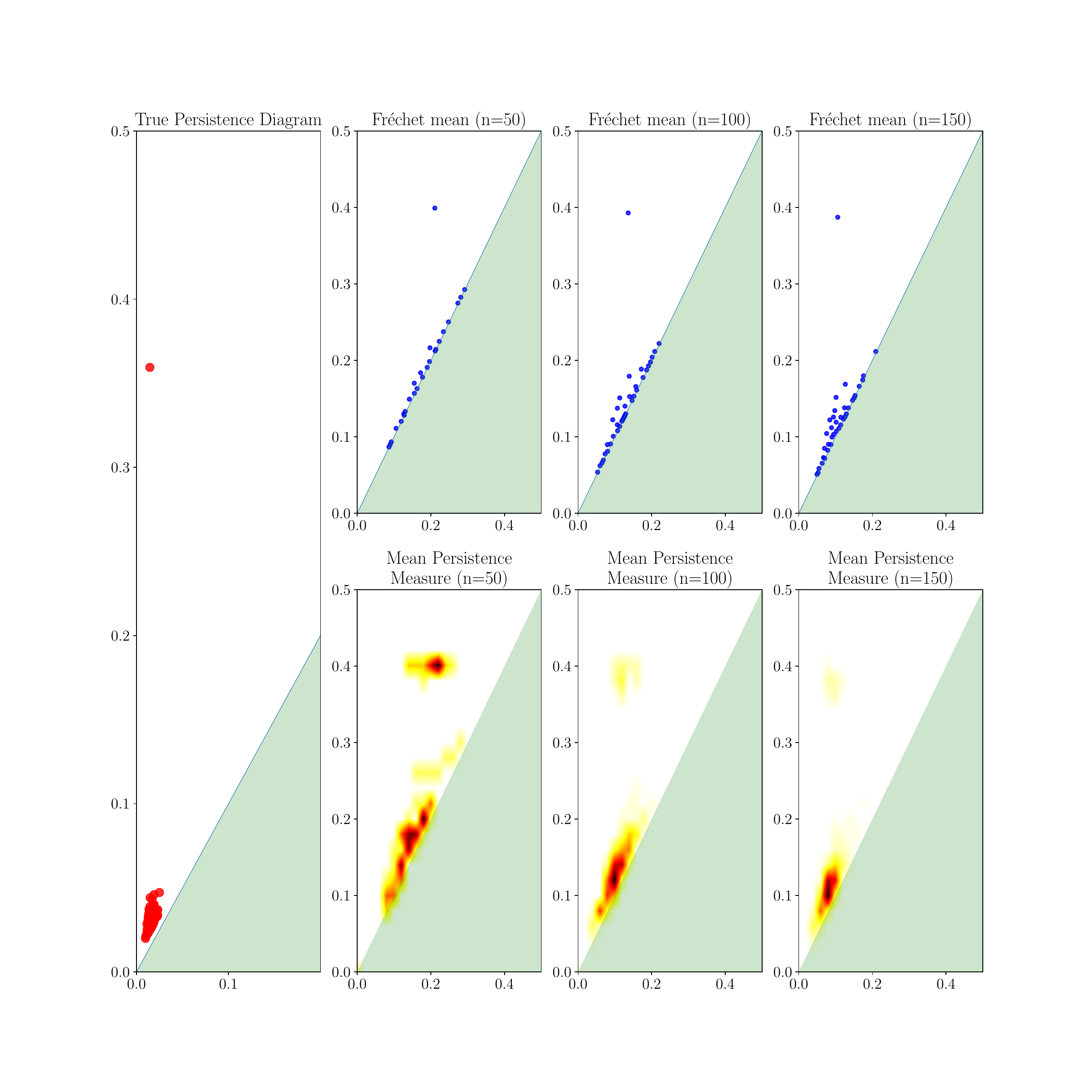}
    \caption{Illustration of Fréchet means of persistence diagrams and mean persistence measures for varying sample sizes. The left panel is the true persistence diagram of the sample set $\mathcal{X}$ computed from the Annulus. The top row shows the Fréchet mean of persistence diagrams of subsampled sets of different sizes. The bottom row shows the mean persistence measures of persistence diagrams of subsampled sets of different sizes.}
    \label{fig:compare-frechet-multi}
\end{figure}

We now provide experimental comparisons of the Fr\'echet mean and persistence measures as measures of central tendency in our subsampling approach on three types of data: Euclidean point clouds, abstract finite metric spaces, and networks. Although our convergence analysis assumes point cloud data in Euclidean space, these experiments demonstrate the applicability of our subsampling method to other types of data.

\paragraph{Annulus.} Here we take $\mathcal{X}$ to be a sample set of $N = 5,000$ points from an annulus with outer radius 0.5 and inner radius 0.2. The Fr\'echet mean and mean persistence measures are computed based on $B=20$ sets of subsamples from $\mathcal{X}$ each consisting of $n$ points, where $n$ ranges from 50 to 400.  Figure \ref{fig:compare-frechet-multi} compares Fr\'echet mean persistence diagreams and mean persistence measures for subsets of different sizes $n$ to the true persistence diagram.

\paragraph{HIV Data.} We consider the HIV dataset collected by  \cite{otter2017roadmap} as the true data set $\mathcal{X}$. The HIV data set consists of 1,088 genomic sequences and the difference between any two sequences is measured by the Hamming distance. Thus, the HIV data set can be viewed as a finite metric space.  We compute its persistent homology based on the VR filtration. We subsample $B = 25$ subsets from $\mathcal{X}$ each consisting of $n$ points, where $n$ ranges from 100 to 300. From these subsamples we compute both the Fr\'echet mean and mean persistence measure.

\begin{figure}[h]
\centering
\begin{subfigure}[t]{0.3\textwidth}
    \centering
    \includegraphics[width=\textwidth]{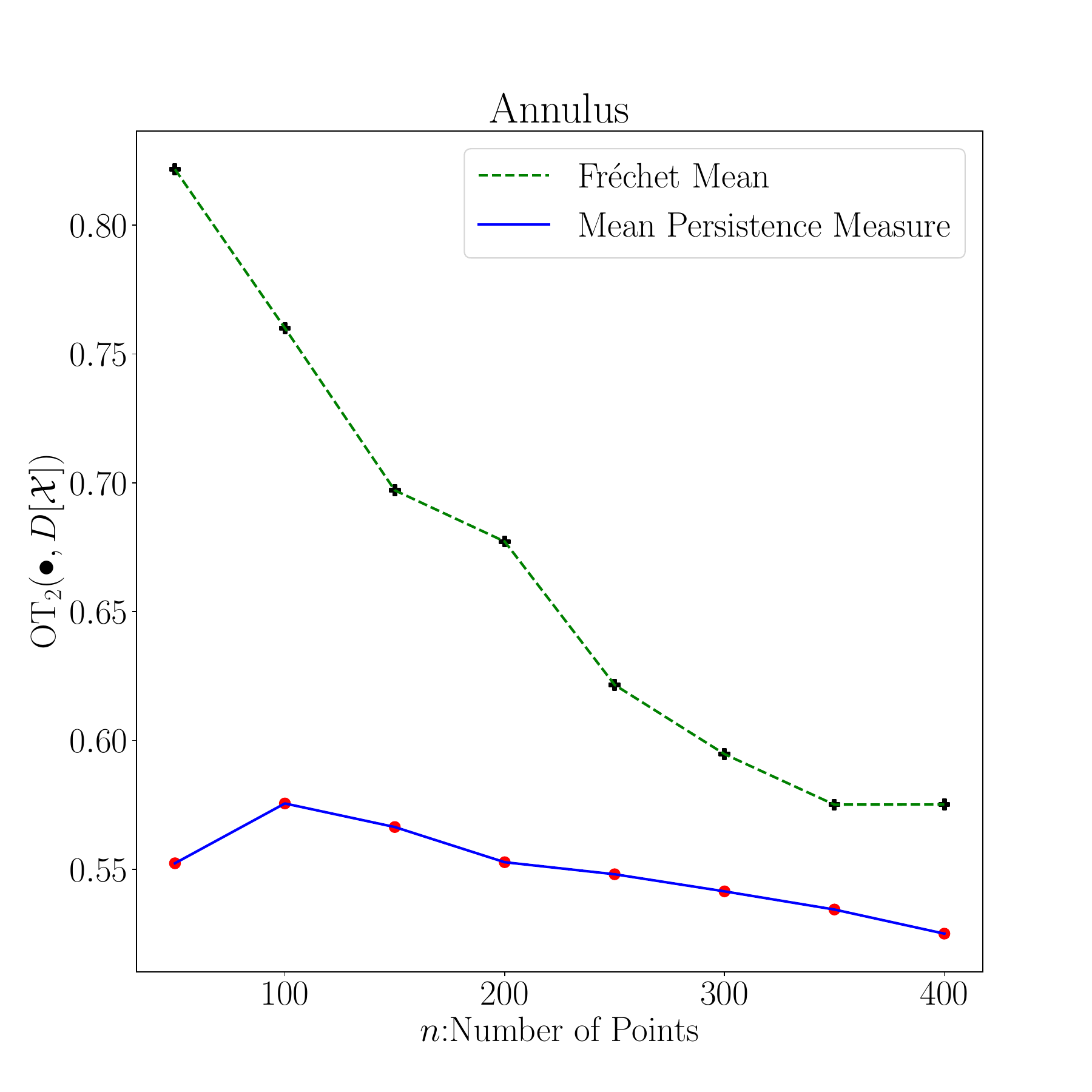}
\end{subfigure}
\begin{subfigure}[t]{0.3\textwidth}
    \centering
    \includegraphics[width=\textwidth]{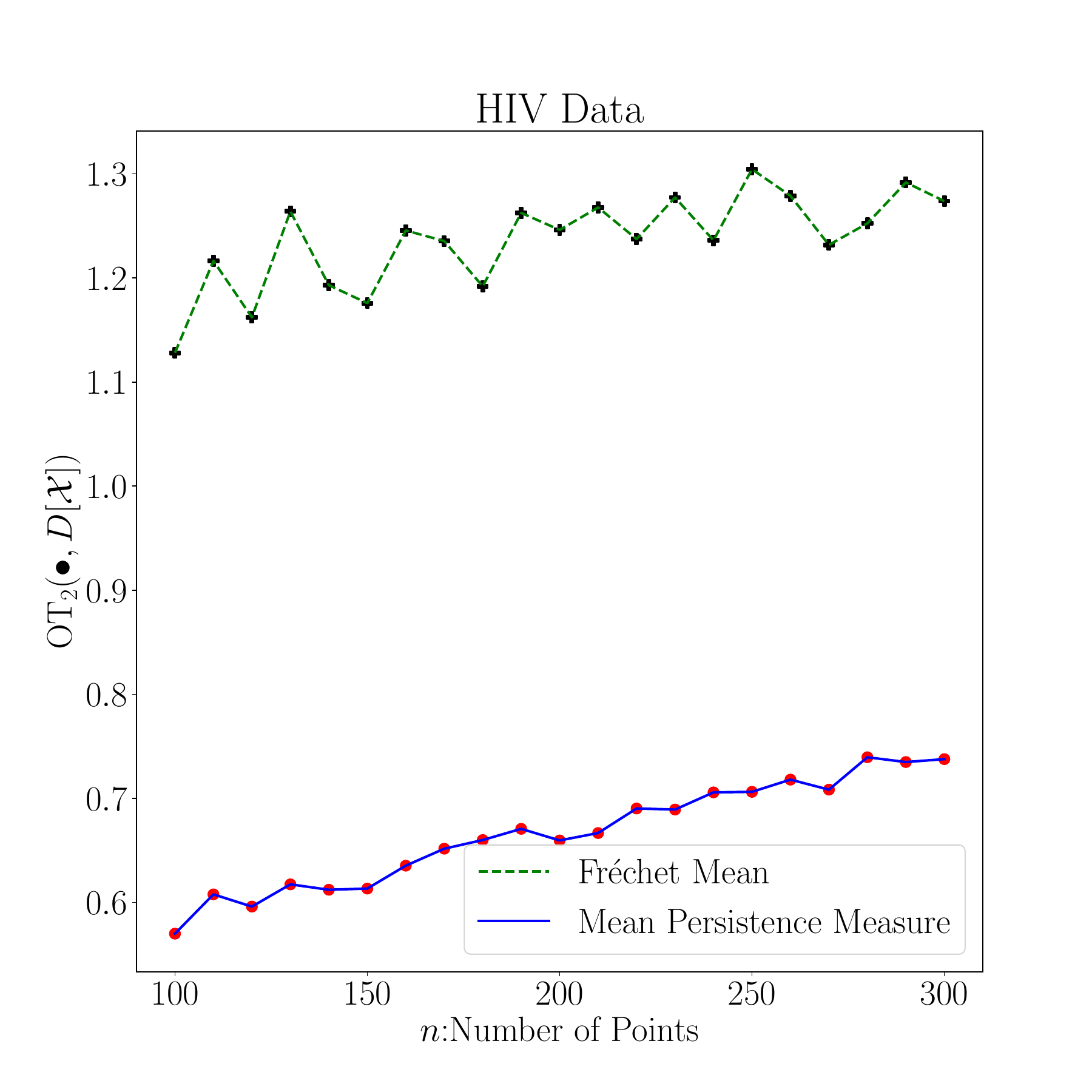}
\end{subfigure}
\begin{subfigure}[t]{0.3\textwidth}
    \centering
    \includegraphics[width=\textwidth]{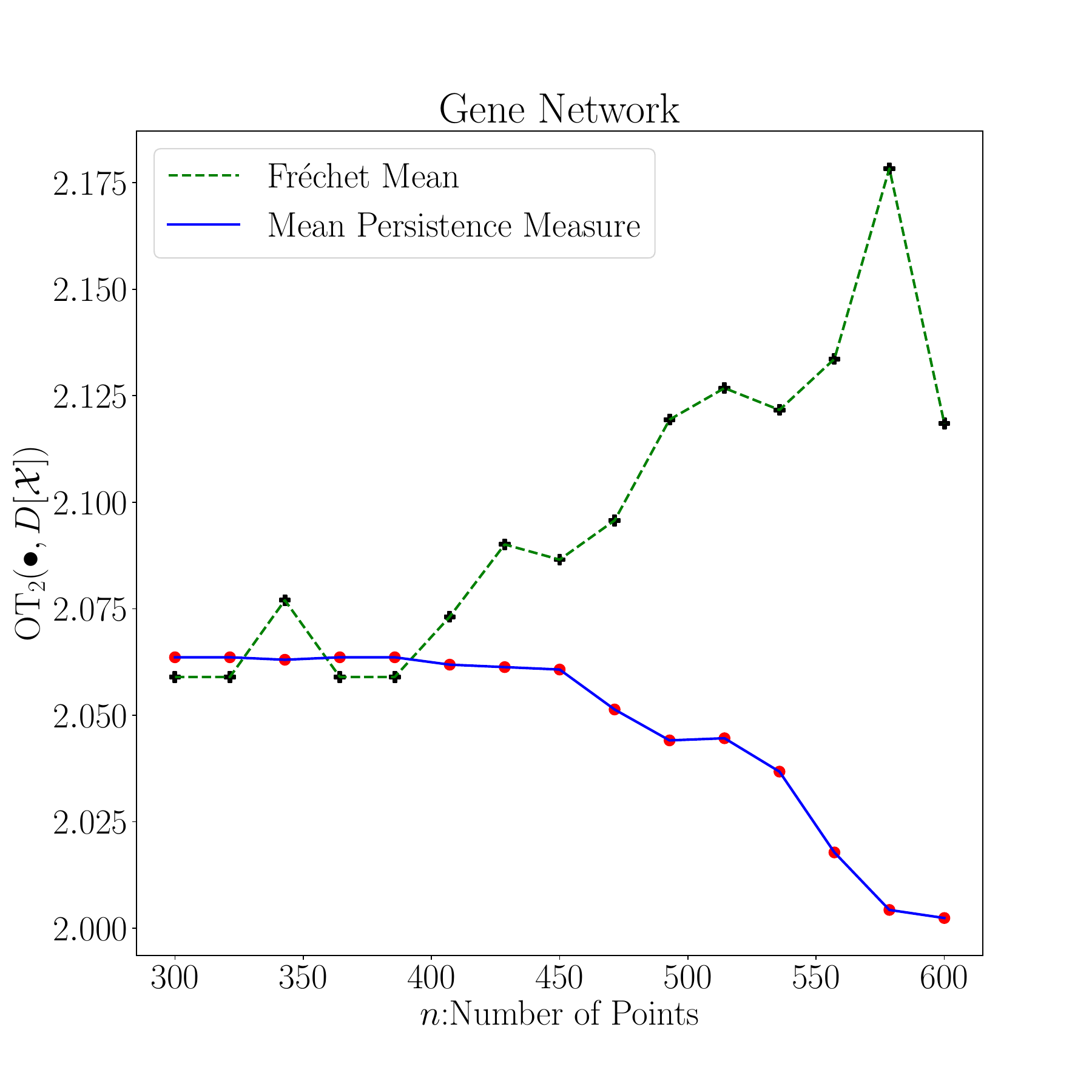}
\end{subfigure}
    \caption{Comparison of Fr\'echet means and mean persistence measures of persistence diagrams computed from subsamples drawn from the Annulus, HIV and Gene Network data.}
    \label{fig:compare-frechet-loss}
\end{figure}

\paragraph{Gene Network.} We use a gene network from \cite{nr,cho2014wormnet} as our true data set $\mathcal{X}$. The gene network is an undirected weighted graph consisting of 924 nodes and 3,233 edges. The nodes represent genes and weights represent intensities of genetic interactions. The persistent homology is computed using the weight matrix. We subsample $B = 30$ subgraphs from $\mathcal{X}$, each consisting of $n$ nodes, where $n$ ranges from 200 to 600. The persistent homology of each subgraph can be computed using the submatrix from the total weight matrix.  We compute the 2-Wasserstein distances between the mean diagrams or measures and the true persistence diagram $D[\mathcal{X}]$. The loss curves are shown in Figure \ref{fig:compare-frechet-loss}. 

\paragraph{Robustness Comparison.} We test the robustness of the Fr\'echet means against the mean persistence measures in our subsampling approach by contaminating the Annulus and Gene Network with Gaussian noise. Specifically, let $W$ be the matrix consisting of points in Annulus or weights in Gene Network. We create noisy data by $\widetilde{W}=W+V$ where $V$ is a matrix of the same shape as $W$ whose elements are i.i.d.~samples from the Gaussian distribution $\mathcal{N}(0,\sigma)$. We compute the mean persistence measures and Fr\'echet means using sample sets from the noisy data and then compute the error with respect to the true persistence diagrams of the original data sets. The loss curves are plotted in Figure \ref{fig:compare-robust}.  

These experimental results show that mean persistence measures are more accurate and more robust than Fr\'echet means. The reason lies in two aspects: from the theoretical perspective, the variance of Fr\'echet means decreases in a complicated manner due to the nonnegative curvature of $(\mathcal{D}_2,\mathrm{W}_2)$, while from the algorithmic prespective, the greedy algorithm returns unstable local minima which lead to fluctuating approximation errors. 

The results for HIV data and Gene Network also demonstrate that subsampling and approximating the true persistence diagram by mean persistence measures also performs well on general finite metric spaces.  

\begin{figure}[htbp]
\centering
    \begin{subfigure}[t]{0.45\textwidth}
        \includegraphics[width=\textwidth]{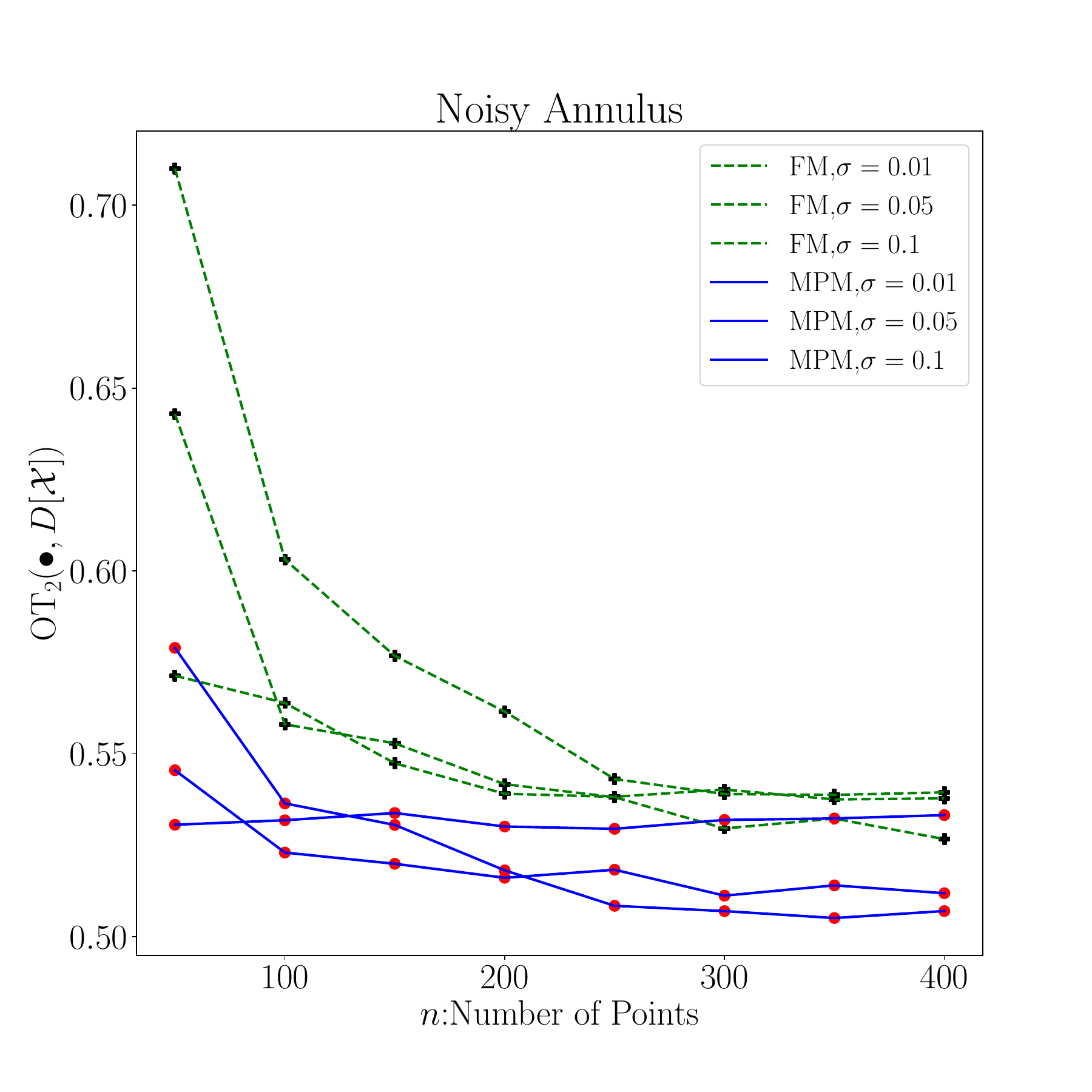}
    \end{subfigure}
    \begin{subfigure}[t]{0.45\textwidth}
        \includegraphics[width=\textwidth]{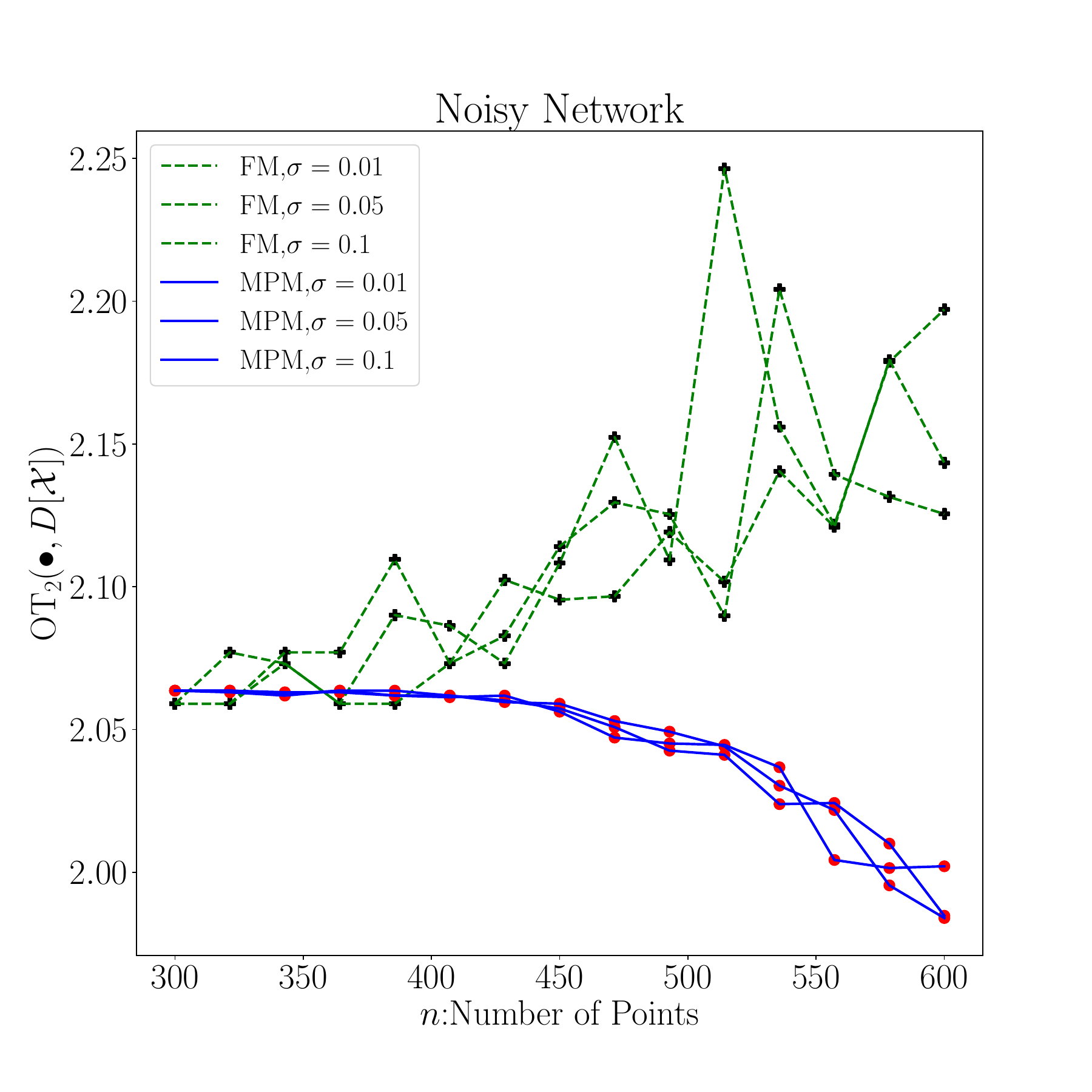}
    \end{subfigure}
    
    \caption{Robustness comparison for computed Fr\'echet means and mean persistence measure of persistence diagrams computed from subsamples drawn from the Annulus data and Gene Network data.}
    \label{fig:compare-robust}
\end{figure}

\section{Additional Application Results}

We now present two additional applications demonstrating the applicability of our results.

\subsection{Computing the Topology of Shapes}
\label{subsec:knot_lock}

We study two datasets, `Knot' and `Lock,' obtained from the publicly available Digital Shape Workbench shape repository.

\paragraph{Knot.} The underlying manifold for Knot is a tubular trefoil knot. The original point cloud consists of $N = 478,704$ points, which makes it infeasible to compute persistent homology directly. We subsample $B = 25$ subsets, each consisting of $n = 9,500$ points from the original set, and compute their 1-dimensional persistence diagrams. We manually filter points with persistence less than 0.1 since the star-like ornaments on the surface will generate small noisy circles during filtration. The mean persistence measure and Fréchet mean of persistence diagrams are presented in Figure \ref{fig:knot}. From the mean persistence measure we can see three clusters of points on the half plane. The top cluster corresponds to the longest circle, i.e., the trefoil knot, while the bottom cluster corresponds to the small circle of the tube. The middle cluster represents homology classes created by self-intersections of the surface during its growth in VR filtration. The top point in Fréchet mean persistence diagram is consistent with the top cluster in mean persistence measure which indicates a longest cycle. However, we see that the Fréchet mean shows artificial homological noise near the boundary, even though each persistence diagram does not show points with small persistence. 

To quantize the mean persistence measures, we use the resulting computed mean persistence measure and Fréchet mean diagrams to initialize 1 centroid at $(0.05,\, 0.35)$, 4 centroids around $(0.07,\, 0.2)$ and 2 centroids around $(0.04,\, 0.1)$, and then apply the quantization algorithm in \cite{divol-2021-estimation,10.1214/21-EJS1834}. Figure \ref{fig:quan-knot} shows the quantized persistence diagram of the mean persistence measure.

\begin{figure}[htbp]
    \begin{subfigure}[t]{0.23\textwidth}
    \centering
        \includegraphics[width=\textwidth]{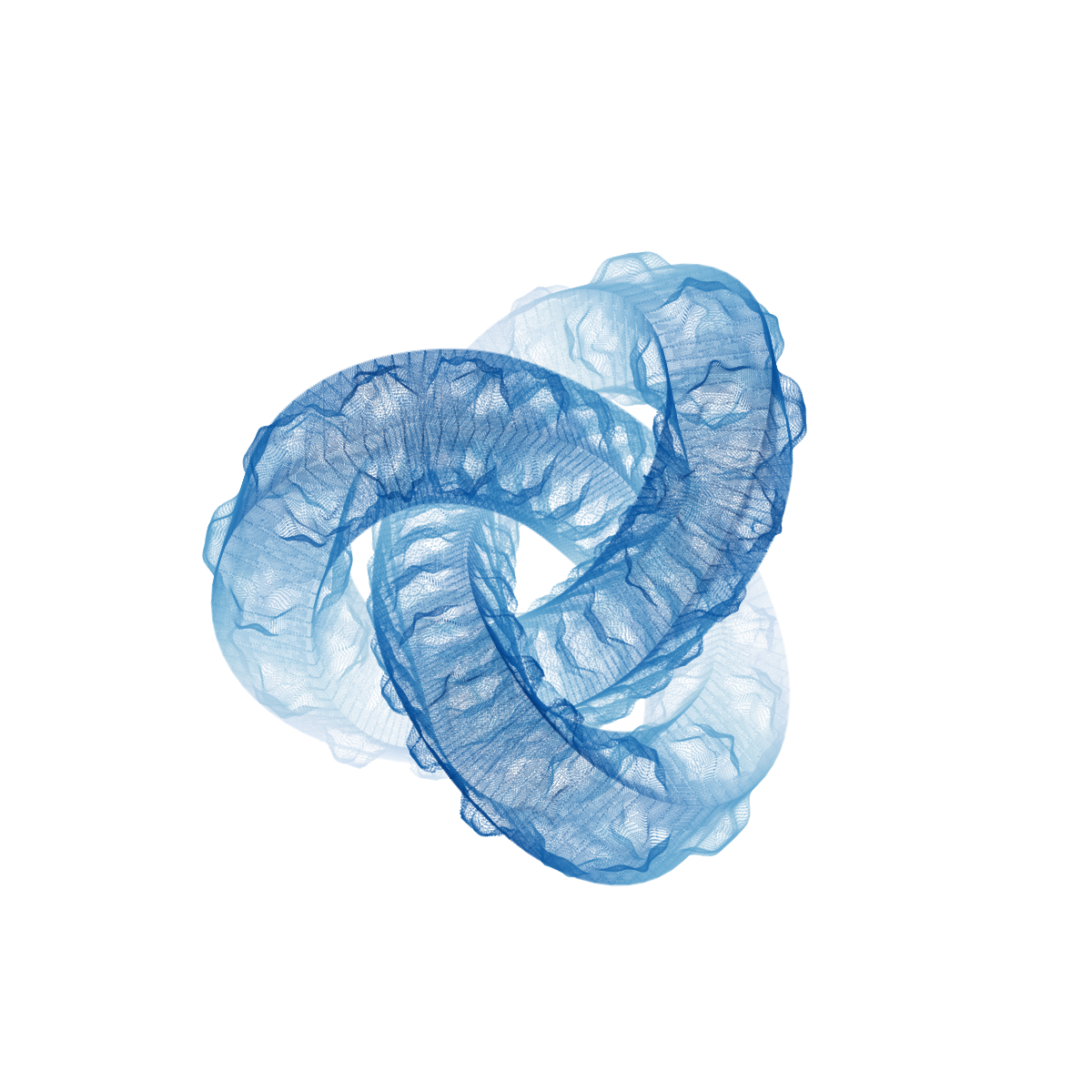}
        \caption{Original data}
    \end{subfigure}
        \begin{subfigure}[t]{0.23\textwidth}
    \centering
        \includegraphics[width=\textwidth]{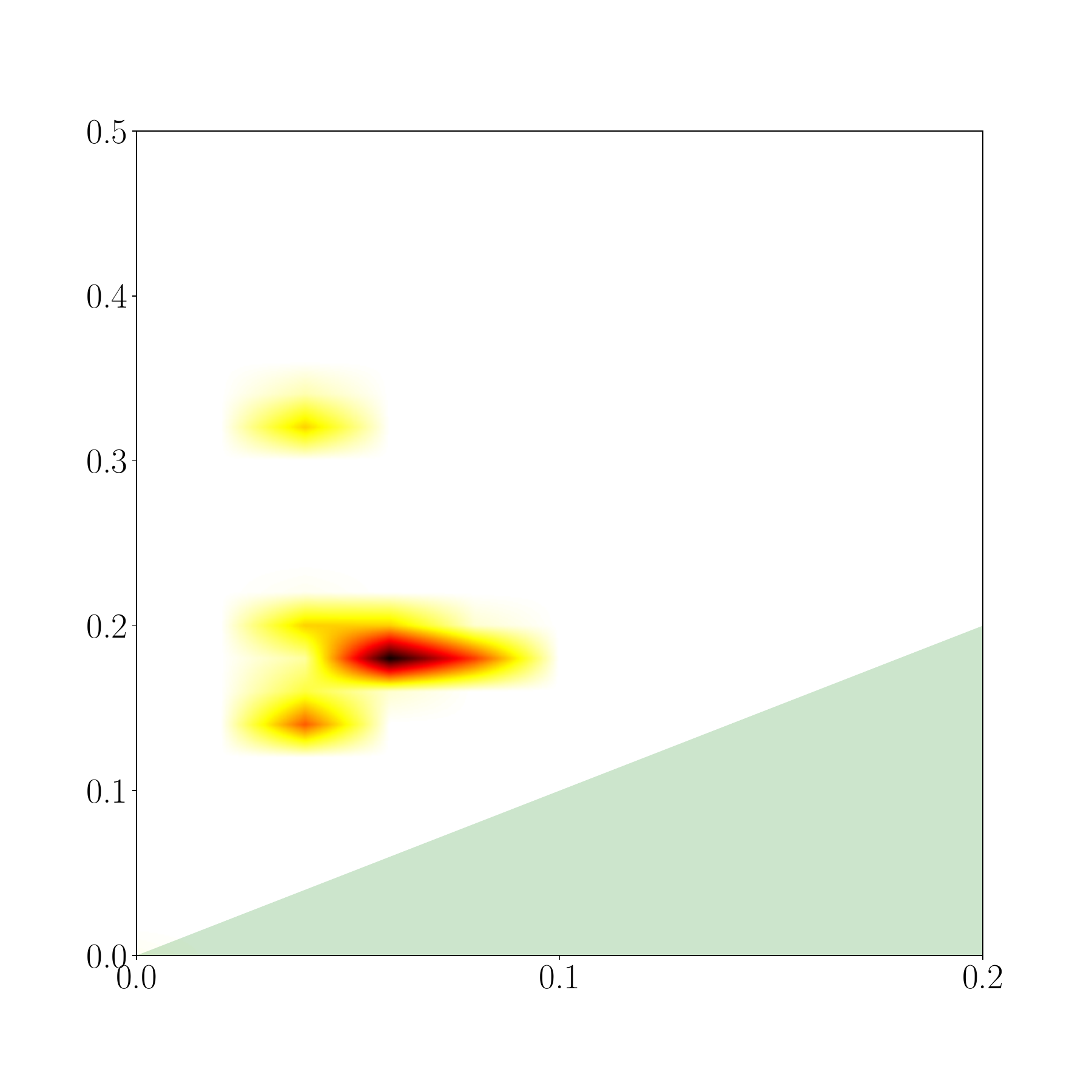}
        \caption{Mean persistence measure}
    \end{subfigure}
        \begin{subfigure}[t]{0.23\textwidth}
    \centering
        \includegraphics[width=\textwidth]{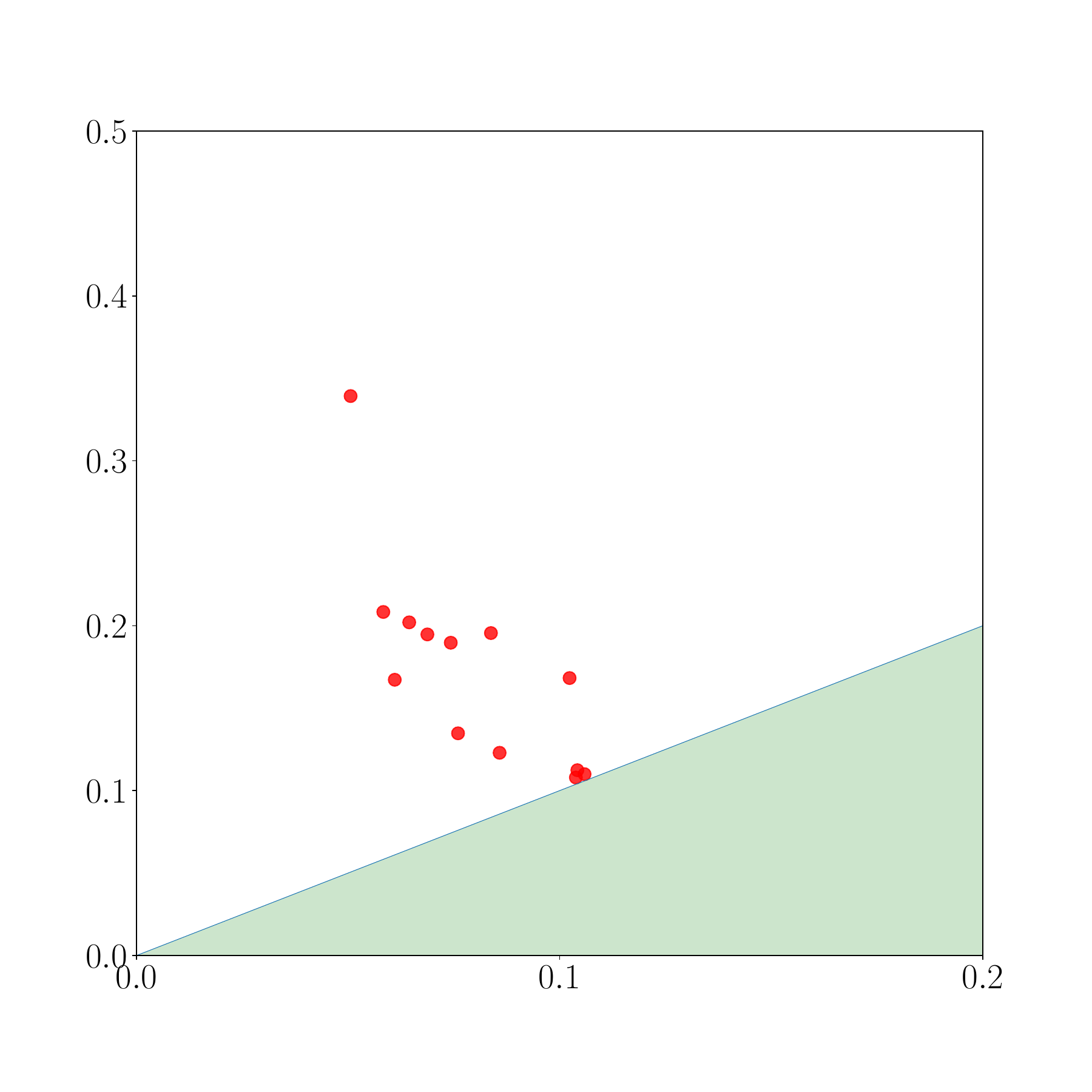}
        \caption{Fréchet mean}
    \end{subfigure}
    \begin{subfigure}[t]{0.23\textwidth}
    \centering
        \includegraphics[width=\textwidth]{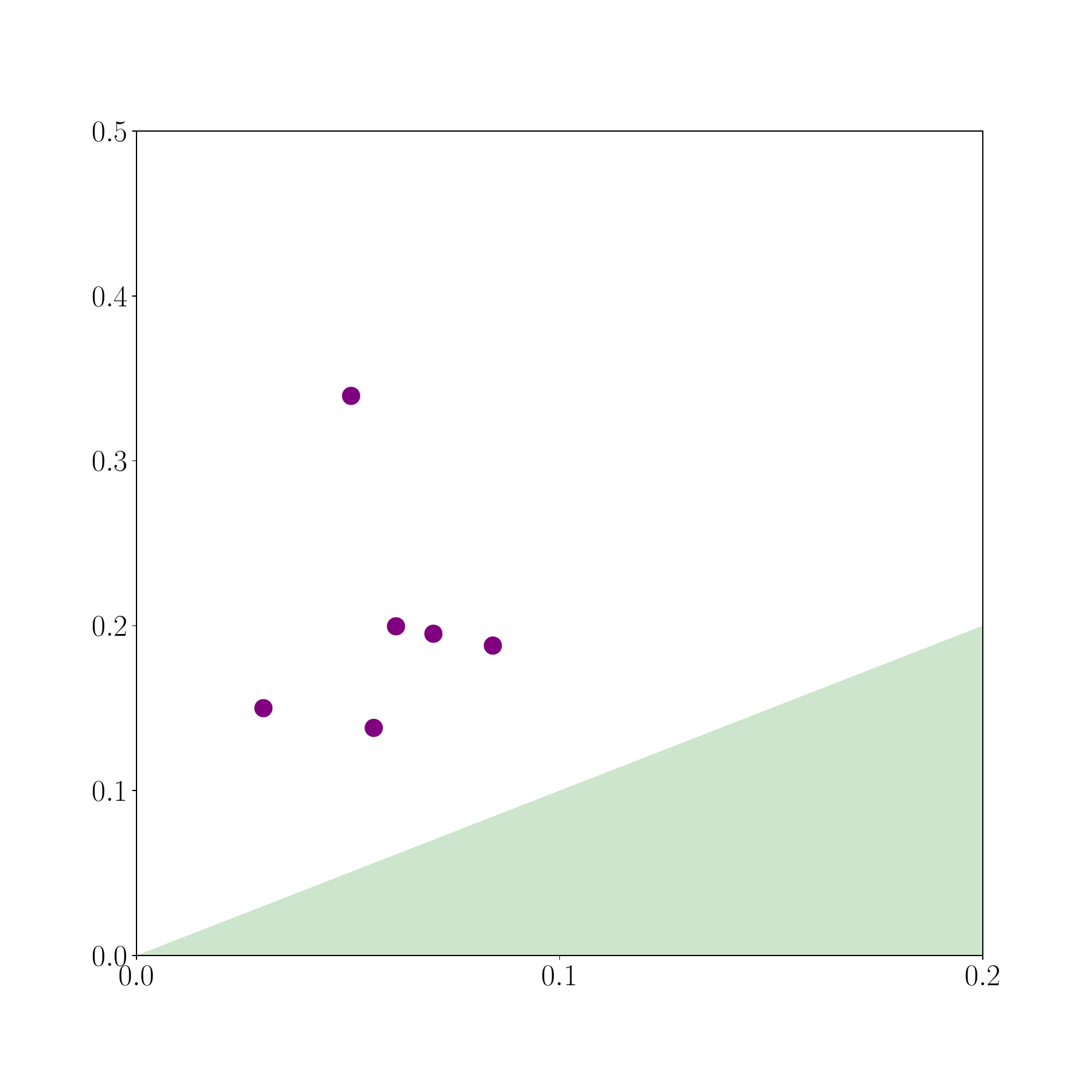}
        \caption{Quantization of (b)}
        \label{fig:quan-knot}
    \end{subfigure}
    \caption{Approximation of the persistent homology of Knot. (a) Original point cloud with $N = 478,704$ points; (b) Mean persistence measure for dimension 1 homology for one subsample of $n = 9,500$ points; (c) Fréchet mean for dimension 1 homology for one subsample of $n = 9,500$ points; (d) Quantization of the mean persistence measure shown in (b).  In (b), the bottom cluster shows the small circle of the tube, while the middle cluster in the mean persistence measure indicates homology classes generated by intersections of the surfaces during VR filtration.  In (c), the topmost point in the Fréchet mean represents the homology of the trefoil knot, which is also presented by the top yellow cluster in (b).}
    \label{fig:knot}
\end{figure}

\paragraph{Lock.} The point cloud is roughly composed of three parts, with one tube in the center and two caps on two sides. Each cap can be viewed as a closed surface of genus 4. Thus the underlying manifold for Lock is homeomorphic to the connected sum of a torus and two surfaces of genus 4, which is essentially a closed surface of genus 9. The original point cloud consists of $N = 460,592$ points, which again is too large to compute persistent homology on directly. We subsample $B = 30$ subsets, each consisting of $n = 9,000$ points from the original set and compute their 1-dimensional persistence diagrams. The mean persistence measure and Fréchet mean of persistence diagrams are presented in Figure \ref{fig:lock}. From the Fréchet mean persistence diagram we can easily see the point with the largest persistence. This point corresponds to the largest circle in the central tube. Right below the top point there are 8 points corresponding to 8 circles which bound holes on the caps. In theory, there should be 18 circles generating the 1-dimensional homology of the underlying manifold. However, the remaining 9 circles are of small radii and are mixed up with noisy circles generated by VR filtration. The middle cluster and bottom cluster in the mean persistence measure are consistent with results shown in Fréchet mean. The only difference is that the top cluster is not obvious (in very shallow yellow). This is a matter of visualization as there is only one point at the top in each persistence diagram and they cannot concentrate as a cluster compared to other group of points. 

As above, based on the observations from mean persistence measure and Fréchet mean, we initialize 1 centroid at $(0.05,\, 0.3)$, 8 centroids around $(0.05,\, 0.2)$ and 9 centroids around $(0.05,\, 0.1)$, and then apply the quantization algorithm. Figure \ref{fig:quan-loc} shows the quantized persistence diagram from the mean persistence measure, which can be regarded as a faithful representative of the persistent homology of the original data.  
\begin{figure}[htbp]
\begin{subfigure}[t]{0.23\textwidth}
    \centering
    \includegraphics[width=\textwidth]{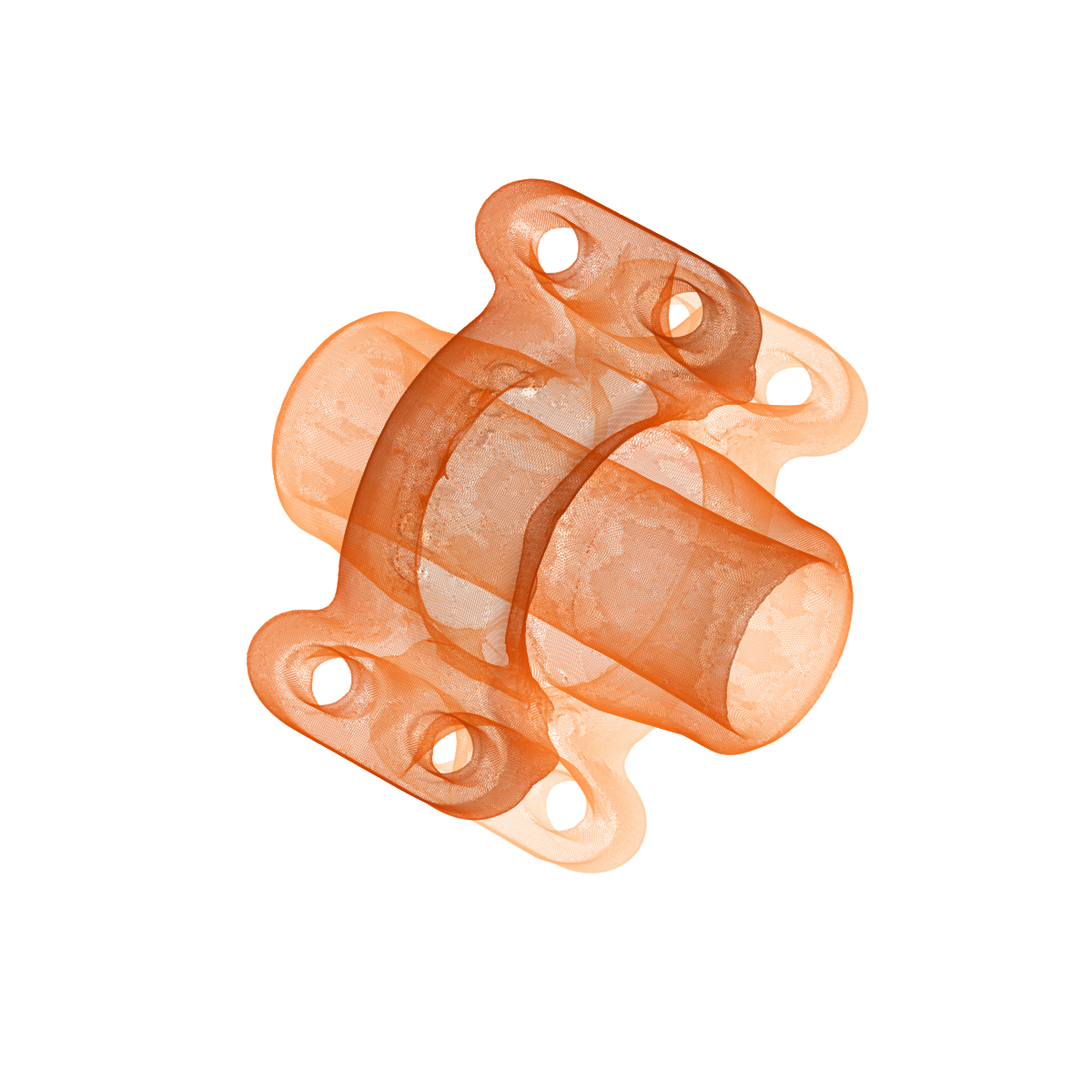}
    \caption{Original data}
\end{subfigure}
\begin{subfigure}[t]{0.23\textwidth}
    \centering
    \includegraphics[width=\textwidth]{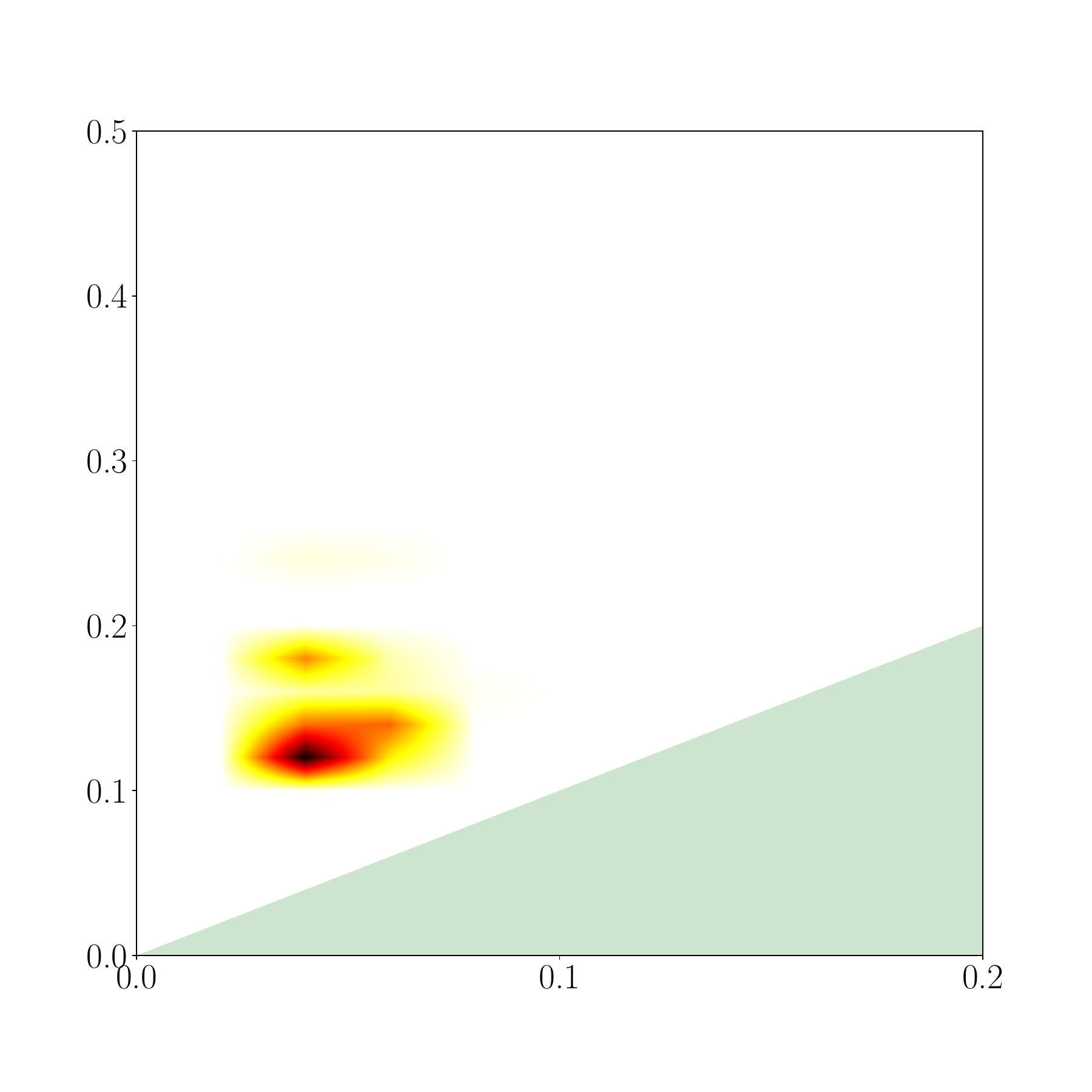}
    \caption{Mean persistence measure}
\end{subfigure}
\begin{subfigure}[t]{0.23\textwidth}
    \centering
    \includegraphics[width=\textwidth]{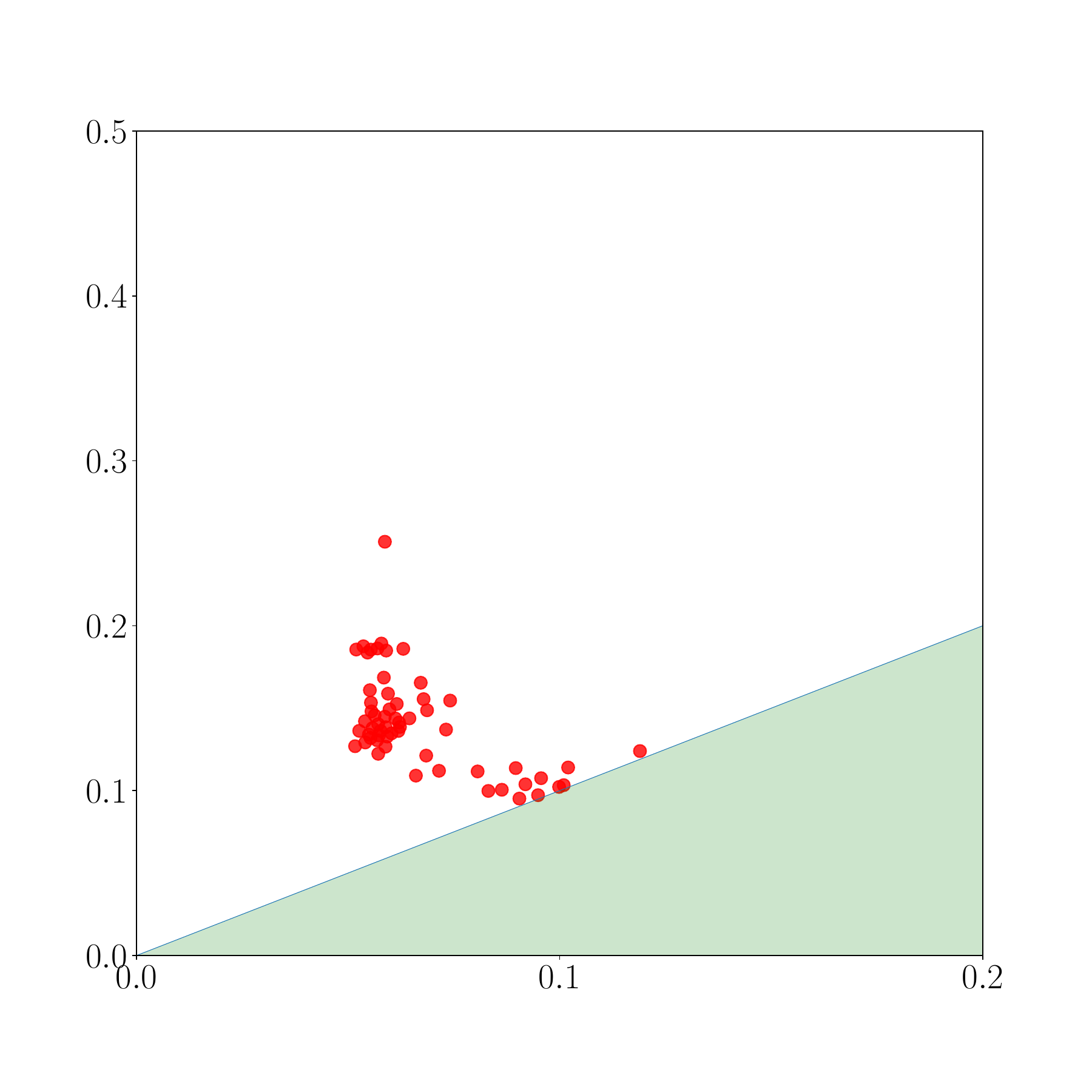}
    \caption{Fréchet mean}
\end{subfigure}
\begin{subfigure}[t]{0.23\textwidth}
    \centering
    \includegraphics[width=\textwidth]{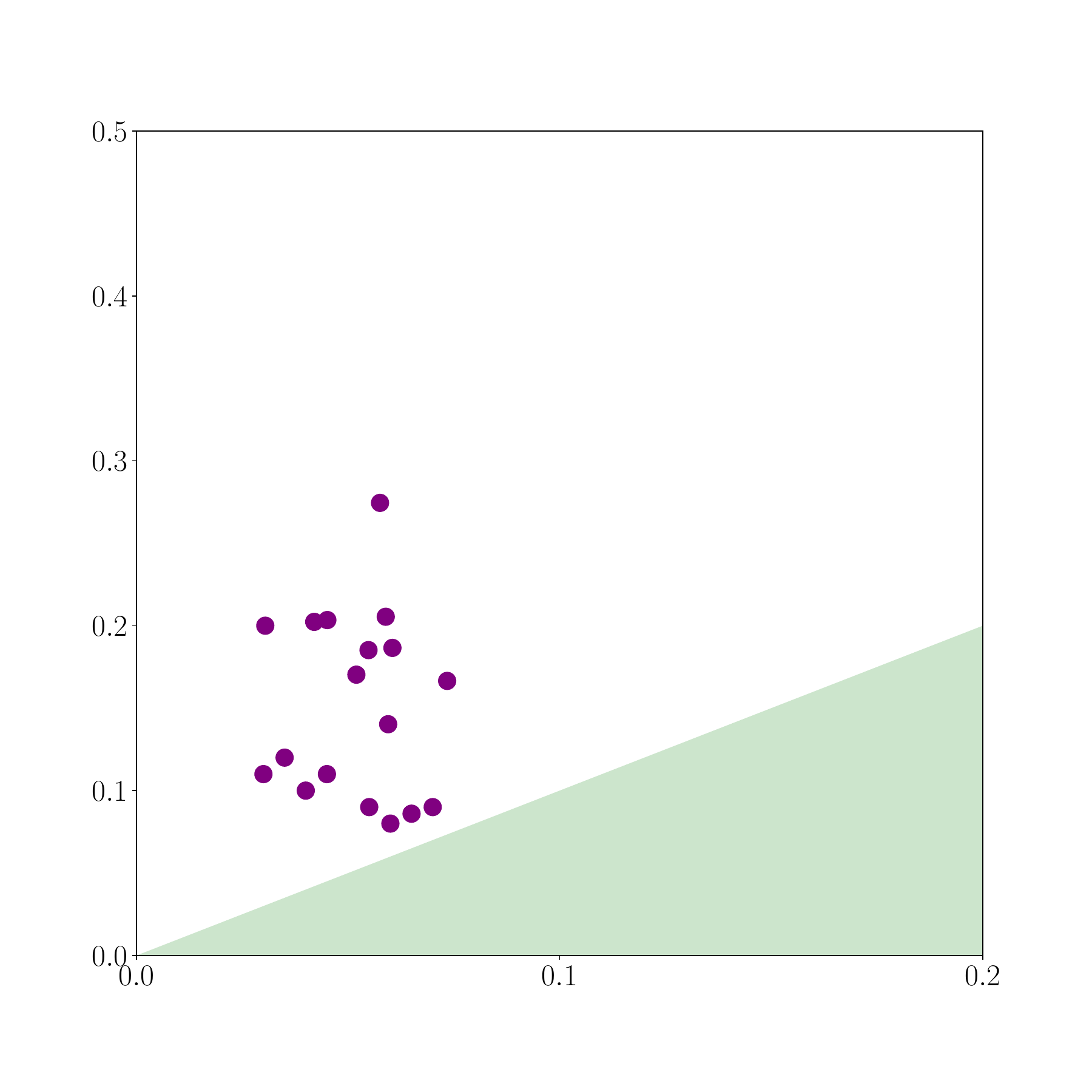}
    \caption{Quantization of (b)}
    \label{fig:quan-loc}
\end{subfigure}
    \caption{Approximation of the persistent homology of Lock. (a) Original point cloud with $N = 460,592$ points; (b) Mean persistence measure for dimension 1 homology computed from one subsample of $n=9,000$ points; (c) Fréchet mean for dimension 1 homology computed from one subsample of $n=9,000$ points; (d) Quantization of the mean persistence measure shown in (b).
    In (b), there are two clear clusters and a cluster at the top in very shallow yellow.  Combined with (c), the point with largest persistence (with respect to the top cluster in shallow yellow) corresponds to the largest circle in the central tube, while the points with $y$-coordinate close to 0.2 (with respect the middle cluster) correspond to 8 circles bounding the holes on two caps, the remaining points (with respect the bottom cluster) correspond to small circles together with some topological noise generated by the VR filtration.
    }
    \label{fig:lock}
\end{figure}

From these two real data examples we can see both merits and limitations of both the mean persistence measure and Fréchet mean as representatives of central tendency for persistence diagrams computed from subsampled data: the clusters (concentrations of measures) in mean persistence measures indicate partial locations of the persistent homology of the original data. However, we cannot read the actual number of points as mean persistence measures are not diagrams. In addition, points with large persistence may not group as clusters, which makes them difficult to visualize. In contrast, Fréchet means are diagrams so we can read the multiplicity of points, and points with large persistence are easy to visualize. However, Fréchet means can generate artificial noise, which can mix up with other points and affect the identification of ``true'' persistent homology.  To bypass the difficulty of interpretation for mean persistence measures, quantization may be applied to obtain a representative persistence diagram from mean persistence measures, with the results from the computed Fréchet mean to drive the initialization procedure.

\subsection{A Shape Clustering Task}

We now use the mean persistence measure and Fr\'echet mean of persistence diagrams in a machine learning task of shape clustering on real-world data. The Mechanical Components Benchmark (MCB) is a large-scale dataset of 3D objects of mechanical components collected from online 3D computer aided design (CAD) repositories \citep{sangpil2020large}. MCB has 18,038 point cloud datasets with 25 classes. The size of each point cloud has a wide range from hundreds to millions of points. The quality of each point cloud also varies, as some point clouds do not present a clear shape. We extract point clouds from two classes: `Bearing' and `Motor.' We discarded datasets with extremely small and large numbers of points. After this pre-selection, we obtain 74 sets from Bearing and 53 sets from Motor. Each dataset consists of a range from $N = 30,000$ to $N = 250,000$ points. Figure \ref{fig:bearing-motor} shows some examples from two classes. 

\begin{figure}[htbp]
\centering
    \begin{subfigure}[t]{0.23\textwidth}
        \centering
        \includegraphics[width=\textwidth]{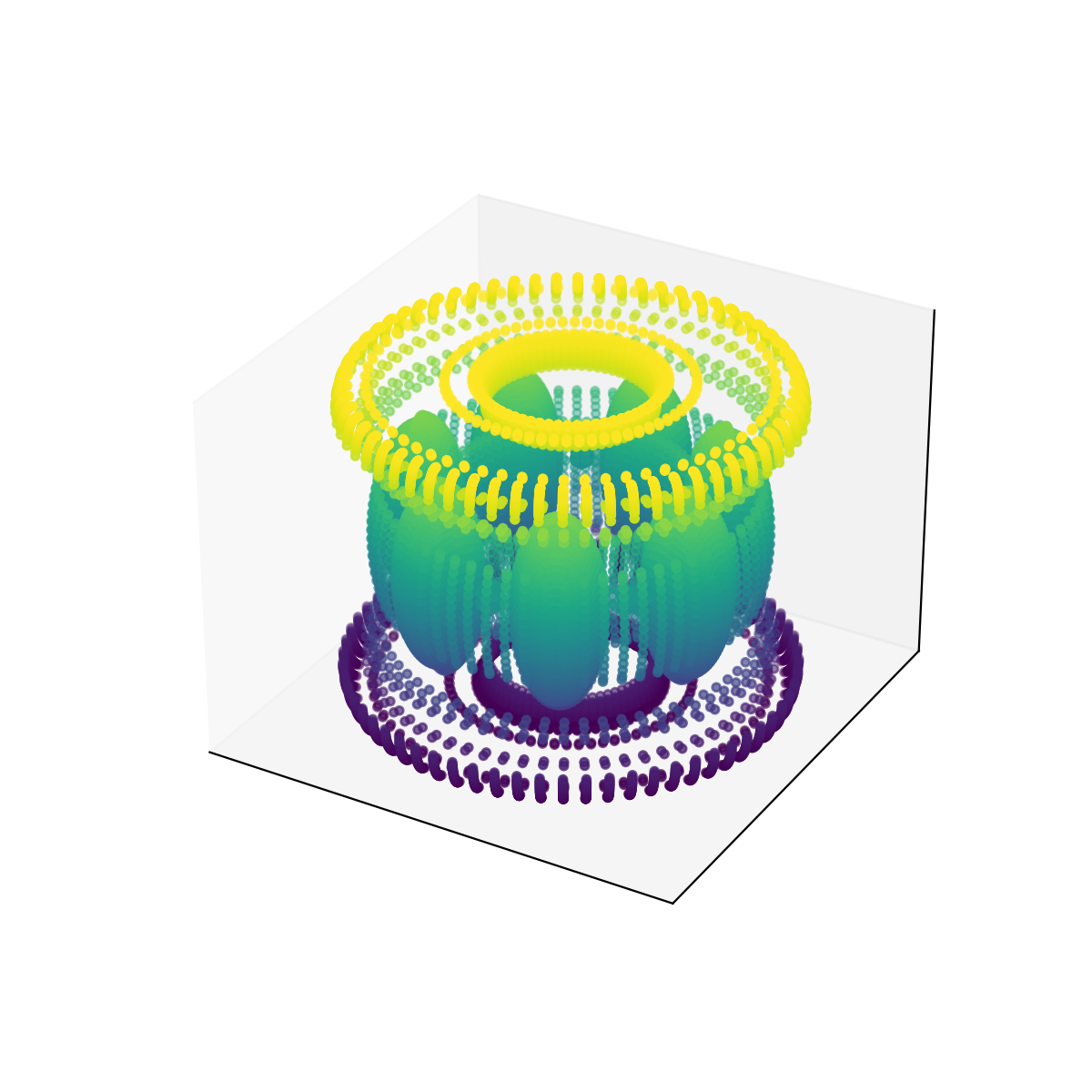}
    \end{subfigure}
        \begin{subfigure}[t]{0.23\textwidth}
        \centering
        \includegraphics[width=\textwidth]{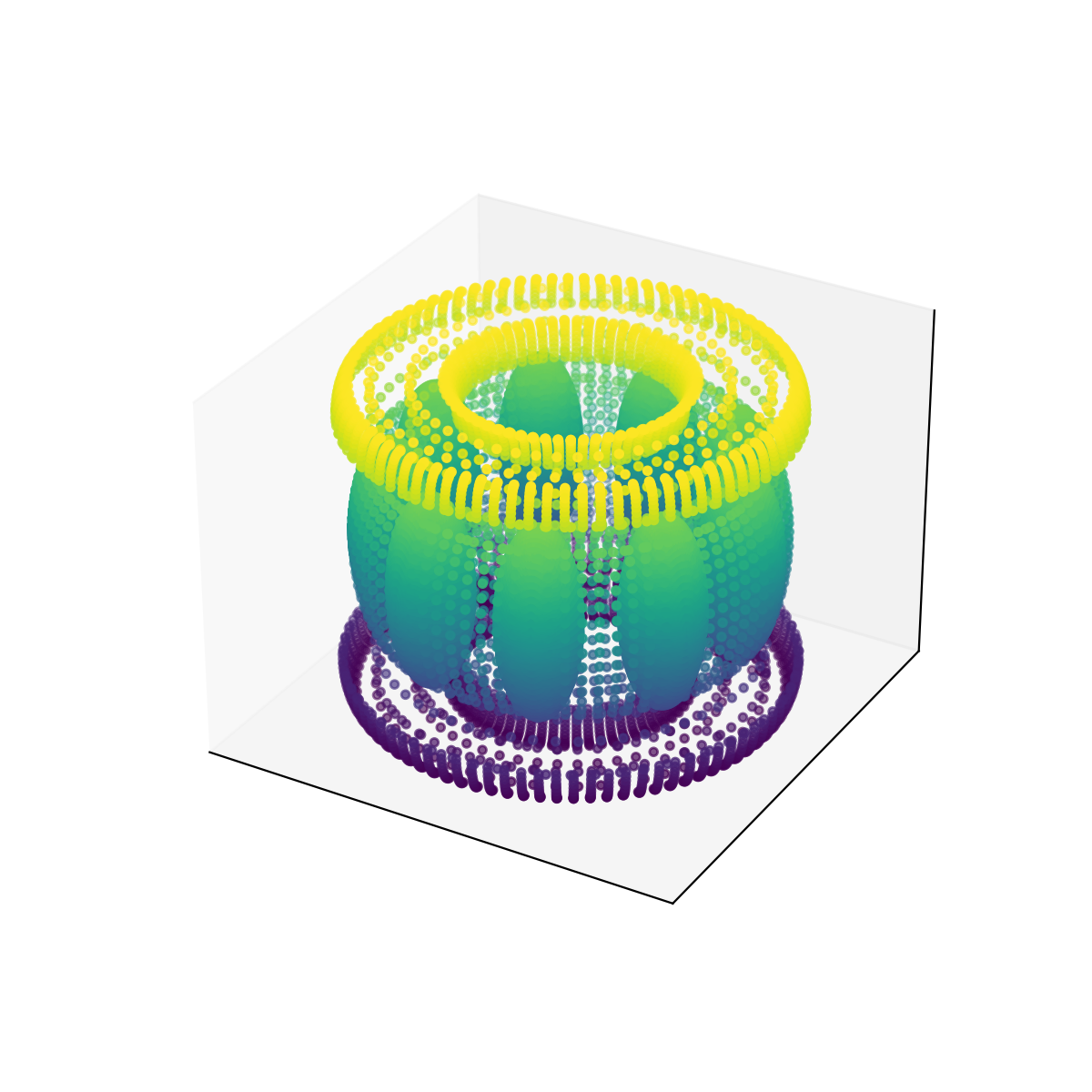}
    \end{subfigure}
        \begin{subfigure}[t]{0.23\textwidth}
        \centering
        \includegraphics[width=\textwidth]{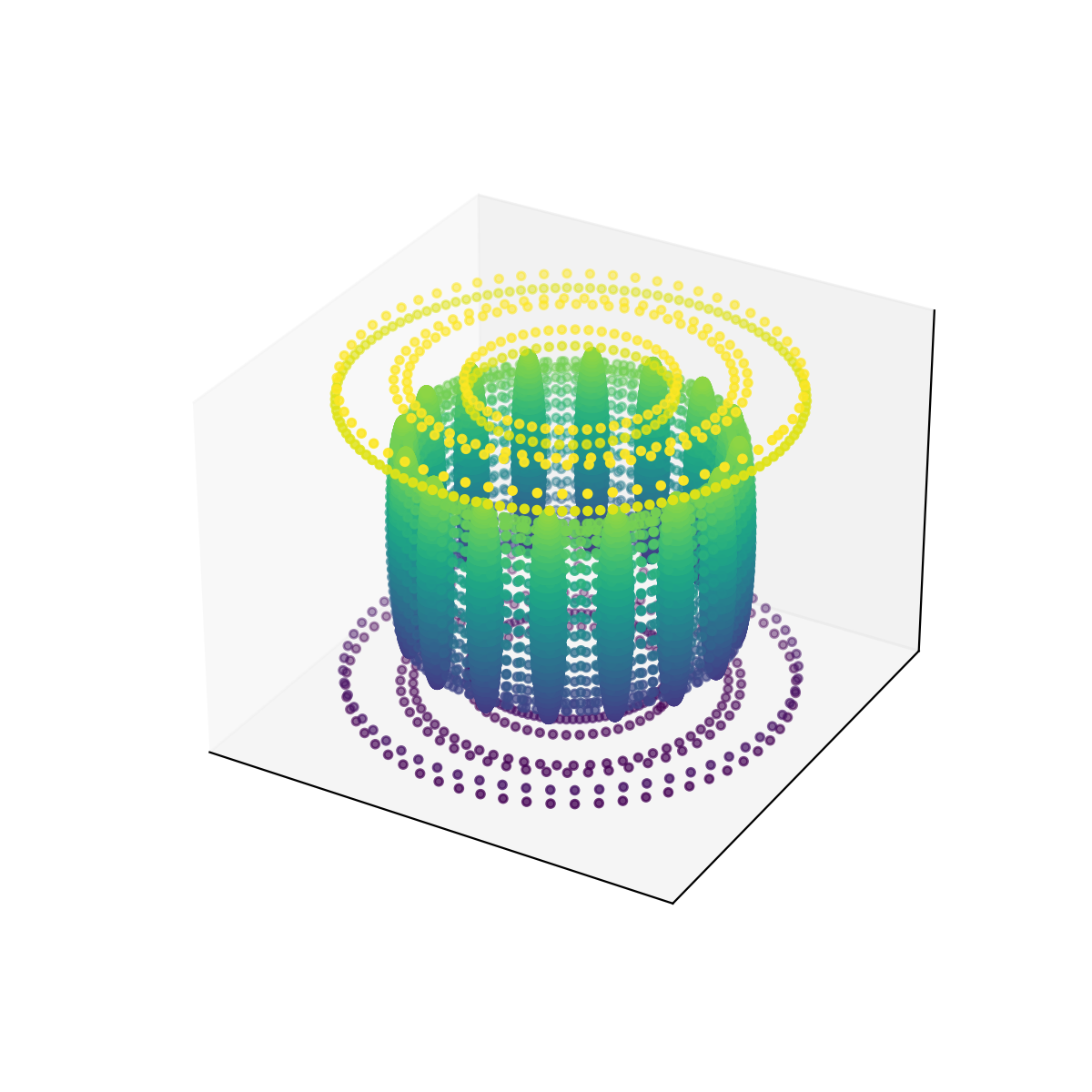}
    \end{subfigure}
        \begin{subfigure}[t]{0.23\textwidth}
        \centering
        \includegraphics[width=\textwidth]{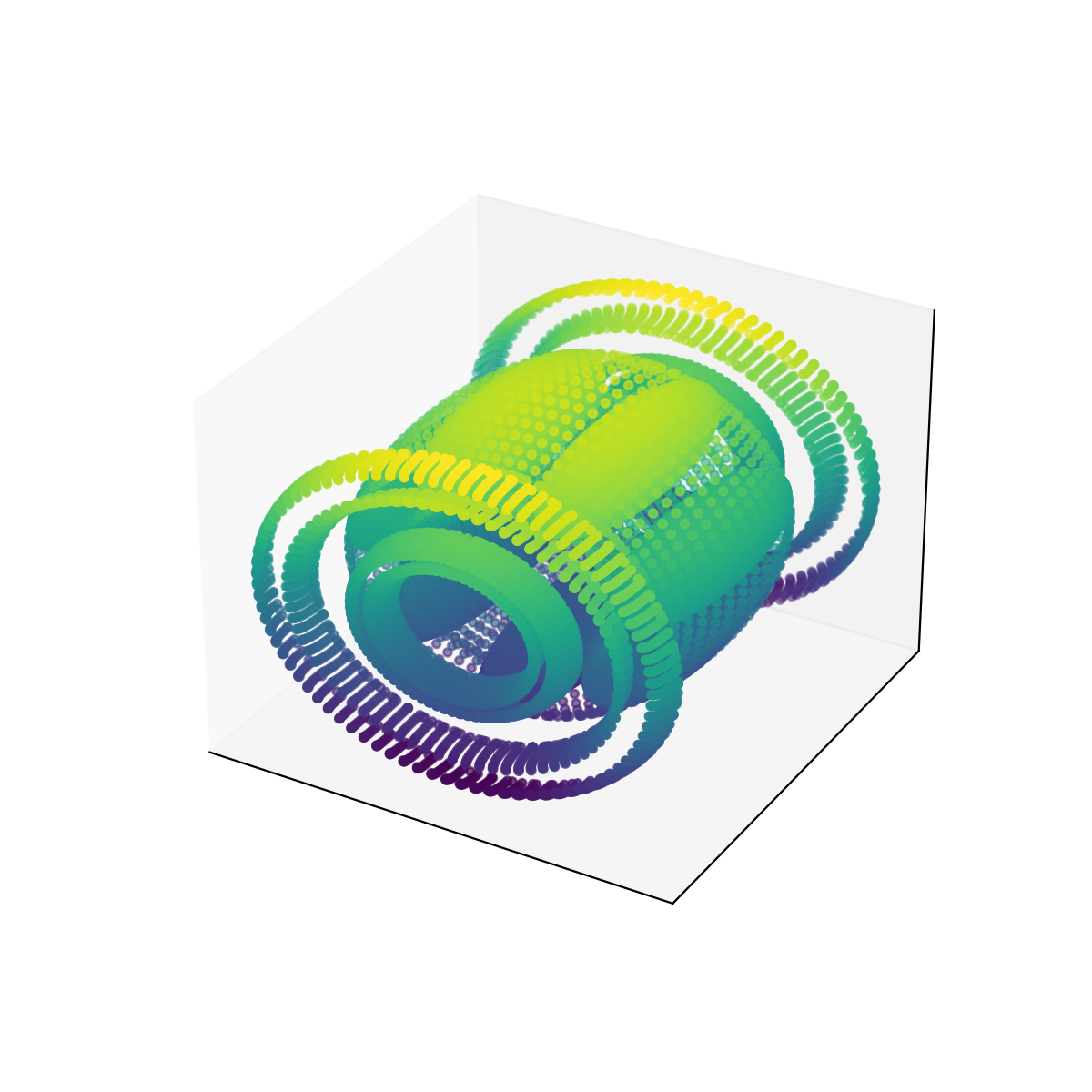}
    \end{subfigure}\\
    
        \begin{subfigure}[t]{0.23\textwidth}
        \centering
        \includegraphics[width=\textwidth]{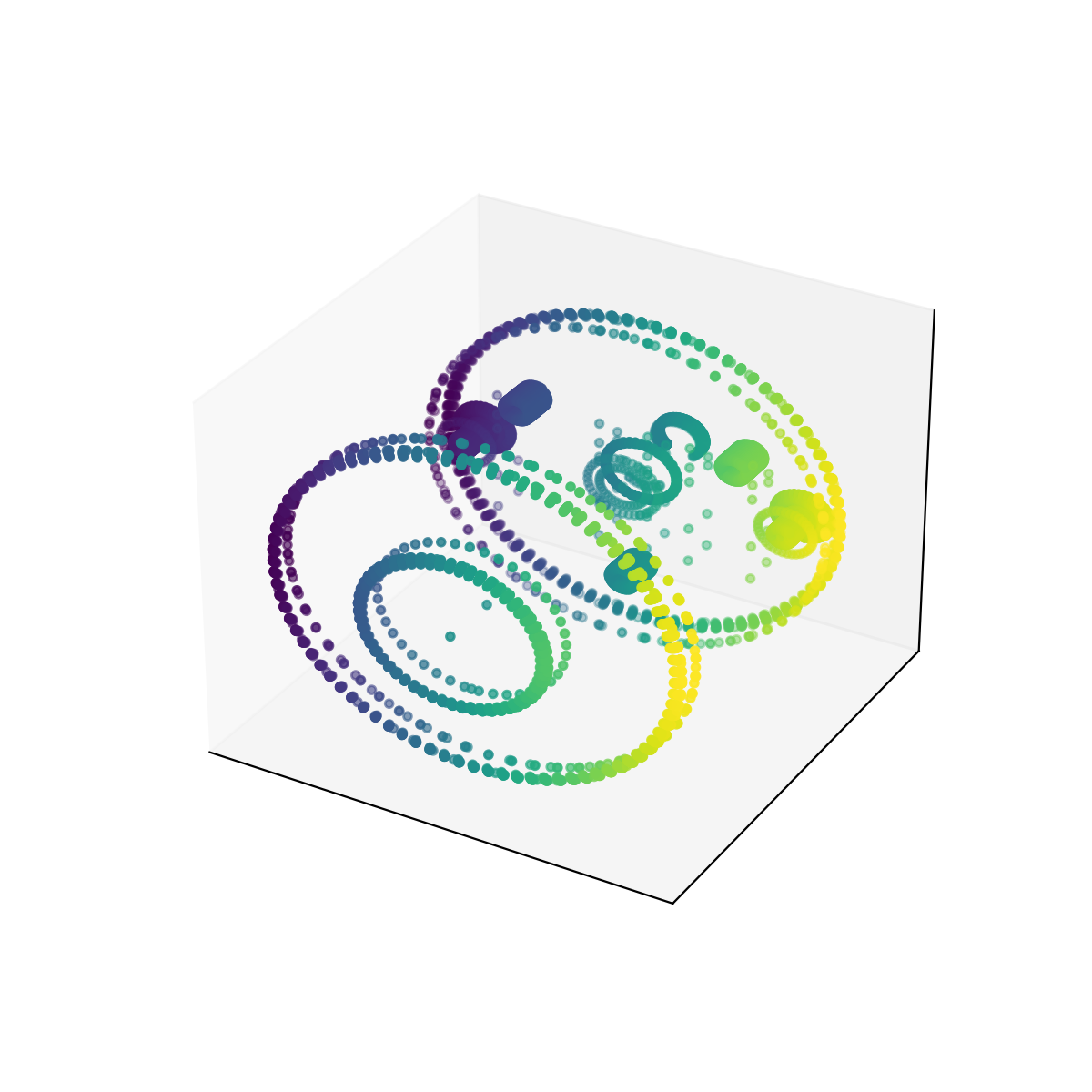}
    \end{subfigure}
    \begin{subfigure}[t]{0.23\textwidth}
        \centering
        \includegraphics[width=\textwidth]{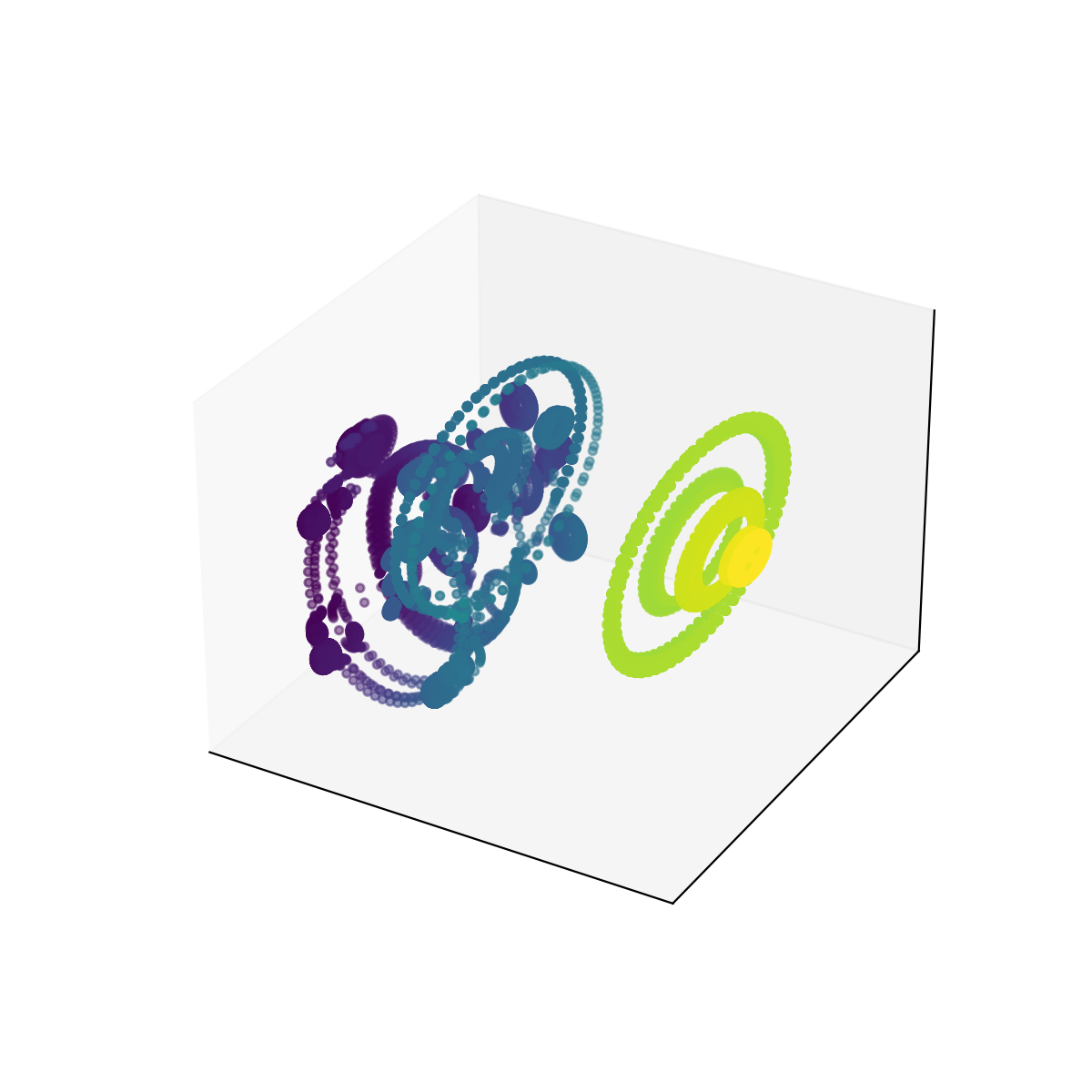}
    \end{subfigure}
    \begin{subfigure}[t]{0.23\textwidth}
        \centering
        \includegraphics[width=\textwidth]{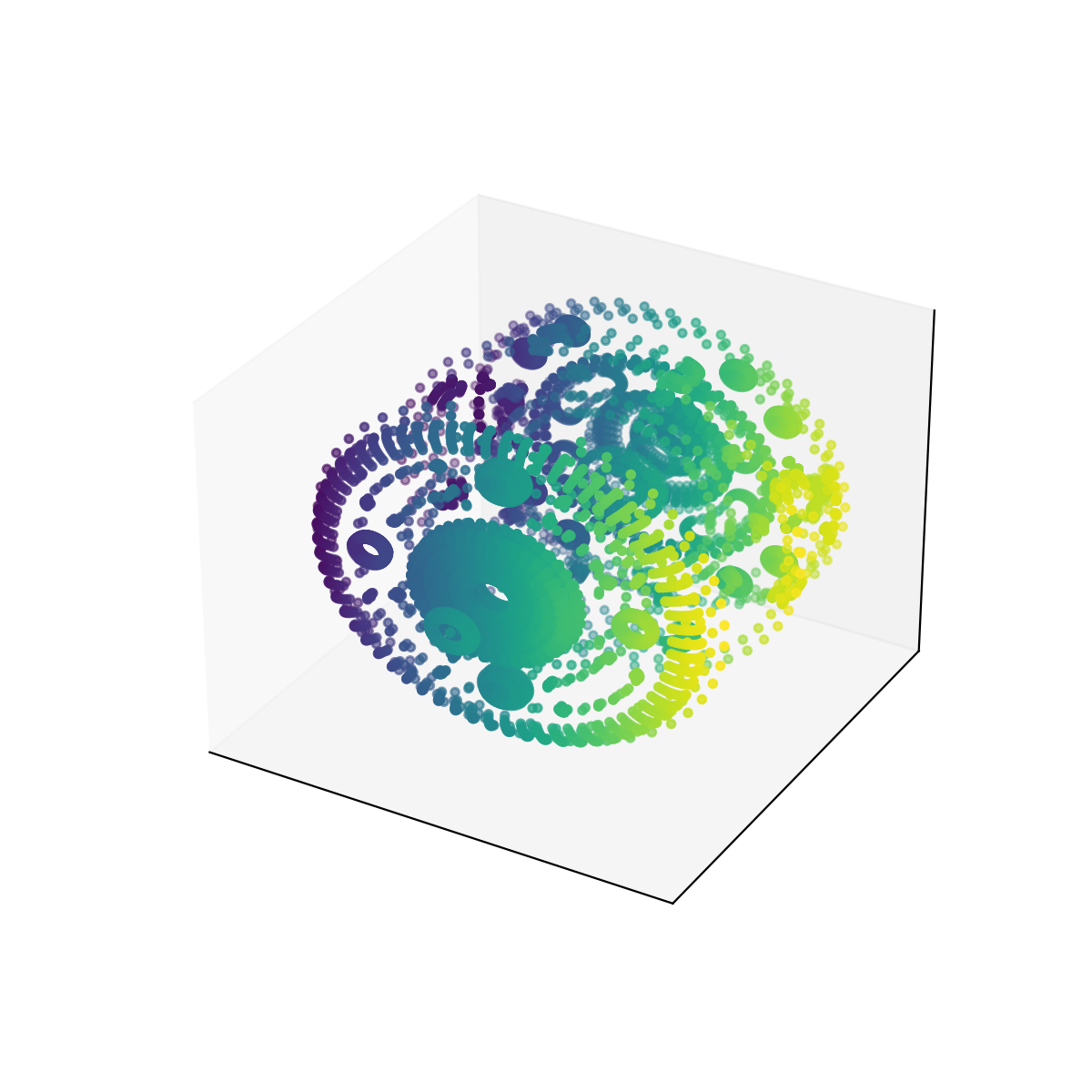}
    \end{subfigure}
    \begin{subfigure}[t]{0.23\textwidth}
        \centering
        \includegraphics[width=\textwidth]{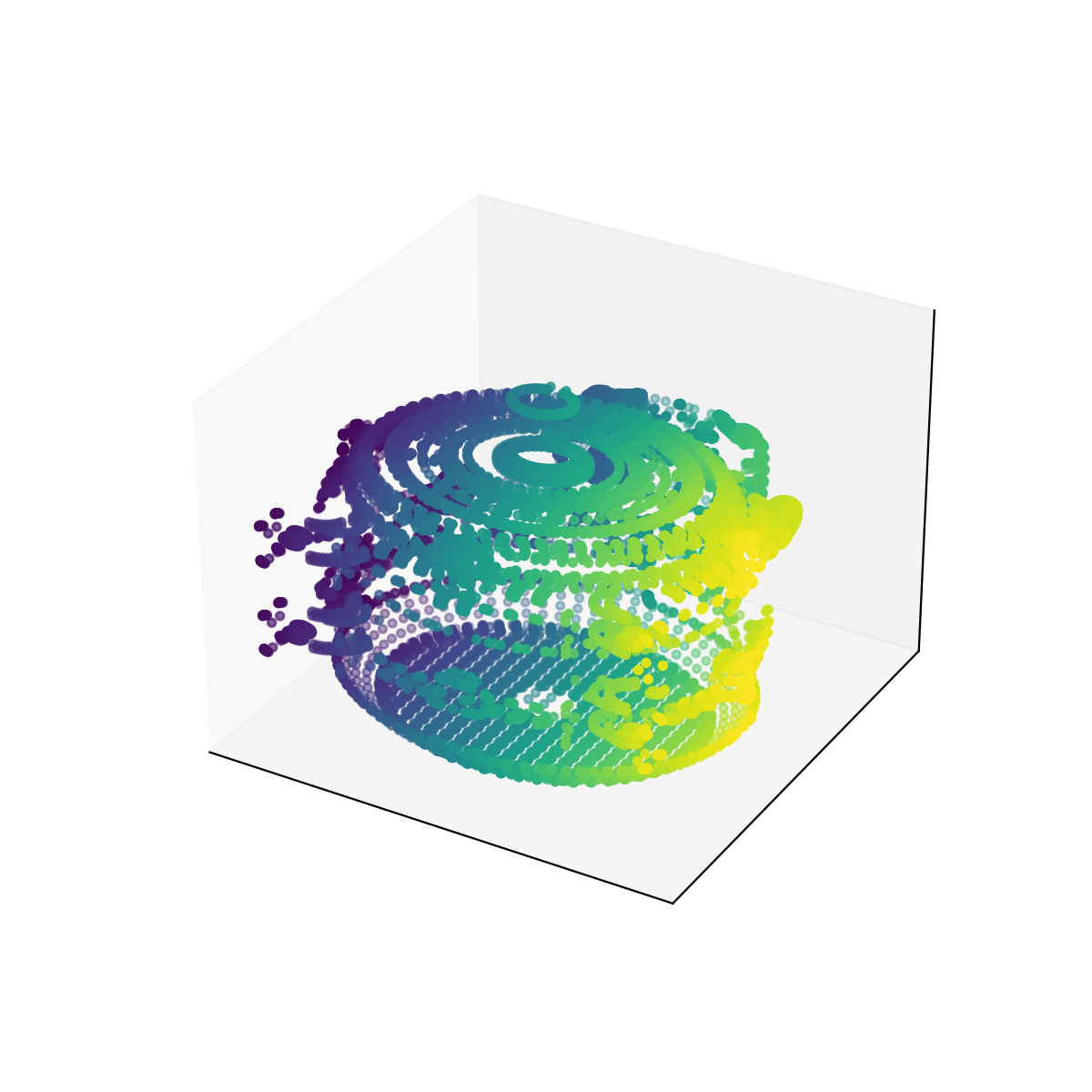}
    \end{subfigure}
    \caption{Some example point clouds from Bearing and Motor. The first row presents sets from Bearing and the second from Motor.}
    \label{fig:bearing-motor}
\end{figure}

Given the 127 point clouds, we would like to classify them into two clusters---representing Bearing and Motor, respectively---using persistent homology. Note that a direct computation of persistent homology is not feasible, as most point clouds are massive. Therefore, we approximate their persistent homology using mean persistence measures and Fr\'echet means computed from subsampled sets. For each point cloud, we subsample $B = 15$ sets, each consisting of 2\% number of points of the original data set. Then we compute the mean persistence measure and Fr\'echet mean as discussed above. In both computations, we compute the mutual $\ot_2$ distance, which is stored as distance matrix and then used as the input of UMAP for dimension reduction \citep{mcinnes2018umap-software}. The 127 point clouds are embedded into the 2D plane by UMAP. The results after dimension reduction are shown in Figure \ref{fig:dimension-reduction}, where points are colored according to their true labels. We then use DBSCAN to classify these points into two clusters \citep{ester1996density}. As shown in Figure \ref{fig:dimension-reduction}, the two clusters are close to the true labels.

\begin{figure}[htbp]
\centering
\begin{subfigure}[t]{0.23\textwidth}
        \centering
        \includegraphics[width=\textwidth]{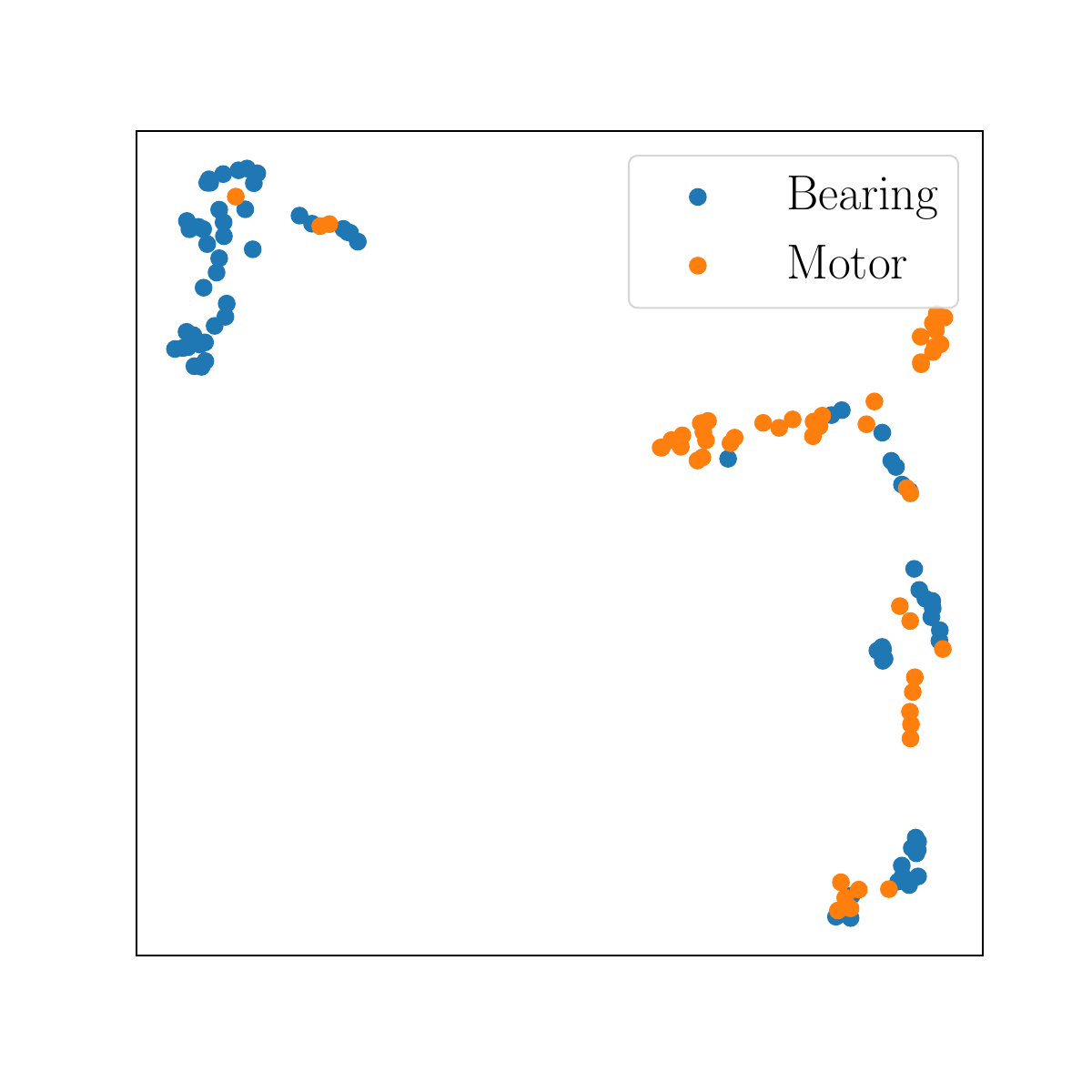}
        \caption{UMAP for mean persistence measures}
    \end{subfigure}
    \begin{subfigure}[t]{0.23\textwidth}
        \centering
        \includegraphics[width=\textwidth]{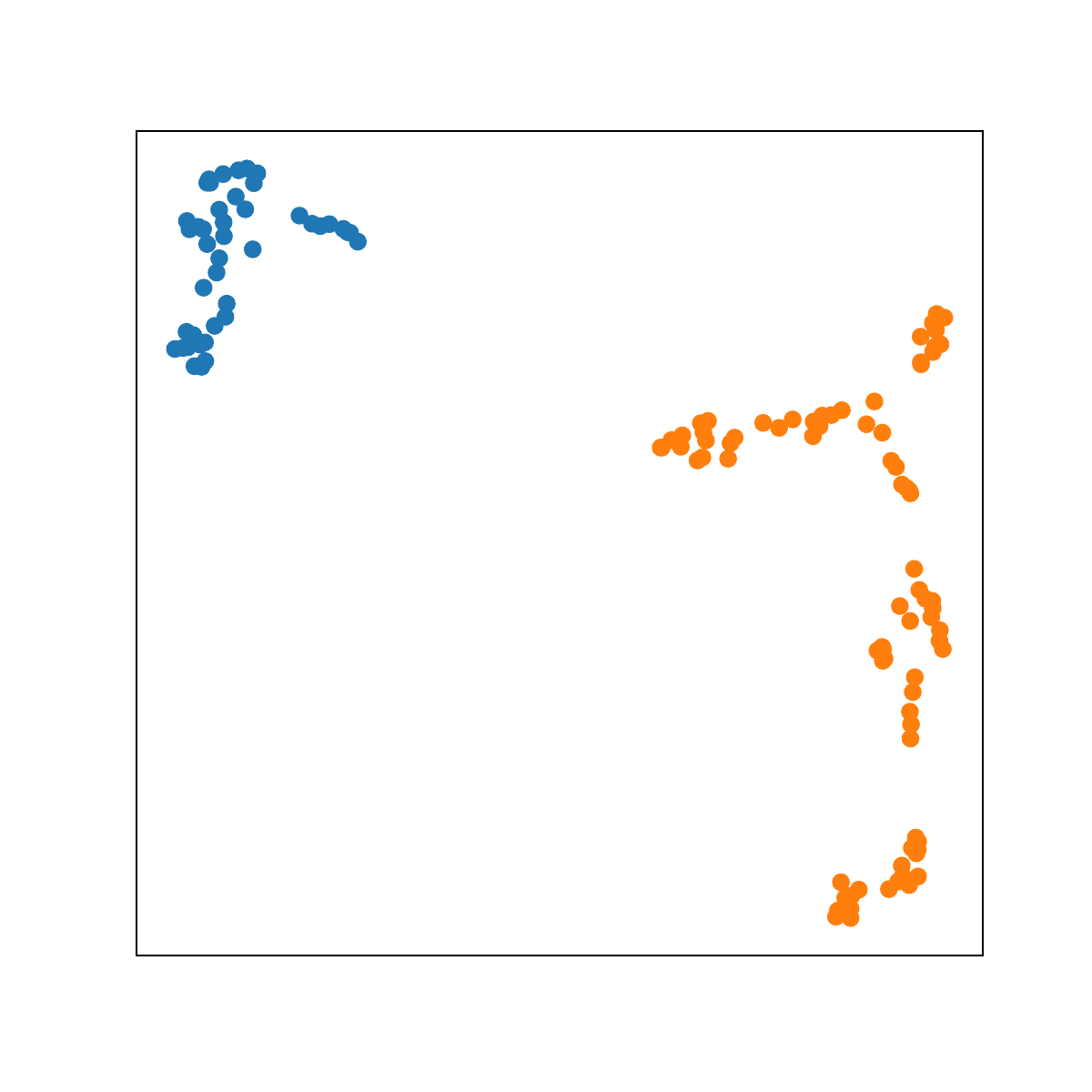}
        \caption{DBSCAN for mean persistence measures}
    \end{subfigure}
    \begin{subfigure}[t]{0.23\textwidth}
        \centering
        \includegraphics[width=\textwidth]{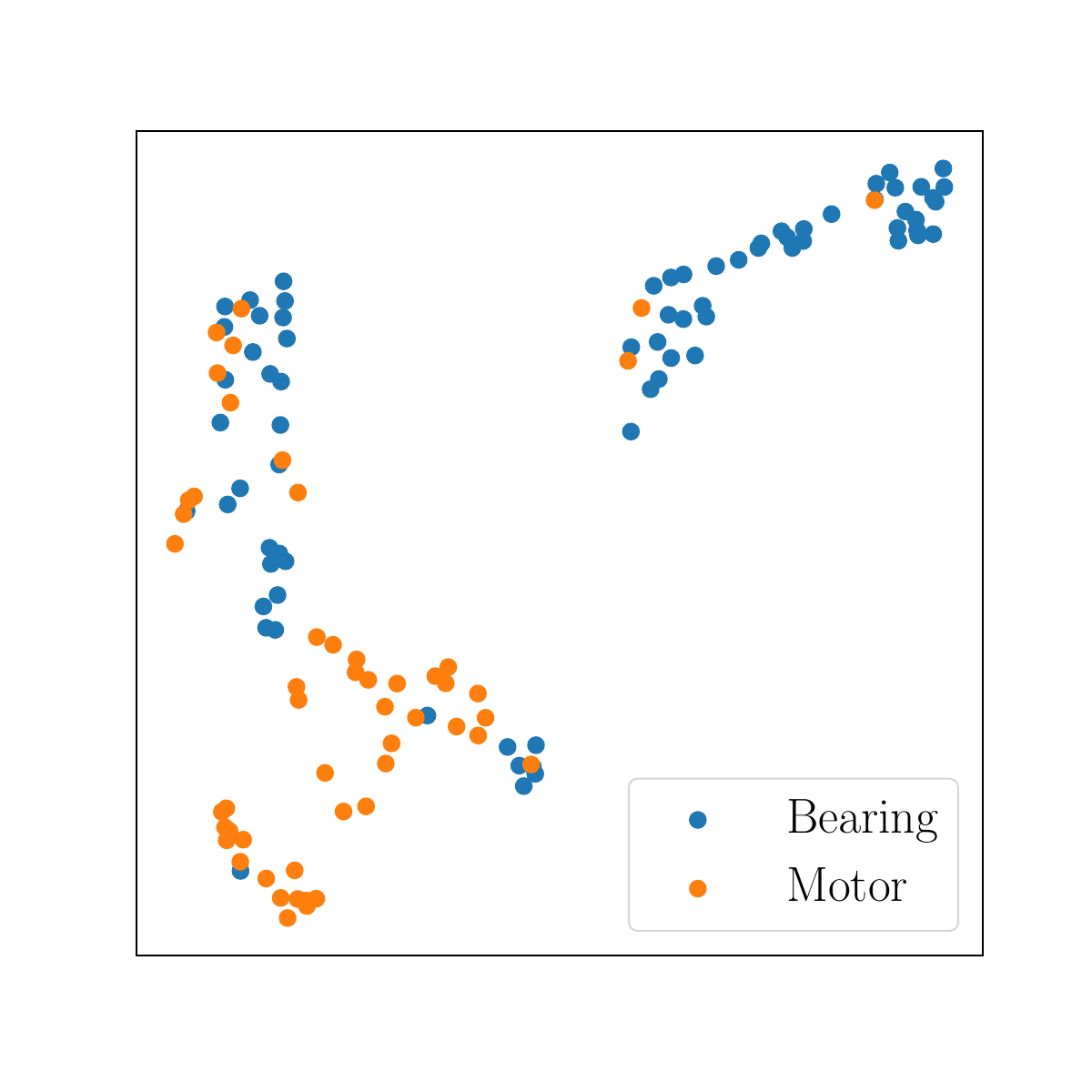}
        \caption{UMAP for Fr\'echet means}
    \end{subfigure}
    \begin{subfigure}[t]{0.23\textwidth}
        \centering
        \includegraphics[width=\textwidth]{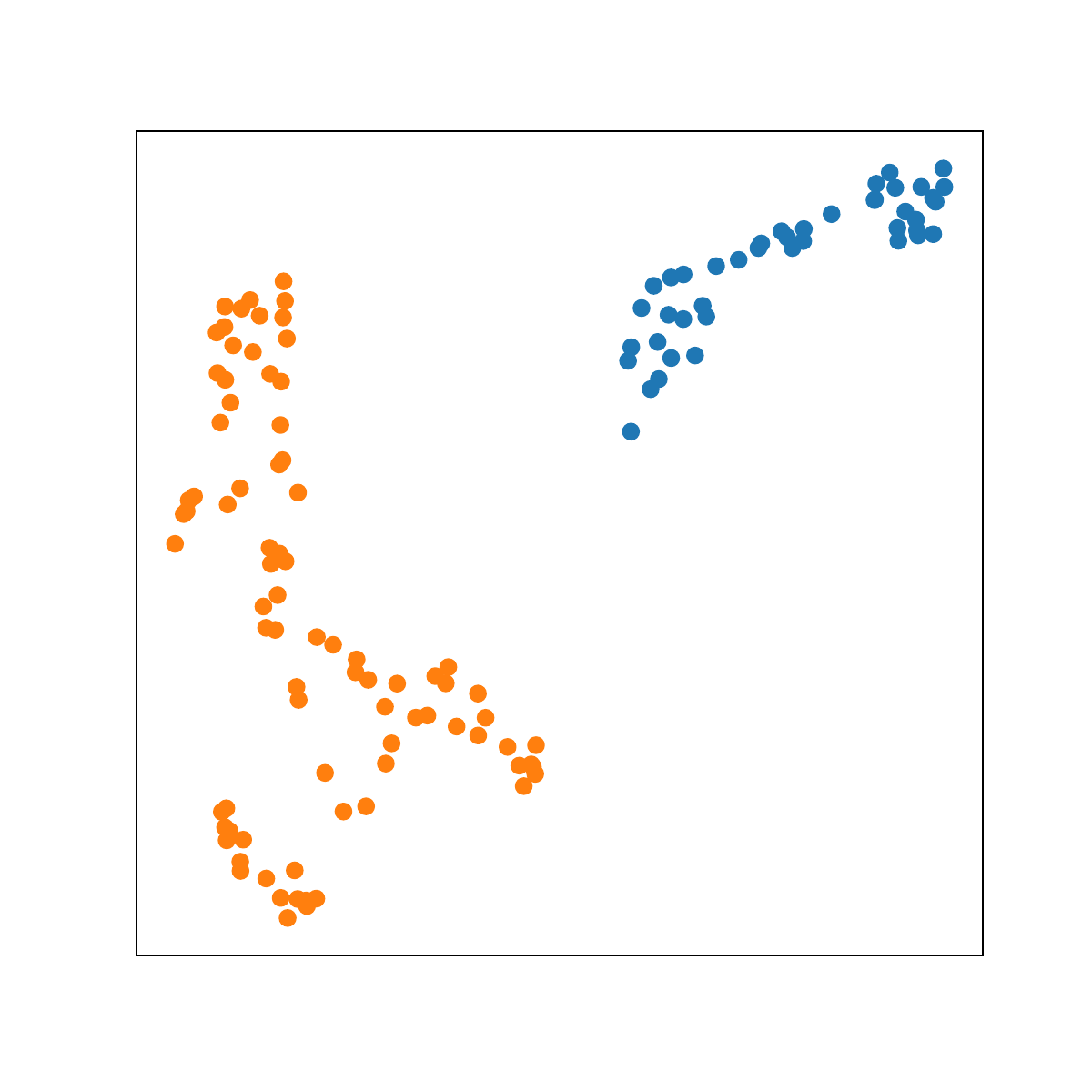}
        \caption{DBSCAN for Fr\'echet means}
    \end{subfigure}
    \caption{Dimension reduction using UMAP and clustering using DBSCAN.}
    \label{fig:dimension-reduction}
\end{figure}

\section{Data and Software Availability}

\paragraph{Data.}  The point cloud imaging datasets `Knot' and `Lock' were obtained from the shape repository Digital Shape Workbench (\url{http://visionair.ge.imati.cnr.it/}).  `Bearing' and `Motor' were obtained from the Mechanical Components Benchmark \citep{sangpil2020large}. The \texttt{Mammal} subtree of \texttt{WordNet} is obtained from \url{https://github.com/facebookresearch/poincare-embeddings}.

\paragraph{Software.} The computation of persistent homology is implemented using \texttt{GUDHI} and \texttt{Giotto-tda} \citep{gudhi:urm,tauzin2021giotto}. The computation of optimal transport is implemented using \texttt{POT} \citep{flamary2021pot}. The training for Poincar\'{e} embedding is implemented using \texttt{gensim} \citep{rehurek2011gensim}. The code for all numerical experiments in this paper can be found at \url{https://github.com/YueqiCao/PD-subsample}.

\section{Further Discussion and Directions for Future Research}

Our work inspires several future directions of research, which we now list.

\paragraph{Variance estimation for Fréchet means.} Currently there exists no convergence rate estimation of variance for Fréchet means. \cite{gouic-2019-fast} proposed a geometric condition for the fast convergence of empirical Fréchet means in Alexandrov spaces of nonnegative curvature, which may be a starting point to study the convergence and derive corresponding variance rates, for example, on $(\mathcal{D}_2,\mathrm{W}_2)$ which is known to be a nonnegative curved Alexandrov space \citep{turner-2014-frechet}.
    
\paragraph{Combining mean persistence measures and Fréchet means.} In the application real large point cloud data, we saw that mean persistence measure and Fréchet means both have merits and drawbacks, and in some sense, they compensate each other. An interesting possibility would be to construct a new representation of persistent homology that combines the advantages of both methods.
   
\paragraph{Subsampling for other data types.} In this work, we have derived theoretical results of subsampling methods under the assumption that the input data takes the form of Euclidean point clouds.  In practice subsampling methods may also be adapted to other data types. As we have seen experimentally, our proposed subsampling approach also performs well for general finite metric spaces and weighted graphs. An open direction of research is to derive similar convergence analysis results to what we have found and determine whether our results can be generalized to finite metric spaces and graphs. Additionally, from the practical perspective, it would also be worthwhile to explore other types of data such as images and explore the general applicability of subsampling and computing persistent homology in these settings.    
    
\paragraph{Applications to machine learning.} Persistent homology has been applied in many machine learning settings, including as graph neural networks \citep{zhao2020persistence}, graph classification \citep{carriere2020perslay}, and deep neural networks \citep{rieck2018neural}. An interesting direction of study would be to determine how to use subsampling methods may be used to enhance the training and construction of neural networks. 

\end{document}